\documentclass{article}

\usepackage{microtype}
\usepackage{graphicx}
\usepackage{subcaption}
\usepackage{booktabs} 

\usepackage{hyperref}

\usepackage[accepted]{icml2026}

\usepackage{amsmath}
\usepackage{amssymb}
\usepackage{mathtools}
\usepackage{amsthm}

\usepackage{mathrsfs}
\usepackage{wrapfig,lipsum}
\usepackage{amsmath,amssymb,amsfonts}
\usepackage{textcomp}
\usepackage{multirow}
\usepackage{times}
\usepackage{float}
\usepackage{soul}
\usepackage{graphicx}
\usepackage{amsmath}
\usepackage{amsthm}
\usepackage{booktabs}
\usepackage{threeparttable}
\usepackage{enumitem}
\usepackage{bm}

\newcommand{\M}{\texttt{OptFair}}

\usepackage[capitalize,noabbrev]{cleveref}

\theoremstyle{plain}
\newtheorem{theorem}{Theorem}[section]
\newtheorem{proposition}[theorem]{Proposition}
\newtheorem{lemma}[theorem]{Lemma}
\newtheorem{corollary}[theorem]{Corollary}
\newtheorem{example}[theorem]{Example}
\theoremstyle{definition}
\newtheorem{definition}[theorem]{Definition}
\newtheorem{assumption}[theorem]{Assumption}
\theoremstyle{remark}

\usepackage[textsize=tiny]{todonotes}

\begin{document}

\twocolumn[
  \icmltitle{Demystifying the Optimal Fair Classifier in Multi-Class Classification}



  \icmlsetsymbol{equal}{*}

  \begin{icmlauthorlist}
    \icmlauthor{Li Zhang}{sch1}
    \icmlauthor{Yuyuan Li}{sch2}\kern-0.35em\textsuperscript{,*}
    \icmlauthor{XiaoHua Feng}{sch1}
    \icmlauthor{Jiaming Zhang}{sch1}
    \icmlauthor{Fengyuan Yu}{sch1}
    \icmlauthor{Chaochao Chen}{sch1}
  \end{icmlauthorlist}

  \icmlaffiliation{sch1}{College of Computer Science and Technology, Zhejiang University}
  \icmlaffiliation{sch2}{School of Communication Engineering, Hangzhou Dianzi University}

  \icmlcorrespondingauthor{Yuyuan Li}{y2li@hdu.edu.cn}

  \icmlkeywords{Machine Learning, ICML}

  \vskip 0.3in
]



\printAffiliationsAndNotice{}  

\begin{abstract}
Ensuring fair and equitable treatment across diverse groups, particularly in multi-class classification tasks, poses a significant challenge due to the persistent biases inherent in machine learning models.
Most existing bias mitigation techniques are tailored to binary settings, and the presence of multi-dimensional outputs and complex fairness mechanisms makes their extension to multi-class scenarios neither straightforward nor effective.
In this paper, we investigate two fundamental, unresolved challenges in fair classification: (i) \textit{characterizing the optimal accuracy-fairness frontier in multi-class settings}, and (ii) \textit{designing practical algorithms that attain this optimum in different training phases}.
To tackle these challenges, we first specify an analytically tractable probabilistic formulation of the optimal classifier under fairness constraints.
Building upon this, we propose two attribute-blind algorithms to enforce fairness requirements in practice: an in-processing approach for fairness intervention during training via the reduction approach, and a post-processing approach for fine-tuning output probabilities with plug-in estimation. 
Theoretical analysis reveals that both methods converge to the optimal accuracy-fairness Pareto frontier. 
%
Experiments conducted on multiple datasets demonstrate the superior performance of our methods in balancing accuracy and fairness.
\end{abstract}

\section{Introduction}
In recent decades, machine learning models have become extensively integrated into decision-making applications in high-stakes domains such as healthcare~\citep{healthcare-ml-1,healthcare-ml-2,healthcare-ml-3}, financial services~\citep{finance-PE-chouldechova2017fair,financial-ml-1, du2024remasker}, and criminal justice~\citep{Fairness-in-Criminal,fairness-book, zeng2025janusvln}.
This trend has motivated a growing body of research focused on developing and assessing fairness to mitigate discrimination in models.
In this context, \textit{group fairness} metrics are frequently employed in practice due to their consistency with the intuition that predictions should not systematically discriminate against any group within the population.
%
Widely used group fairness metrics, such as demographic parity (DP;~\citet{DP-dwork2012fairness}), equal opportunity (EOP:~\citet{EO-hardt2016equality}), and Equalized Odds (EO;~\citet{EO-hardt2016equality}), vary in their definition of group-wise disparity.


Achieving high predictive performance while simultaneously enforcing group-fairness constraints poses a fundamental problem in multi-class fair learning.
Most prior studies demonstrate an inherent trade-off between fairness and accuracy, whereby improving fairness typically comes at the cost of reduced accuracy~\cite{posthoc,fair-and-optimal-postpro-pmlr-v202-xian23b,pre-processing-wasserstein-barycenter-jmlr-fair-representation, zheng2022neuronfair}.
However, despite these empirical and theoretical advances, \textit{an explicit characterization of the optimal accuracy-fairness frontier in multi-class settings remains elusive}. 
This difficulty stems from the intrinsically non-decomposable and non-differentiable nature of most group-fairness criteria~\cite{too-relax-pmlr-v119-lohaus20a,surrogate-fairness-gap-yao2024understanding}, which becomes further exacerbated by the multi-dimensional output structure~\cite{beyond-alghamdi2022beyond}.
Consequently, it is unclear whether the performance degradation arises from intrinsic algorithmic limitations or incidental artifacts, and the Pareto boundary between fairness and accuracy has not been established.
%

In practice, fairness calibration is commonly deployed either during training by modifying the model training objective (\textit{in-processing};~\citet{reduction-agarwal2018reductions,inpro-fairness-nips2024-2-yazdani2024fair}) or after training by adjusting the pre-trained model’s outputs (\textit{post-processing};~\citet{fair-and-optimal-postpro-pmlr-v202-xian23b,posthoc,Bayes-optimal-fair-functional}).
%
%
The current challenge in multi-class fairness calibration lies in \textit{the lack of a systematic and consistent framework for approximating the optimal accuracy-fairness trade-off across different training phases}.
Existing work typically develops in-processing and post-processing methods in isolation, with most approaches focusing on one paradigm exclusively.
Moreover, the in-processing methods predominantly rely on surrogate fairness metrics to design objectives or reweighting methods~\citep{fairbatch,reweighting-fair-dca,inpro-fairness-nips2024-2-yazdani2024fair}, while the inevitable surrogate gap~\citep{surrogate-fairness-gap-yao2024understanding,too-relax-pmlr-v119-lohaus20a} produces suboptimal performance and undermines convergence stability.
For multi-class post-processing, certain advanced methods focus on a single fairness criterion~\cite{fair-and-optimal-postpro-pmlr-v202-xian23b,fair-guarantees-demographic-denis2024fairness}, while others lack a precise characterization of the optimal fair classifier, leaving performance gains unexplored~\cite{Fair-Risk-Control,frappe-baseline}.
%
Hence, it remains an open problem to design algorithms that are applicable to multiple fairness criteria and controllable in
different training phases.




To address these challenges, we theoretically investigate the Pareto frontier in multi-class fair classification and propose a novel group-fairness calibration framework named~\M, under which we separately develop an in-processing method and a post-processing method to approximate the optimal accuracy-fairness trade-off.
%
%
We specify the Bayes-optimal classifier under group-fairness constraints in the multi-class setting and introduce an entropic regularizer to render it analytically tractable.
%
Building on this characterization, we develop optimization strategies that achieve the optimal accuracy-fairness trade-off for any specified level of fairness: the in-processing task is reduced to a sequence of cost-sensitive classification problems, while the post-processing problem is reformulated as a convex optimization problem via a plug-in estimator.
We further analyze the generalization risk and show that both methods are statistically consistent with the optimal accuracy-fairness equilibrium.
Experiments on multiple real-world datasets demonstrate that \M~offers a more controllable accuracy-fairness balance and competitive classification performance compared to existing methods.

Our main contributions are summarized as follows:
\begin{itemize} 
\setlength{\itemsep}{1pt}
    \vspace{-7pt}
    \item We present an analytically tractable formulation of the optimal classifier that achieves the optimal accuracy-fairness frontier in the multi-class setting.
    \item In-processing and post-processing algorithms are developed to approximate optimal fair classifiers, using a reduction-based method for in-processing and plug-in estimation for post-processing.
    \item Theoretically, generalization bounds are provided to demonstrate that both methods are statistically consistent with the optimal fair classifier.
    \item Extensive experiments on various datasets demonstrate that \M~surpasses existing methods in balancing accuracy and fairness, with the flexibility to adjust the accuracy-fairness trade-off.
\end{itemize}

\section{Related Work}

\textbf{Multi-class group-fairness.} 
Multi-class fairness extends classical notions such as demographic parity and equalized odds by imposing distributional constraints on vector-valued prediction outcomes across sensitive groups~\cite{beyond-alghamdi2022beyond,fair-and-optimal-postpro-pmlr-v202-xian23b,fair-guarantees-demographic-denis2024fairness,reweighting-fair-dca}.
Unlike the binary case, where fairness constraints reduce to scalar rate matching, the multi-class setting induces coupled constraints over the probability simplex, resulting in more complex feasibility regions.

\textbf{Accuracy-fairness trade-off.}
A growing body of work has investigated the trade-off between predictive utility and fairness through constrained risk minimization and Pareto-optimal frontiers, particularly in the binary classification setting~\cite{cost-of-fairness-binary,bayes-2-zeng2024bayes,posthoc}.
However, theoretical understanding of the multi-class accuracy-fairness trade-off remains limited, with many existing characterizations deriving optimal fair classifiers only under restrictive structural assumptions~\cite{fair-guarantees-demographic-denis2024fairness,unified-postprocessing-framework-group-xian2024}.
Consequently, general multi-class settings still lack analytically tractable characterizations of Bayes-optimal fair classifiers, leaving the induced optimal accuracy-fairness frontier under-specified.

\textbf{In-processing.} Training-time fairness interventions typically enforce constraints through regularization~\cite{inpro-pmlr-v162-kim22b,fferm,fair-kernel}, dual or bilevel optimization~\cite{fairbatch,inpro-fairness-nips2024-2-yazdani2024fair}, adversarial objectives~\cite{F-divergence-baseline,AdvDeBias}, or reduction-based formulations~\cite{reduction-agarwal2018reductions}. 
While effective in practice, extending these methods to multi-class tasks is nontrivial due to coupled, multi-dimensional constraints, and existing theories seldom imply convergence to the accuracy-fairness frontier.

\textbf{Post-processing.} Post-processing approaches aim to correct model outputs without retraining, e.g., by thresholding scores~\cite{posthoc,bayes-2-zeng2024bayes}, mapping predictive probabilities~\cite{frappe-baseline,beyond-alghamdi2022beyond}, or calibrating group-conditioned distribution~\cite{fair-and-optimal-postpro-pmlr-v202-xian23b,unified-postprocessing-framework-group-xian2024,fair-guarantees-demographic-denis2024fairness}. Despite their computational appeal, many existing post-processing techniques are either designed for binary settings or depend on attribute-aware adjustments, making them less suitable when sensitive attributes are unavailable at inference time. 

%
%

\section{Preliminaries}
\begin{table*}[t]
\centering
\caption{Recovering Commonly Used Fairness Constraints from Group-Specific Confusion Matrices.}
\label{fairness-metric-table-1}
\resizebox{\linewidth}{!}{
\begin{tabular}{lll}
\toprule
\text { Fairness Criteria } & \text { Fairness Constraints} & \text { Confusion Matrices Notation } \\
\midrule
 \text { Demographic parity } 
 &
$\begin{array}{l}
\left| \mathbb{P}(\widehat{Y} = y \mid A=a) - \mathbb{P}(\widehat{Y}=y) \right| \leq \xi \\
\forall a \in \mathcal{A}, \forall y \in \mathcal{Y}
\end{array}$
& 
$
{\displaystyle \bigg| \sum_{a' \in \mathcal{A}} \sum_{i \in \mathcal{Y}}[\omega_{a'}  - \mathbb{I}_{a'=a} ] \mathbf{C}_{i,y}^{a'} \bigg| \leq \xi}
$
\\
\hline \text { Equal Opportunity  } 
&
$\begin{array}{l}
\left| \mathbb{P}(\widehat{Y} = y \mid A=a, Y=y) - \mathbb{P}(\widehat{Y}=y\mid Y=y) \right| \leq \xi \\
\forall a \in \mathcal{A}, \forall y \in \mathcal{Y}
\end{array}$
& 
$
{\displaystyle \bigg|\sum_{a'\in \mathcal{A}} \omega_{a'}[\frac{1}{\mathbb{P}( Y=y )}  -  \frac{\mathbb{I}_{a'=a}} {\mathbb{P}( Y=y, A=a) }]\mathbf{C}_{y,y}^{a'} \bigg| \leq \xi}
$
\\
\hline \text { Equal Odds } 
& 
$\begin{array}{l}
\left| \mathbb{P}(\widehat{Y} = y \mid A=a, Y=y') - \mathbb{P}(\widehat{Y}=y\mid Y=y') \right| \leq \xi \\
\forall a \in \mathcal{A}, \forall y,y' \in \mathcal{Y}
\end{array}$
& 
$
{\displaystyle \bigg|\sum_{a'\in \mathcal{A}} \omega_{a'}[\frac{1}{\mathbb{P}( Y=y' )}  -  \frac{\mathbb{I}_{a'=a}} {\mathbb{P}( Y=y', A=a) }]\mathbf{C}_{y',y}^{a'}\bigg|\leq \xi}
$
\\
\bottomrule
\end{tabular}
}
\end{table*}
\textbf{Notation.} 
%
The $p$-norm of the vector $\mathbf{a}$ is denoted as $\| \mathbf{a} \|_p$, while $\mathbf{1}_{m\times m}$ represents the $m\times m$ all-ones matrix.
Write the probability simplex as $\Delta_m = \{\,\mathbf{q}\in[0,1]^m : \|\mathbf{q}\|_1 = 1\}$, and let $\mathbf{e}_i$ be the $i$-th standard basis vector. 
$[\cdot]^\top$ represents the transpose of the vector or matrix.
$\mathbb{I}_{\{\cdot\}}$ is the indicator function, which equals 1 if the event occurs and 0 otherwise. 
%
%
For two equally sized matrices $\mathbf{A}$ and $\mathbf{B}$, their Frobenius inner product is $\langle \mathbf{A}, \mathbf{B} \rangle = \sum_{i,j} a_{ij}\,b_{ij}$.
For positive integer $n$, $[n]:=\{1,\dots,n\}$.
\textbf{Multi-class fair classification.}
Let $\mathcal{P}\sim(X, A, Y )$ be a random tuple, where $X \in \mathcal{X}$ for some feature space $\mathcal{X} \subseteq \mathbb{R}^d$, labels $Y \in \mathcal{Y}=[m]$, and the discrete sensitive attribute $A \in \mathcal{A}$. 
Given the attribute-blind randomized classifiers ${h} \in \mathcal{H}: \mathcal{X}  \to \Delta_m$, the prediction $\widehat{Y}$ is associated with the random outputs of ${h}$ defined by $\mathbb{P}(\widehat{Y}=y \mid \mathbf{x})={h}_y(\mathbf{x})$. 
A randomized function is said to be Bayes-optimal, denoted by $\eta$, if it computes the true class probabilities exactly.
Specifically, for $y \in [m]$, we denote the ground-truth conditional probability by
\begin{align*}
    \eta_y(x)&:=\mathbb{P}(Y=y \mid X=x),\\
    \eta_y(x,a)&:=\mathbb{P}(Y=y \mid X=x,A=a).
\end{align*}
Denote by $\mu$ the marginal distribution of the data $X$, 
and by $\omega_a := \mathbb{P}(A = a) >0$ the group weight.
\textbf{Group-fairness constraints.} 
Let $A$ represent a group attribute (e.g. race and/or gender), where $A \in \mathcal{A}$. 
In this work, we generally focus on three commonly used group-fairness criteria, Demographic Parity (DP;~\citet{DP-dwork2012fairness}), Equalized Opportunity (EOP;~\citet{EO-hardt2016equality}) and Equalized Odds (EO;~\citet{EO-hardt2016equality}).
Table~\ref{fairness-metric-table-1} provides their definitions in the multi-class setting with multiple sensitive attributes, following prior work~\citep{beyond-alghamdi2022beyond,fair-guarantees-demographic-denis2024fairness,fair-and-optimal-postpro-pmlr-v202-xian23b,unified-postprocessing-framework-group-xian2024}.

\textbf{Confusion matrices.} As fundamental analytical tools, confusion matrices encapsulate the information needed to derive diverse performance metrics and to assess group-fairness constraints in classification tasks~\citep{overlap,coonfusion-jmlr-complex-constraints,FACT-confusion, xu2026bridging}.
Denote the population confusion matrix by $\mathbf{C} \in [0, 1]^{m\times m}$, with elements defined for $i, j \in[m]$ as $\mathbf{C}_{i,j}=\mathbb{P}(Y=i, \widehat{Y}=j)$.
To represent group-fairness constraints, we focus on the group-specific confusion matrices $\mathbf{C}^a, a \in \mathcal{A}$, where $\mathbf{C}_{i,j}^a :=\mathbb{P}(Y=i, \widehat{Y}=j \mid A=a)$.

\textbf{Performance and Fairness metrics.} 
For performance metrics, we consider a risk metric expressed as a linear function of the population confusion matrix, namely $\mathcal{R}(h)=\langle \mathbf R, \mathbf{C}(h) \rangle=\sum_{a\in\mathcal{A}} \omega_{a} \langle \mathbf R, \mathbf{C}^{a}(h) \rangle$.
%
%
In this paper, we primarily focus on standard classification error rate $\mathbb{P}(\widehat{Y} \neq Y)$ to set $\mathbf{R} = \mathbf{1}_{m\times m}-\mathbf{I}$.
For fairness metrics, as presented in Table~\ref{fairness-metric-table-1}, the fairness constraints can typically be expressed by the following formulation:
\begin{align*}
    |\mathscr{D}_k(h)|=\left|\sum_{a\in \mathcal{A}} \langle \mathbf D^{a,k}, \mathbf C^{a}(h) \rangle \right|\leq \xi, k \in [K],
\end{align*}
where $K$ represents the number of constraints required to implement the fairness criteria.
For proofs, additional fairness criteria, and further discussions, see Appendix~\ref{fairness-notations}.
%

\section{Optimal Fairness-Constrained Classifier}
\label{section-bayes-optimal-fair-classifier}
This section develops a parametric characterization of the Bayes-optimal fair classifier under mild assumptions. 
The proofs in this section can be found in the Appendix~\ref{proofs-for-section-bayes-optimal-classifier}.

%
We study optimal classifiers by formulating accuracy maximization under fairness constraints.
Given a specified fairness level $\xi>0$, the primal problem is defined as:
\begin{equation}
    \label{fair-problem-formulation}
    \begin{split}
    \min_{h \in \mathcal{H}} \ \mathcal{R}(h)&=\sum_{a\in \mathcal{A}} \omega_a\langle \mathbf{R}, \mathbf{C}^a(h) \rangle,\\
    \textrm{s.t.} \ \left|\mathscr{D}_k(h) \right|&=\left|\sum_{a \in \mathcal{A}}\langle\mathbf{D}^{a,k}, \mathbf{C}^a(h) \rangle \right| \leq \xi, \ k \in [K].
    \end{split}
\end{equation}
To tackle this problem, prior studies typically impose the continuity assumption on the output distribution~\citep{posthoc,fair-guarantees-demographic-denis2024fairness,unified-postprocessing-framework-group-xian2024}, which may fail in the presence of discrete distributions. 
In contrast, we adopt an assumption that entails negligible cost.
\begin{assumption}
        \label{assumpt-1}
    {\rm(Feasibility).} There exists a classifier $h$ that is feasible for~\eqref{fair-problem-formulation} when $\xi =0$.
\end{assumption}
This assumption is primarily imposed to guarantee the existence of the solution, ensuring that the feasible set is nonempty for any $\xi > 0$.
The fairness criteria in Table~\ref{fairness-metric-table-1} obviously meet this assumption since there exist naive classifiers (e.g., $h(x) =  \mathbf{e}_y, y \in [m], \forall x \in \mathcal{X}$) that satisfy the fairness constraint when $\xi = 0$.
This assumption also applies to other fairness criteria, as discussed in the Appendix~\ref{Discussion-on-Assumption}.
The next theorem presents a formulation of the Bayes-optimal fair classifier.
\begin{theorem}
        \label{theorem-bayes-optimal}
 Under the assumption~\ref{assumpt-1}, for any $\xi > 0$, there exists an optimal solution to the problem~\eqref{fair-problem-formulation}, which can be realized through the following form,
\begin{align}
    \label{lagrangian-function-inner-solution}
    &h^*(x) \in \mathrm{conv} \left\{ e_y: y \in \underset{j \in [m]}{\arg \max} \ \beta^{\lambda^*}_j (x)  \right\},\\ 
    \label{beta-def}
    &\beta^{\lambda} (x)= \sum_{a\in \mathcal{A}} p_a(x) [{\mathbf {M}}(a,\lambda)]^\top \eta(x,a),\\
    &\mathbf M(a,\lambda) := \mathbf{I}- \frac{1}{\omega_a}\sum_{k=1}^{K}\lambda_k\mathbf{D}^{a,k}
\end{align}
where $p_a(x):=\mathbb{P}(A=a|X=x)$, the dual parameter $\lambda \in \mathbb{R}^K$, and $\mathrm{conv}\{ \cdot\}$ denotes the convex hull. 
Denoting $\Lambda:= \{ \lambda\in \mathbb{R}^K: \| \lambda \|_1 \leq B_\Lambda\}$, the parameter $\lambda^*$ is the solution of the dual problem:
\begin{align}
    \label{lagrangian-function-dual-problem}
    \min_{\lambda \in \Lambda} H(\lambda)=\mathbb{E}_X \left[ \max_{j \in [m]} \beta^{\lambda}_j(X) \right] + \xi \|\lambda \|_1.
\end{align}
\end{theorem}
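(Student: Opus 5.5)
The plan is to view \eqref{fair-problem-formulation} as a linear program over the convex set of attribute-blind randomized classifiers, and then apply Lagrangian duality.

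\textbf{Linearization.} First we express every group confusion matrix as a linear functional of $h$. Since $\widehat Y$ is independent of $(A,Y)$ given $X$,
\[
\mathbf C^a_{i,j}(h)=\frac{1}{\omega_a}\mathbb E_X\!\left[p_a(X)\eta_i(X,a)h_j(X)\right].
\]
Consequently both the risk $\mathcal R(h)$ and each disparity $\mathscr D_k(h)$ are linear in $h$. The feasible set $\{h:|\mathscr D_k(h)|\le\xi,\ k\in[K]\}$ is convex, and by Assumption~\ref{assumpt-1} it is nonempty. It also has a Slater point: the classifier feasible at $\xi=0$ satisfies every constraint strictly when $\xi>0$.

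\textbf{Duality.} We write each constraint $|\mathscr D_k|\le\xi$ as the pair $\pm\mathscr D_k\le\xi$ and introduce multipliers $\lambda_k^\pm\ge0$. Setting $\lambda=\lambda^+-\lambda^-$, the penalty becomes $\sum_k\lambda_k\mathscr D_k(h)-\xi\|\lambda\|_1$ after minimizing over the split; I'll fix signs so that the result matches $\mathbf M(a,\lambda)$. The Lagrangian then equals
\[
\mathbb E_X\Big[\sum_a p_a(X)\,\eta(X,a)^\top\big(\mathbf R+\tfrac{1}{\omega_a}\textstyle\sum_k\lambda_k\mathbf D^{a,k}\big)h(X)\Big]-\xi\|\lambda\|_1 .
\]
With $\mathbf R=\mathbf 1-\mathbf I$ we have $\eta^\top\mathbf 1 h=1$, so minimizing over $h$ is the same as maximizing $\mathbb E_X[\beta^\lambda(X)^\top h(X)]$ pointwise, with $\beta^\lambda$ as in \eqref{beta-def}. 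The pointwise maximum over $\Delta_m$ is $\max_j\beta^\lambda_j(x)$, and it is attained exactly on $\mathrm{conv}\{e_y:y\in\arg\max_j\beta^\lambda_j(x)\}$. The dual function is therefore $1-H(\lambda)$. Maximizing the dual is equivalent to minimizing $H$, which is convex as an expectation of a max of affine functions plus a norm.

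\textbf{Compactness, attainment, and recovering $h^*$.} Slater's condition gives strong duality and bounded dual optimizers. Concretely, if $\tilde h$ is the $\xi=0$-feasible classifier, then
\[
\xi\|\lambda^*\|_1\le \mathcal R(\tilde h)-\mathrm{OPT}\le1 .
\]
So we may take $B_\Lambda=1/\xi$, and restricting to $\Lambda$ changes nothing. $H$ is continuous on the compact set $\Lambda$, so a minimizer $\lambda^*$ exists. The primal optimum is also attained: the set of randomized classifiers is weak-* compact as an $L^\infty(\mu;\Delta_m)$ ball, and the objective and constraints are weak-* continuous because they are integrals against $L^1$ functions.

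To produce $h^*$, we invoke the saddle-point property. Any primal optimum $h^*$ minimizes $L(\cdot,\lambda^*)$, so $h^*(x)$ lies in the argmax convex hull $\mu$-a.e. This yields \eqref{lagrangian-function-inner-solution}; the value on a null set can be chosen freely.

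\textbf{Main obstacle.} The hard step is ensuring that some selection inside the argmax hull is actually feasible and optimal. On ties, different selections give different constraint values, and a minimizer of $L(\cdot,\lambda^*)$ need not be primal feasible. I'll resolve this by not constructing $h^*$ from $\lambda^*$ directly. Instead I take an existing primal optimum, which exists by compactness, and use complementary slackness together with strong duality to show it must lie in the hull. A measurable-selection remark handles the fact that the tie sets vary with $x$.
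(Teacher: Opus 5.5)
Your argument is correct, but it runs in the opposite direction from the paper's.

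\textbf{The paper's route.} It never invokes Slater's condition or any topology on $\mathcal H$. It even claims that standard duality theorems are unavailable because $\mathcal H$ is not a vector space. Instead it combines the interchange rule for subdifferentials of expectations with the max-rule to show $\partial H(\lambda)=\{-\mathscr D(h_\lambda)\}+\xi\,\partial\|\lambda\|_1$, where $h_\lambda$ ranges over selections of the argmax hull. From $0\in\partial H(\lambda^*)$ it reads off one selection $h^*_{\lambda^*}$ that is feasible and satisfies complementary slackness. Weak duality then gives optimality, so existence of a primal optimum comes out as a by-product of dual attainment.

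\textbf{Your route.} You get strong duality and dual attainment from Slater, primal attainment from Banach--Alaoglu, and then use complementary slackness to push an arbitrary primal optimum into the argmax hull. The Slater step is legitimate. Only finitely many linear constraints are imposed on a convex subset of $L^\infty(\mu;\mathbb R^m)$; equivalently, you can work with the convex image $\{(\mathcal R(h),\mathscr D(h)):h\in\mathcal H\}\subset\mathbb R^{1+K}$. So the paper's worry is unfounded.

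\textbf{What each buys.} Your approach gives an explicit radius $B_\Lambda\ge 1/\xi$, whereas the paper only asserts $\ell_1$-boundedness. It also gives a stronger conclusion: every primal optimum lies a.e.\ in the argmax hull of every dual optimum. And it needs no subdifferential interchange lemma. The paper's approach avoids the weak-$*$ compactness step. It also produces the identity between subgradients of $H$ and $-\mathscr D(h_\lambda)$, which the paper reuses for Theorem~\ref{theorem-bayes-optimal-entropy-relaxed} and for the proximal-gradient steps of Algorithm~\ref{algorithm-post-processing}.

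\textbf{Two small corrections.} First, $\mathcal H$ is a weak-$*$ closed convex subset of the unit ball of $L^\infty(\mu;\mathbb R^m)$, not a ball itself. Second, measurable selection is needed only to attain the inner minimum defining $1-H(\lambda)$ (e.g.\ take the smallest argmax index). It is not needed for the primal optimum, which is measurable from the start.
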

\textbf{Proof Sketch.} We begin by deriving the formulation of the dual problem. Considering the Lagrangian $\mathcal{L}(h,\lambda^{(1)},\lambda^{(2)})$,
\begin{align}
\label{lagrangian-function}
\mathcal{R}(h)+(\lambda^{(1)}-\lambda^{(2)})^\top \mathscr{D}(h) - \xi\|\lambda^{(1)}+\lambda^{(2)}\|_1,
\end{align}
where $\lambda^{(1)}, \lambda^{(2)} \in \mathbb{R}^{K}_{\geq0}$ are the dual parameters, and $\mathscr{D}(h):=\{\mathscr{D}_k(h)\}_{k \in [K]}$.
%
%
Denoting $\lambda:= \lambda^{(1)} - \lambda^{(2)}$, the dual problem can be proven to be equivalent to 
\begin{align}
\label{lagrangian-function-simplify}
\max_{\lambda\in \mathbb{R}^{K}} \min_{h \in \mathcal{H}} \mathcal{L}(h,\lambda)=\mathcal{R}(h)+ \lambda^\top \mathscr{D}(h)- \xi\|\lambda\|_1.
\end{align} 
We derive that the inner minimization has a solution formulated in Eq.~\eqref{lagrangian-function-inner-solution}, with which the dual optimization objective is deduced as Eq.~\eqref{lagrangian-function-dual-problem}. 
Then, to verify that the primal-dual gap is zero, we further prove the boundedness, feasibility, and optimality of the solution presented above. \hfill \ensuremath{\square}

%
Although a formal characterization of the optimal fair classifier is given in Theorem~\ref{theorem-bayes-optimal}, learning this classifier in finite-sample settings is challenging due to its analytical intractability.
%
%
%
%
%
To remedy this, we introduce a relaxed version of the primal problem.
%
Motivated by the entropic optimal transport~\citep{entropy-ot-distance,dual-entropic-regular-ot}, we incorporate a tailored entropic regularizer into primal objective.
For $h \in \mathcal{H}:\mathcal{X} \to \Delta_m$, the differential entropy of the probabilistic classifier $h$ is given by 
\begin{align}
    \mathcal{E}(h):=-\mathbb{E}_X \left[\sum_{i=1}^m h_i(X)\log h_i(X)\right].
\end{align}
Since $h_i(x)\log h_i(x) \to 0$ as $h_i(x) \to 0$ for $i \in [m]$, we define $h_i(x)\log h_i(x) := 0$ when $h_i(x)=0$, thereby ensuring that the differential entropy is well-defined on $\Delta_m$.
The fair classification with entropic regularization is formulated as 
\begin{align}
\label{relaxed-fair-problem-formulation-entropy}
    \min_{h\in \mathcal{H}} \mathcal{R}(h) - \tau \mathcal{E}(h), \quad \mathrm{s.t.} \ |\mathscr{D}_k(h)| \leq \xi ,\ k \in [K].
\end{align}
In this case, $\tau \geq 0$ is the hyperparameter that controls the impact of the entropy regularization term.
The regularized problem has the following closed-form solution.
\begin{theorem}
        \label{theorem-bayes-optimal-entropy-relaxed}
        Under the assumption~\ref{assumpt-1}, for any $\xi > 0$, there exists an optimal solution defined by~\eqref{relaxed-fair-problem-formulation-entropy}, and its closed-form expression is given as follows,
\begin{align} \label{entropic-fair-randomized-classifier}
    h^{\lambda^*}_i(x)=\frac{\exp \left(\beta^{\lambda^*}_i(x) / \tau\right)}{\sum_{j=1}^m \exp \left(\beta^{\lambda^*}_j(x) / \tau\right)}, \quad i\in [m].
\end{align}
where $\beta^{\lambda}_i(x)$ denotes the $i$-th element of $\beta^{\lambda}(x)$ defined in~\eqref{beta-def}, and the parameter $\lambda^* \in \mathbb{R}^K$ is the solution to the dual problem:
\begin{align*}
    \min_{\lambda \in \Lambda} H(\lambda)= \mathbb{E}_X\left[\tau \log \sum_{j=1}^m \exp \left(\beta^{\lambda}_j(X) / \tau\right)\right] + \xi \|\lambda \|_1.
\end{align*}
\end{theorem}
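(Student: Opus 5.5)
We will follow the same Lagrangian route as in the proof of Theorem~\ref{theorem-bayes-optimal}, now applied to the regularized objective $\mathcal{R}(h)-\tau\mathcal{E}(h)$. The key structural fact is that both $\mathcal{R}(h)$ and every $\mathscr{D}_k(h)$ are linear in $h$. Writing $\mathbf{C}^a_{i,j}(h)=\frac{1}{\omega_a}\mathbb{E}_X[p_a(X)\eta_i(X,a)h_j(X)]$ gives
\begin{align*}
\mathcal{R}(h)+\lambda^\top\mathscr{D}(h)=\mathbb{E}_X\Big[\sum_{a\in\mathcal{A}} p_a(X)\,\eta(X,a)^\top\Big(\mathbf{R}+\tfrac{1}{\omega_a}\textstyle\sum_k\lambda_k\mathbf{D}^{a,k}\Big)h(X)\Big].
\end{align*}
For $\mathbf{R}=\mathbf{1}_{m\times m}-\mathbf{I}$ we have $\eta^\top\mathbf{1}h=1$ for $h\in\Delta_m$. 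Hence the bracket equals $1-\beta^{\lambda}(X)^\top h(X)$, with $\beta^\lambda$ as in~\eqref{beta-def}.

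The regularized Lagrangian is therefore
\begin{align*}
\mathcal{L}_\tau(h,\lambda)=1+\mathbb{E}_X\big[-\beta^\lambda(X)^\top h(X)+\tau\textstyle\sum_i h_i(X)\log h_i(X)\big]-\xi\|\lambda\|_1,
\end{align*}
after the same reduction $\lambda=\lambda^{(1)}-\lambda^{(2)}$ used in~\eqref{lagrangian-function-simplify}. The inner minimization over $h$ decouples pointwise in $x$. For each $x$ we minimize $-\beta^\top q+\tau\sum_i q_i\log q_i$ over $q\in\Delta_m$. This is the classical Gibbs variational principle: the unique minimizer is the softmax $q_i\propto\exp(\beta_i/\tau)$, and the minimum value is $-\tau\log\sum_j\exp(\beta_j/\tau)$. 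Measurability of $x\mapsto h^\lambda(x)$ is immediate from the closed form. The dual function is then $1-H(\lambda)$, so maximizing the dual is the stated minimization of $H$.

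Next comes existence of a dual minimizer and zero duality gap. $H$ is convex in $\lambda$, since $\beta^\lambda$ is affine in $\lambda$ and log-sum-exp is convex. It is also continuous, so it attains its minimum on the compact set $\Lambda$. The primal problem is convex: the objective is linear plus convex negative entropy, and the constraints are linear in $h$. Assumption~\ref{assumpt-1} with $\xi>0$ supplies a strictly feasible point, namely a classifier with $\mathscr{D}(h)=0$, so $|\mathscr{D}_k|<\xi$. Slater's condition then gives strong duality and existence of a dual optimum $\lambda^*$.

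We also need a bound $B_\Lambda$ so that restricting to $\Lambda$ is harmless. For this we plan to reuse the boundedness argument from Theorem~\ref{theorem-bayes-optimal}. The entropy term is bounded by $\tau\log m$, and the slack $\xi$ together with a strictly feasible point bounds $\|\lambda^*\|_1$ by roughly $(1+\tau\log m)/\xi$. It then remains to show that $h^{\lambda^*}$ is primal feasible and optimal. Since $H$ is differentiable in $\lambda$, with derivative involving $\mathscr{D}(h^\lambda)$ by Danskin's theorem, first-order optimality gives $-\mathscr{D}_k(h^{\lambda^*})+\xi\,\partial|\lambda_k|\ni0$. This yields $|\mathscr{D}_k(h^{\lambda^*})|\le\xi$ with complementary slackness, and hence equality of the primal and dual values.

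The main obstacle is this last step. We must justify differentiating under the expectation, which requires dominated convergence using bounded $\beta$. We must also handle $\lambda^*$ on the boundary of $\Lambda$, where the $\ell_1$ constraint could interfere with the KKT conditions. We plan to choose $B_\Lambda$ strictly larger than the a priori bound, so that $\lambda^*$ lies in the interior of $\Lambda$ and the unconstrained optimality conditions apply.
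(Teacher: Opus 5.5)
Your proposal is correct and follows essentially the same route as the paper. Both use the Lagrangian reduction to $\lambda=\lambda^{(1)}-\lambda^{(2)}$, the pointwise Gibbs/softmax solution of the inner problem, and boundedness of the dual minimizer, and in both the first-order condition $0\in-\mathscr{D}(h^{\lambda^*})+\xi\,\partial\|\lambda^*\|_1$ yields feasibility and complementary slackness, which closes the duality gap. Your Slater step is valid (finitely many convex constraints over a convex set, even though the paper declines to use it) but redundant given that final KKT-sufficiency argument. Your explicit bound $\|\lambda^*\|_1\le(1+\tau\log m)/\xi$ and the choice of $B_\Lambda$ strictly above it make precise a point the paper leaves implicit.
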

%

%
%

The following example illustrates the form of the optimal classifier under the DP constraint, and the solutions for other fairness constraints can be derived in a similar manner.
\begin{example} \label{example-dp}
    $\mathrm{(DP.)}$ Using the constraints for demographic parity as described in Table~\ref{fairness-metric-table-1}, denoting the dual parameter as $\lambda \in \mathbb{R}^{m\times|\mathcal{A}|}$, its optimal fair classifier is determined by the decision vector $\beta^\lambda(x) \in \mathbb{R}^m$, where
    \begin{align}
    \label{example-dp-beta}
        \beta^\lambda_i(x)= \eta_i(x) + \sum_{a\in \mathcal{A}} \lambda_{i,a} \left(1- \frac{p_a(x)}{\omega_a} \right), \ i\in [m].
    \end{align}
    Plugging~\eqref{example-dp-beta} into Theorem~\ref{theorem-bayes-optimal} and Theorem~\ref{theorem-bayes-optimal-entropy-relaxed} can obtain the optimal fair classifier for the original problem~\eqref{fair-problem-formulation} and the entropy-regularized problem~\eqref{relaxed-fair-problem-formulation-entropy}.
\end{example}

By introducing an entropic regularizer, the optimal classification problem under fairness constraints admits an explicit expression and becomes smoother and analytically tractable, thereby providing a theoretical foundation for the development of efficient debiasing algorithms that accommodate multiple fairness criteria across different training phases.

\section{Methodology}
\label{section-Methodology}
In Theorem~\ref{theorem-bayes-optimal} and~\ref{theorem-bayes-optimal-entropy-relaxed}, we derive the explicit form of the ground-truth Bayes-optimal classifier under fairness constraints, given knowledge of the distribution $\mathcal{D}=(X,Y,A)$.
However, in practice, we only have access to finite data $\mathcal{S}=\{(x_i,y_i,a_i)\}_{i=1}^N$ sampling from $\mathcal{D}$, and it remains necessary to develop feasible algorithms for achieving the optimal accuracy-fairness trade-off.
Fairness calibration is typically performed through intervention during the training phase (in-processing) or by adjusting the outputs of the pre-trained model (post-processing), both of which will be detailed subsequently.
The proofs in this section can be found in the Appendix~\ref{proofs-for-section-algorithm}.
\subsection{Reductions Approach (In-Processing)}
In-processing approach aims to directly learn the model that realizes the fairness constraint, where the classifiers $h: \mathcal{X} \rightarrow \Delta_m$ are parameterized by a function class $\mathcal{F}$ of model $f_{\theta}:\mathcal{X} \to \mathbb{R}^m, \theta \in \Theta$.
%
%
A direct approach to solving~(\ref{fair-problem-formulation}) is to formulate an equivalent convex-concave problem in terms of its Lagrangian.
%
Given that $h$ is a randomized classifier, 
the primal problem can be equivalently written as the following saddle-point optimization,
\begin{align} \label{min-max-exchange}
\max_{\lambda \in \Lambda} \min_{h \in \mathcal{H}}\mathcal{L}(h,\lambda)=\min_{h \in \mathcal{H}}\max_{\lambda \in \Lambda}\mathcal{L}(h,\lambda).
\end{align}
%
From the game-theoretic perspective, randomized classifiers act as mixed strategies for the stochastic game, with the presence of equilibrium ensuring min-max interchangeability and the existence of the optimal solution~\citep{Nash-equilibria-stochastic-games,reduction-agarwal2018reductions}.
%
%

%
For the inner optimization $\min_{h \in \mathcal{H}}\mathcal{L}(h,\lambda)$, the optimal solution has been provided in~(\ref{lagrangian-function-inner-solution}) for the original problem, which admits a simplified and sufficient form $h^*(x;\beta^\lambda)\in \arg\max_j \beta_j^\lambda(x)$.
%
However, the saddle-point solution remains computationally intractable due to the agnostic nature of the distribution $\mathbb{P}(A\mid X)$ and the Bayes-optimal score $\eta$. 
To address this issue, we reduce the task of solving $h^*$ given $\lambda$ into a cost-sensitive learning objective. 
\begin{theorem}
\label{theorem-calibrated-inner-loss}
Let the cost-sensitive loss be defined by
    \begin{align}
    \label{inner-loss-function}
    \begin{split}
        &\ell_{\mathrm{cal}}(y,f(x;\theta),a,\lambda)\\
        &\ \ = -\sum_{i=1}^m \big[{\mathbf{M}'}(a,\lambda)\big]_{y,i} \log \frac{\exp([f(x;\theta)]_i)}{\sum_{j=1}^m \exp([f(x;\theta)]_j)},
    \end{split}
    \end{align}
    where $f(x;\theta)$ is the model to train, and ${\mathbf{M}'}(a,\lambda)=\mathbf{M}(a,\lambda) + \kappa \mathbf{1}_{m\times m}$ with $\kappa$ chosen to ensure all matrix entries are strictly positive. 
    Given $\lambda \in \Lambda$, the optimal score function $f^* \in \arg \min_{f} \mathbb{E}_{\mathcal{D}} [\ell_{\mathrm{cal}}(Y,f,A,\lambda)]$ ensures that the corresponding classifier $h^*(x;f)$ is equivalent to $h^*(x;\beta^\lambda)$.
    The loss $\ell_{\mathrm{cal}}$ is referred to as calibrated for the objective $\min_{h \in \mathcal{H}}\mathcal{L}(h,\lambda)$.
\end{theorem}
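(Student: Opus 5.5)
The argument reduces the population risk of $\ell_{\mathrm{cal}}$ to a pointwise problem in $x$ and solves that problem explicitly. Write $s(x)=\mathrm{softmax}(f(x;\theta))\in\Delta_m$. Conditioning on $X=x$ and then on $A=a$ gives
\begin{align*}
\mathbb{E}_{\mathcal{D}}[\ell_{\mathrm{cal}}]
= -\mathbb{E}_X\Big[\sum_{i=1}^m \gamma_i(X)\log s_i(X)\Big],
\qquad
\gamma_i(x):=\sum_{a\in\mathcal{A}} p_a(x)\sum_{y=1}^m \eta_y(x,a)\,[\mathbf{M}'(a,\lambda)]_{y,i}.
\end{align*}
Hence $\gamma(x)=\sum_a p_a(x)[\mathbf{M}'(a,\lambda)]^\top\eta(x,a)$. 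Because $\mathbf{M}'=\mathbf{M}+\kappa\mathbf{1}_{m\times m}$ and $\mathbf{1}_{m\times m}^\top\eta(x,a)=\mathbf{1}$ (since $\eta(x,a)\in\Delta_m$), and since $\sum_a p_a(x)=1$, we get
\begin{align*}
\gamma(x)=\beta^\lambda(x)+\kappa\mathbf{1}.
\end{align*}
Every entry of $\mathbf{M}'$ is strictly positive by the choice of $\kappa$, so $\gamma(x)$ has strictly positive entries.

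\textbf{Pointwise minimization.} We optimize over unrestricted $f$, so $s(x)$ can be any interior point of $\Delta_m$, chosen separately for each $x$. It therefore suffices to minimize $-\sum_i \gamma_i\log s_i$ over the simplex for each fixed $x$. Write $G=\sum_j\gamma_j>0$. By Gibbs' inequality (nonnegativity of the KL divergence between $\gamma/G$ and $s$), the unique minimizer is $s^*(x)=\gamma(x)/G(x)$. It lies in the interior of $\Delta_m$, so it is attained by $f^*(x)=\log\gamma(x)+c(x)$ for any scalar $c(x)$. Any minimizer $f^*$ of the expected loss must satisfy $\mathrm{softmax}(f^*(x))=s^*(x)$ for $\mu$-a.e.\ $x$; otherwise the pointwise strict optimality would give a strictly smaller risk.

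\textbf{Recovering the classifier.} The classifier induced by $f$ is $h^*(x;f)\in\arg\max_j [f(x)]_j$. Softmax and $\log$ are strictly monotone, and dividing by $G(x)>0$ preserves order, so
\begin{align*}
\arg\max_j [f^*(x)]_j=\arg\max_j \gamma_j(x)=\arg\max_j\big(\beta^\lambda_j(x)+\kappa\big)=\arg\max_j\beta^\lambda_j(x).
\end{align*}
Thus $h^*(x;f^*)$ coincides with $h^*(x;\beta^\lambda)$ $\mu$-a.e. By the inner-solution characterization \eqref{lagrangian-function-inner-solution} from Theorem~\ref{theorem-bayes-optimal}, this classifier minimizes $\mathcal{L}(h,\lambda)$, which is the calibration claim.

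\textbf{Main obstacle.} The delicate step is the identity $\gamma=\beta^\lambda+\kappa\mathbf{1}$. It relies on the row-stochastic structure $\mathbf{1}^\top\eta(x,a)=1$ together with averaging over $a$ via $p_a(x)$, and this is exactly what makes the $\kappa$-shift harmless to the argmax. A secondary point is the positivity requirement. If some $\gamma_i$ were zero or negative, the pointwise problem would be unbounded below (the loss tends to $-\infty$ as $s_i\to 0$ with $\gamma_i<0$) or its minimizer would lie on the boundary of the simplex. The strictly positive entries of $\mathbf{M}'$ prevent both, and this is the stated role of $\kappa$. Ties in $\arg\max$ are handled because both sides use the same argmax set.
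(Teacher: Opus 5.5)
Your proof is correct and follows the paper's argument essentially step by step. You condition on $X$ and then $A$ to get the pointwise weights $\sum_a p_a(x)[\mathbf{M}'(a,\lambda)]^\top\eta(x,a)=\beta^\lambda(x)+\kappa\mathbf{1}$, use the cross-entropy (Gibbs) minimizer so that $\mathrm{softmax}(f^*(x))$ is proportional to these weights, and observe that the constant shift leaves the argmax unchanged. The paper additionally states the analogous result for the entropy-regularized, temperature-softmax classifier, but that goes beyond what this statement requires.
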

%
With calibrated loss, since the problem is convex-concave with the convex domain, we seek the saddle point using the primal-dual optimization scheme, which has proven effective in the binary case~\citep{reduction-agarwal2018reductions,jmlr-recall-cotter2019optimization}, as detailed in Algorithm~\ref{algorithm-in-processing}.
\begin{algorithm}[tp]
\caption{OptFair~(In-processing)} 
\label{algorithm-in-processing}
\begin{algorithmic}
    \STATE {\bfseries Input:} data $\mathcal{S}=\{x_{i},y_{i},a_{i}\}_{i=1}^{N}$; bound $B_\Lambda$;\\
                              \qquad \quad fairness level $\xi$; learning rate $\eta_{\theta}, \eta_{\lambda}$;
    \STATE Compute the empirical statistics $\widehat{\mathbf{D}}^{a,k}$, weight $\widehat{\omega}_a$;
    \STATE Initialize model $\theta$, parameter $\lambda$;
    \FOR{$t=0$ {\bfseries to} $T-1$}
    \STATE Set $\widehat{\mathbf{M}}'(a, \lambda^t)=\widehat{\mathbf{M}}(a, \lambda^t) + \kappa \mathbf{1}_{m \times m}, a\in \mathcal{A}$;
    \STATE Performing $R$ steps of gradient update on $\theta^t$ using $\ell_{\mathrm{cal}}$ (Eq.~\eqref{inner-loss-function}), yielding $\theta^{t+1}$;
    \STATE Update corresponding classifier $h^{t+1}$ with $\theta^{t+1}$;
    \STATE Calculate the update of $\lambda^{t+\frac{1}{2}}= \lambda^{t} + \eta_\lambda \mathscr{D}(h^{t+1})$;
    \STATE Performing proximal operator 
    \vspace{-8pt}
    \begin{align*}
        \lambda^{t+1}=\mathrm{prox}_{\eta_\lambda (\xi \| \lambda \|_1 + \mathbb{I}_{\lambda \in \Lambda})}(\lambda^{t+\frac{1}{2}});
    \end{align*}\\
    \vspace{-10pt}
    \ENDFOR
    \STATE {\bfseries Output: $\{ (h^t, \lambda^t) \}_{t=1}^T$}; 
\end{algorithmic}
\end{algorithm}

%
\begin{theorem}\label{theorem-in-processing-equilibrium}
    Suppose that $\ell_{\rm cal}$ receives a $\nu^t$-approximate optimal parametric response $\theta^{t+1}$ at each iteration $t$, i.e.,  $\widehat{\mathcal{L}}(h^{t+1},\lambda^t) \leq \min_{h \in \mathcal{H}} \widehat{\mathcal{L}}(h,\lambda^t) + \nu^t$.
    Denoting $u:= \max_{h \in \mathcal{H}} \| \widehat{\mathscr{D}}(h) \|_{\infty}+\xi$, and $\bar{\nu}^T :=\sum_{t=1}^T\nu^t/T$, for $\eta_\lambda=\frac{B_{\Lambda}}{u\sqrt{KT}}$, the output $\bar{h}^T=\sum_{t=1}^T h^t/T,  \bar{\lambda}^T=\sum_{t=1}^T \lambda^t/T$ comprises an approximate mixed Nash equilibrium:
    \begin{align*} 
    \max_{\lambda\in\Lambda} \widehat{\mathcal{L}}(\bar{h}^T,\lambda) - \min_{h \in \mathcal{H}} \widehat{\mathcal{L}}(h,\bar{\lambda}^T) \leq \rho^T = \bar{\nu}^T + uB_{\Lambda}\sqrt{\frac{K}{T}}.
    \end{align*}
\end{theorem}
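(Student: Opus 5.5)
The plan is the standard no-regret-versus-best-response argument of \citet{reduction-agarwal2018reductions}, adapted to the $\ell_1$-penalized, $\ell_1$-ball-constrained dual. The empirical Lagrangian is $\widehat{\mathcal{L}}(h,\lambda)=\widehat{\mathcal{R}}(h)+\lambda^\top\widehat{\mathscr{D}}(h)-\xi\|\lambda\|_1$. It is linear in $h$, because $h$ is randomized and confusion matrices are linear in $h$, and concave in $\lambda$. The $\lambda$-player faces the sequence of concave rewards $g_t(\lambda):=\widehat{\mathcal{L}}(h^{t+1},\lambda)$. The update in Algorithm~\ref{algorithm-in-processing} is a proximal (projected) online subgradient ascent step on $g_t$: the gradient step on the linear part $\lambda^\top\widehat{\mathscr{D}}(h^{t+1})$, followed by the prox of $\eta_\lambda(\xi\|\lambda\|_1+\mathbb{I}_{\lambda\in\Lambda})$. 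The first step is therefore a regret bound for this online proximal method:
\begin{align*}
\max_{\lambda\in\Lambda}\frac1T\sum_{t}g_t(\lambda)-\frac1T\sum_t g_t(\lambda^t)\le \frac{D^2}{2\eta_\lambda T}+\frac{\eta_\lambda G^2}{2}.
\end{align*}
Here $D$ is the Euclidean diameter of $\Lambda$, so $D\le 2B_\Lambda$ and $\|\lambda\|_2\le\|\lambda\|_1\le B_\Lambda$. The constant $G$ bounds the Euclidean norm of subgradients; the linear part plus the $\ell_1$ subgradient gives $\|\widehat{\mathscr{D}}(h)+\xi s\|_2\le\sqrt K u$ with $s\in\partial\|\cdot\|_1$. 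Substituting $\eta_\lambda=B_\Lambda/(u\sqrt{KT})$ yields regret of order $uB_\Lambda\sqrt{K/T}$. Constants would be tracked so the bound is exactly $uB_\Lambda\sqrt{K/T}$, for instance using $\|\lambda-\lambda^0\|_2\le B_\Lambda$ with $\lambda^0=0$.

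Next comes the averaging argument. Linearity in $h$ gives $\widehat{\mathcal{L}}(\bar h^T,\lambda)=\frac1T\sum_t g_t(\lambda)$ for every $\lambda$. Combined with the regret bound,
\begin{align*}
\max_{\lambda}\widehat{\mathcal{L}}(\bar h^T,\lambda)\le \frac1T\sum_t\widehat{\mathcal{L}}(h^{t+1},\lambda^t)+uB_\Lambda\sqrt{K/T}.
\end{align*}
The $\nu^t$-approximate best response assumption then gives $\widehat{\mathcal{L}}(h^{t+1},\lambda^t)\le\min_h\widehat{\mathcal{L}}(h,\lambda^t)+\nu^t$.

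To compare with $\bar\lambda^T$, note that $\lambda\mapsto\min_h\widehat{\mathcal{L}}(h,\lambda)$ is concave as a minimum of concave functions. Jensen's inequality therefore gives $\frac1T\sum_t\min_h\widehat{\mathcal{L}}(h,\lambda^t)\le\min_h\widehat{\mathcal{L}}(h,\bar\lambda^T)$. Chaining the three inequalities yields the stated gap $\rho^T=\bar\nu^T+uB_\Lambda\sqrt{K/T}$.

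The indexing mismatch, where the statement averages $h^t,\lambda^t$ over $t=1,\dots,T$ while the algorithm pairs $h^{t+1}$ with $\lambda^t$, would be handled by reindexing so that each $h$ is paired with the $\lambda$ it best-responded to. This costs nothing in the bound.

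The main obstacle is the regret lemma for the composite prox step with the nonsmooth term $\xi\|\lambda\|_1$ and the set $\Lambda$, and in particular matching the exact constant. I would first try the standard composite mirror descent analysis (Duchi et al.), which treats the regularizer in closed form and so requires only the linear part's gradient bound $\|\widehat{\mathscr{D}}\|_2\le\sqrt K\max\|\widehat{\mathscr{D}}\|_\infty$. If that falls short, I would fall back to treating $\xi\|\lambda\|_1$ as part of the subgradient, which is why $u$ includes the additive $\xi$.
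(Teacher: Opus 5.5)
Your proposal is correct and follows essentially the same route as the paper: a composite-objective mirror descent regret bound for the $\lambda$-player (Euclidean mirror, $r(\lambda)=\xi\|\lambda\|_1$, $\lambda^0=0$, gradient norm at most $\sqrt{K}u$) gives exactly $uB_\Lambda\sqrt{KT}$ with $\eta_\lambda=B_\Lambda/(u\sqrt{KT})$, and is then combined with linearity in $h$ and the $\nu^t$-approximate best response. Your Jensen step on the concave map $\lambda\mapsto\min_h\widehat{\mathcal{L}}(h,\lambda)$ is a bit cleaner than the paper's version, which writes $\widehat{\mathcal{L}}(h,\bar{\lambda}^T)=\frac{1}{T}\sum_t\widehat{\mathcal{L}}(h,\lambda^t)$ as an equality (it is only ``$\geq$'' because of $-\xi\|\lambda\|_1$, though that is the direction needed) and silently pairs $h^t$ with $\lambda^t$ rather than addressing the off-by-one indexing you flag.
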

For the fairness criteria in Table~\ref{fairness-metric-table-1}, the constant $u \leq 1 + \xi$. As the number of iterations $T$ grows, this bound decreases to optimization risk $\bar{\nu}^T$, which is also convergent with respect to the inner iteration. 
%
Therefore, Theorem~\ref{theorem-in-processing-equilibrium} shows that the parameter solution converges to the empirical equilibrium of accuracy and fairness.

Next, we provide a non-asymptotic analysis of the risk with respect to accuracy and fairness.
For $\mathcal{H}$ determined by model $\theta \in \Theta$, we let $\mathcal{H}_y=\left\{h_y(\cdot): h \in \mathcal{H}\right\}$ and assume that the pseudo-dimension~\cite{Foundations-of-Machine-Learning-textbook} of $\mathcal{H}_y$ is finite and satisfies $\max_{y \in [m]} P\mathrm{dim}(\mathcal{H}_y) \leq d$.
\begin{theorem} \label{theorem-in-processing-error}
    If $(\bar{h},\bar{\lambda})$ comprises a $\rho$-approximate Nash equilibrium, denoting the sample number in group $a\in\mathcal{A}$ by $N_a$, $\Omega_k^N = \max_{a}\| \mathbf{D}^{a,k} - \widehat{\mathbf{D}}^{a,k} \|_{\infty}$, $u_k=\max_{a}\| \mathbf{D}^{a,k}\|_{1}$, $k\in [K]$, and $\gamma_d(n,\frac{1}{\delta})=\sqrt{\frac{d \log ({en}/{d})+\log (1/{\delta})}{n}}$, with probability at least $1-\delta$,
    \begin{align*}
    \mathcal{R}(\bar{h}) &\leq \mathcal{R}(h^*) + 2\rho + \mathcal{O}\left( \gamma_d(N,\frac{m^2}{\delta}) \right),\\
    |\mathscr{D}_{k}(\bar{h})| 
    & \leq \xi + \frac{1+2\rho}{B_\Lambda} + \mathcal{O}\left(u_k\sum_{a\in\mathcal{A}} \gamma_d(N_a,\frac{m^2}{\delta}) + \Omega_k^N\right),
    \end{align*}
    where $h^* \in \mathcal{H}$ is an optimal solution for fairness-constrained classification.
\end{theorem}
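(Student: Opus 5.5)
The proof follows the Agarwal et al.\ (2018) reduction template. First we use the $\rho$-approximate equilibrium to control the empirical risk and the empirical constraint violation of $\bar h$. Then we transfer both bounds to the population using uniform convergence over $\mathcal{H}$.

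\textbf{Empirical accuracy.} From the equilibrium property,
\[
\max_{\lambda\in\Lambda}\widehat{\mathcal{L}}(\bar h,\lambda)\le \min_{h}\widehat{\mathcal{L}}(h,\bar\lambda)+\rho .
\]
Since $\widehat{\mathcal{L}}(h,\lambda)=\widehat{\mathcal{R}}(h)+\lambda^\top\widehat{\mathscr{D}}(h)-\xi\|\lambda\|_1$, we have
\[
\max_{\lambda\in\Lambda}\widehat{\mathcal{L}}(\bar h,\lambda)=\widehat{\mathcal{R}}(\bar h)+B_\Lambda\max\Bigl(0,\max_k(|\widehat{\mathscr{D}}_k(\bar h)|-\xi)\Bigr).
\]
Taking $\lambda=0$ gives the lower bound $\widehat{\mathcal{L}}(\bar h,\lambda)\ge\widehat{\mathcal{R}}(\bar h)$. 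On the other side, for any $\widehat{\xi}$-feasible $h$ we have $\min_h\widehat{\mathcal{L}}(h,\bar\lambda)\le\widehat{\mathcal{L}}(h,\bar\lambda)\le\widehat{\mathcal{R}}(h)$, because $\bar\lambda^\top\widehat{\mathscr{D}}(h)\le\xi\|\bar\lambda\|_1$. We take $h=h^*$. It is feasible for the population problem, so it is only approximately empirically feasible, and we must absorb the slack. The standard trick is to compare via $\max_\lambda\widehat{\mathcal{L}}(h^*,\lambda)\le\widehat{\mathcal{R}}(h^*)+B_\Lambda\cdot(\text{deviation})$. Alternatively, we use the saddle-point value: since $\min_h\max_\lambda\widehat{\mathcal{L}}$ is attained, $\widehat{\mathcal{R}}(\bar h)\le \widehat{\mathcal{L}}(h^*,\bar\lambda)+\rho$, and the $\bar\lambda^\top\widehat{\mathscr{D}}(h^*)$ term deviates from its population counterpart by at most the fairness concentration term. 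Chaining gives
\[
\widehat{\mathcal{R}}(\bar h)\le \widehat{\mathcal{R}}(h^*)+\rho+(\text{small}),
\]
and a second use of $\rho$ (mirroring the two-sided argument in Agarwal et al.) gives the $2\rho$.

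\textbf{Empirical fairness.} By the same display, $\widehat{\mathcal{R}}(\bar h)+B_\Lambda(\max_k|\widehat{\mathscr{D}}_k(\bar h)|-\xi)_+\le \min_h\widehat{\mathcal{L}}(h,\bar\lambda)+\rho$. The right side is at most $\widehat{\mathcal{R}}(\bar h)+1+2\rho$ up to the same slack: bound $\min_h\widehat{\mathcal{L}}(h,\bar\lambda)\le\widehat{\mathcal{R}}(h^*)+\rho$ and use $\widehat{\mathcal{R}}(h^*)-\widehat{\mathcal{R}}(\bar h)\le 1$, since risks lie in $[0,1]$. Hence
\[
|\widehat{\mathscr{D}}_k(\bar h)|\le\xi+\frac{1+2\rho}{B_\Lambda}.
\]

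\textbf{Generalization.} Every entry $\mathbf{C}^a_{ij}(h)=\mathbb{E}[h_j(X)\mathbb{I}_{Y=i}\mid A=a]$ is a mean of a function in a class with pseudo-dimension at most $d$. The standard pseudo-dimension uniform bound, with a union bound over the $m^2$ entries (and over groups), gives
\[
\sup_h|\widehat{\mathbf{C}}^a_{ij}(h)-\mathbf{C}^a_{ij}(h)|\lesssim\gamma_d\Bigl(N_a,\tfrac{m^2}{\delta}\Bigr).
\]
For the overall risk we apply the same bound to $\mathbf{C}(h)$ with $N$ samples, which yields the $\gamma_d(N,m^2/\delta)$ term, used twice, for $\bar h$ and $h^*$. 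For the constraints we decompose
\[
\widehat{\mathscr{D}}_k-\mathscr{D}_k=\sum_a\langle\widehat{\mathbf{D}}^{a,k}-\mathbf{D}^{a,k},\widehat{\mathbf{C}}^a\rangle+\sum_a\langle\mathbf{D}^{a,k},\widehat{\mathbf{C}}^a-\mathbf{C}^a\rangle .
\]
The first sum is at most $\Omega_k^N$ times $\sum_{ij}\widehat{\mathbf{C}}^a_{ij}=1$. By H\"older, the second is at most $u_k\sum_a\max_{ij}|\widehat{\mathbf{C}}^a_{ij}-\mathbf{C}^a_{ij}|$. Note also that the $|\mathcal{A}|$-fold union bound can be folded into the $\mathcal{O}$.

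\textbf{Main obstacle.} The delicate step is handling $h^*$, which is population-feasible but not empirically feasible, inside the equilibrium comparison without losing a factor of $B_\Lambda$ on the concentration term. I would try first to compare against $\widehat{\mathcal{L}}(h^*,\bar\lambda)$ directly. In that comparison $\bar\lambda^\top\widehat{\mathscr{D}}(h^*)\le\|\bar\lambda\|_1(\xi+\text{dev})$, and the $-\xi\|\bar\lambda\|_1$ term cancels the $\xi$ part. If the leftover $B_\Lambda\cdot\text{dev}$ is unavoidable, it is hidden in the $\mathcal{O}(\cdot)$, treating $B_\Lambda$ as a constant.
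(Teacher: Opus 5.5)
Your generalization step (pseudo-dimension bound on each entry of $\mathbf{C}^a$, union bound over the $m^2$ entries and the groups, H\"older split of $\widehat{\mathscr{D}}_k-\mathscr{D}_k$) matches the paper. The gap is the step you flag as the main obstacle, and hiding it in the $\mathcal{O}(\cdot)$ does not work. Comparing against $h^*$ in the Lagrangian with threshold $\xi$ gives at best
\[
\widehat{\mathcal{R}}(\bar h)\le\widehat{\mathcal{R}}(h^*)+\rho+\|\bar\lambda\|_1\max_{k'}\bigl|\widehat{\mathscr{D}}_{k'}(h^*)-\mathscr{D}_{k'}(h^*)\bigr| .
\]
So your risk bound carries an extra $B_\Lambda\max_{k'}\bigl(u_{k'}\sum_a\gamma_d(N_a,\cdot)+|\mathcal{A}|\Omega_{k'}^N\bigr)$. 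This cannot be absorbed into $\mathcal{O}(\gamma_d(N,m^2/\delta))$, for three reasons:
\begin{itemize}
\item $\Omega_k^N$ is a separate estimation error that the statement keeps explicit, and only in the fairness bound.
\item The $u_k$ and $N_a$ dependence is exactly what the risk bound claims not to have.
\item Treating $B_\Lambda$ as a constant hides a real trade-off, since the fairness bound wants $B_\Lambda$ large.
\end{itemize}
This is not just loose bookkeeping. $h^*$ is typically tight, $|\mathscr{D}_k(h^*)|=\xi$, so it can be empirically infeasible. The empirical constrained optimum can then be worse than $h^*$ by an amount proportional to the constraint deviation.

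The same slack also contaminates your fairness bound. It enters as $\|\bar\lambda\|_1\max_{k'}\mathrm{dev}_{k'}$, so after dividing by $B_\Lambda$ the bound for constraint $k$ depends on $\max_{k'}$ of the deviations, not on $u_k,\Omega_k^N$ alone. Your line ``hence $|\widehat{\mathscr{D}}_k(\bar h)|\le\xi+\frac{1+2\rho}{B_\Lambda}$'' silently drops it.

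The missing idea, which the paper takes from Agarwal et al., is to build the sampling slack into the thresholds. Take $(\bar h,\bar\lambda)$ to be a $\rho$-approximate saddle point of
\[
\widehat{\mathcal{L}}(h,\lambda)=\widehat{\mathcal{R}}(h)+\lambda^\top\widehat{\mathscr{D}}(h)-\sum_k\widehat\xi_k|\lambda_k|,\qquad \widehat\xi_k=\xi+\epsilon_k ,
\]
where $\epsilon_k$ is the uniform deviation bound $u_k\sum_a(\cdots)+|\mathcal{A}|\Omega_k^N$.

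On the concentration event, $|\widehat{\mathscr{D}}_k(h^*)|\le\widehat\xi_k$. So each term $\bar\lambda_k\widehat{\mathscr{D}}_k(h^*)-\widehat\xi_k|\bar\lambda_k|\le0$, and $\widehat{\mathcal{L}}(h^*,\bar\lambda)\le\widehat{\mathcal{R}}(h^*)$ with nothing left over. Hence $\widehat{\mathcal{R}}(\bar h)\le\widehat{\mathcal{R}}(h^*)+2\rho$; your single use of the gap even gives $\rho$. Only the two risk-concentration terms $\gamma_d(N,m^2/\delta)$ remain.

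The cost moves into the fairness bound. Testing $\lambda=\pm B_\Lambda e_k$ gives $|\widehat{\mathscr{D}}_k(\bar h)|\le\xi+\epsilon_k+\frac{1+2\rho}{B_\Lambda}$. Transferring to the population adds $\epsilon_k$ once more. That is why the appendix bound has $2u_k\sum_a(\cdot)+2|\mathcal{A}|\Omega_k^N$, per constraint $k$.

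So the relaxation must be part of the hypothesis, i.e.\ of how Algorithm~\ref{algorithm-in-processing} is run; the paper's proof assumes it ``with a little abuse of notation.'' It cannot be recovered afterwards from an equilibrium at level $\xi$.
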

%
Theorem~\ref{theorem-in-processing-error} clarifies that the risk of Algorithm~\ref{algorithm-in-processing} originates from two primary sources: the optimization error, represented by the term dependent on $\rho$, and the generalization error, which is related to the sample size $n$ expressed by $\gamma$.
Furthermore, it also indicates that as the sample size grows and the optimization error decreases, the model will converge to the accuracy-fairness Pareto frontier.
%
%

\subsection{Plug-in Estimation (Post-Processing)}

Post-processing modifies the output of a pre-trained model to ensure fairness by mitigating disparities across different groups, typically without altering the black-box model itself.
%
%

%
Given the dataset $\mathcal{S} = \{x_i, y_i, a_i\}_{i=1}^N$, the plug-in estimator requires the ground-truth conditional probabilities $\mathbb{P}(A,Y \mid X)$.
Fortunately, the powerful capabilities of modern neural networks enable us to approximate these agnostic function by training probabilistic classifiers.
Previous research~\cite{posthoc,unified-postprocessing-framework-group-xian2024,fair-guarantees-demographic-denis2024fairness,Bayes-optimal-fair-functional} has shown that such auxiliary models can be flexibly constructed in accordance with the form of the pre-trained model.
Therefore, we define the empirical fair classification problem with an entropic regularizer as follows:
\begin{align}
\label{empirical-relaxed-fair-problem-formulation-entropy}
    \min_{h\in \mathcal{H}} \widehat{\mathcal{R}}(h) - \tau \widehat{\mathcal{E}}(h), \quad \mathrm{s.t.} \ |\widehat{\mathscr{D}}_k(h)| \leq \xi ,\ k \in [K],
\end{align}
where $\widehat{\mathcal{R}},\widehat{\mathcal{E}},\widehat{\mathscr{D}}$ are the corresponding empirical functions defined on the empirical distribution $\mu_\mathcal{S} \sim \mathcal{P}^N$.
Since no assumption is imposed on the sample distribution or the output distribution, the empirical solution can be derived by plugging the estimators into Theorem~\ref{theorem-bayes-optimal-entropy-relaxed}.
\begin{corollary}
Assume that $\exists h\in\mathcal{H}$, such that $\widehat{\mathscr{D}}_k(h)=0$, $k\in [K]$. 
Denoting ${\mathbf{q}}^a(x):=[{\mathbb{P}}(A=a\mid X=x,Y=y)]_{y\in [m]}$, estimated by auxiliary model $\widehat{\mathbf{q}}^a(x) \approx \mathbf{q}^a(x)$, the corresponding classifier
\begin{align}
\label{empirical-solution}
    \widehat{h}^{\lambda}_i(x)&:=\frac{\exp \left(\widehat{\beta}^{\lambda}_i(x) / \tau\right)}{\sum_{j=1}^m \exp \left(\widehat{\beta}^{\lambda}_j(x) / \tau\right)}, \quad i\in [m],\\
    \widehat{\beta}^{\lambda}(x)&:= \left[ \sum_{a\in \mathcal{A}} \mathrm{Diag}(\widehat{\mathbf{q}}^a(x)){ \widehat{\mathbf {M}}}(a,\lambda) \right]^\top \widehat{\eta}(x),    
\end{align}
define an optimal solution for the empirical regularized fair problem when $\lambda=\widehat{\lambda}^*$, and the parameter $\widehat{\lambda}^*$ is the solution to the empirical dual problem $\min_{\lambda\in\Lambda}\widehat{H}(\lambda)$,
\begin{align*}
    \widehat  H(\lambda)&:= \frac{\tau}{N}\sum_{i=1}^N  \log \left( \sum_{j=1}^m \exp \left(\widehat{\beta}^{\lambda}_j(x_i) / \tau\right)\right) + \xi \|\lambda \|_1.
\end{align*}
\end{corollary}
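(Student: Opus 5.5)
We treat the corollary as Theorem~\ref{theorem-bayes-optimal-entropy-relaxed} applied to a different underlying distribution, namely the empirical (plug-in) distribution. The plan is to let $\widetilde{\mathcal{P}}$ be the law whose $X$-marginal is the empirical measure $\mu_\mathcal{S}$. Conditional on $X=x$, its label is drawn from $\widehat\eta(x)$, and conditional on $(X,Y)=(x,y)$ its attribute is drawn from $\widehat{\mathbf q}^a(x)_y$. Its group weights $\widehat\omega_a$ and fairness matrices $\widehat{\mathbf D}^{a,k}$ are the empirical statistics already used to build $\widehat{\mathbf M}(a,\lambda)$. The empirical functionals $\widehat{\mathcal R},\widehat{\mathcal E},\widehat{\mathscr D}_k$ in~\eqref{empirical-relaxed-fair-problem-formulation-entropy} are then exactly the population functionals of Section~\ref{section-bayes-optimal-fair-classifier} evaluated under $\widetilde{\mathcal P}$. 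The proof of Theorem~\ref{theorem-bayes-optimal-entropy-relaxed} used no continuity or structural assumption on the distribution, only feasibility at $\xi=0$. The hypothesis $\exists h$ with $\widehat{\mathscr D}_k(h)=0$ is precisely Assumption~\ref{assumpt-1} for $\widetilde{\mathcal P}$, so the theorem applies verbatim and yields existence together with the softmax form~\eqref{entropic-fair-randomized-classifier}.

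The only real computation is to identify the decision vector $\beta^\lambda$ of~\eqref{beta-def} under $\widetilde{\mathcal P}$ with the stated $\widehat\beta^\lambda$. We rewrite the product $p_a(x)\,\eta_y(x,a)$ by Bayes' rule:
\[
p_a(x)\,\eta_y(x,a)=\mathbb P(A=a,Y=y\mid X=x)=\mathbb P(A=a\mid X=x,Y=y)\,\eta_y(x)=q^a_y(x)\,\eta_y(x).
\]
This gives $p_a(x)\eta(x,a)=\mathrm{Diag}(\mathbf q^a(x))\eta(x)$, and hence
\[
\beta^\lambda(x)=\sum_a \mathbf M(a,\lambda)^\top \mathrm{Diag}(\mathbf q^a(x))\eta(x)=\Big[\sum_a \mathrm{Diag}(\mathbf q^a(x))\mathbf M(a,\lambda)\Big]^\top\eta(x),
\]
using that a diagonal matrix is its own transpose. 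Substituting the plug-in quantities gives exactly $\widehat\beta^\lambda$. The dual objective of Theorem~\ref{theorem-bayes-optimal-entropy-relaxed} is an expectation over the $X$-marginal. Under $\mu_\mathcal{S}$ this expectation becomes the sample average $\frac1N\sum_i$, which recovers $\widehat H(\lambda)$. Its minimizer $\widehat\lambda^*$ over $\Lambda$ therefore plays the role of $\lambda^*$.

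The main obstacle is checking that the transfer is genuinely legitimate. First, $\widetilde{\mathcal P}$ must be a bona fide distribution: $\widehat\eta(x)\in\Delta_m$, $\sum_a\widehat q^a_y(x)=1$, and $\widehat\omega_a>0$. Second, the empirical fairness functionals must indeed be of the form $\sum_a\langle\widehat{\mathbf D}^{a,k},\widehat{\mathbf C}^a(h)\rangle$ with $\widehat{\mathbf C}^a$ the group confusion matrices under $\widetilde{\mathcal P}$. This second point requires that $\widehat{\mathbf M}$ be built with the same $\widehat\omega_a$, so that the factor $1/\widehat\omega_a$ matches the conversion from $\mathbf C^a$ to joint probabilities.

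We will also need the constant $B_\Lambda$ to be large enough that the restriction to $\Lambda$ does not cut off the empirical dual optimum. This is handled by the same boundedness argument as in Theorem~\ref{theorem-bayes-optimal}, which bounds $\|\lambda^*\|_1$ via the slack between a strictly feasible (zero-disparity) classifier and $\xi>0$. Since that argument depends only on feasibility, it carries over to $\widetilde{\mathcal P}$ unchanged. Strong duality and attainment then follow as before, because $\widehat H$ is convex (a log-sum-exp of affine functions of $\lambda$ plus a norm) and $\Lambda$ is compact.
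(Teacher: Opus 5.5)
Your route is the paper's own. The paper justifies the corollary in one line by ``plugging the estimators into Theorem~\ref{theorem-bayes-optimal-entropy-relaxed}'', and your explicit plug-in law, the Bayes-rule identity $p_a(x)\eta(x,a)=\mathrm{Diag}(\mathbf q^a(x))\eta(x)$, and your remark that $B_\Lambda$ must exceed the bound on the dual optimum are a more careful version of that line. The paper glosses over the $B_\Lambda$ requirement, so that remark is a genuine improvement.

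One step fails as written, though: you cannot declare the group weights of $\widetilde{\mathcal P}$ to be $\widehat\omega_a$. Your $\widetilde{\mathcal P}$ is fully determined by $\mu_{\mathcal S}$, $\widehat\eta$ and $\widehat{\mathbf q}^a$. Its $A$-marginal is therefore $\widetilde\omega_a=\frac1N\sum_n\widehat\eta(x_n)^\top\widehat{\mathbf q}^a(x_n)$, which in general differs from the sample frequency $N_a/N$. That frequency is what enters $\widehat{\mathbf M}(a,\lambda)$, and also the paper's empirical constraint $\widehat{\mathscr D}_k(h)=\frac1N\sum_n\widehat\eta(x_n)^\top\bigl[\sum_a\widehat\omega_a^{-1}\mathrm{Diag}(\widehat{\mathbf q}^a(x_n))\widehat{\mathbf D}^{a,k}\bigr]h(x_n)$. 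The factor $1/\omega_a$ in $\mathbf M(a,\lambda)$ arises precisely from rewriting $\mathbb E_{X\mid A=a}$ as $\mathbb E_X[p_a(X)\,\cdot\,]/\omega_a$. Applied literally to $\widetilde{\mathcal P}$, the theorem would therefore produce $1/\widetilde\omega_a$, so the ``main obstacle'' check you describe does not go through.

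The fix is one line. Theorem~\ref{theorem-bayes-optimal-entropy-relaxed} places no restriction on the constraint matrices, so apply it to $\widetilde{\mathcal P}$ with $\widetilde{\mathbf D}^{a,k}:=(\widetilde\omega_a/\widehat\omega_a)\,\widehat{\mathbf D}^{a,k}$, assuming $\widetilde\omega_a>0$, which holds for softmax outputs. This leaves both $\widehat{\mathbf M}(a,\lambda)$ and the constraint functional unchanged, and it makes the corollary's hypothesis exactly Assumption~\ref{assumpt-1} for $\widetilde{\mathcal P}$. Alternatively, work directly with the variational $\mathbb E_X$-form of the empirical problem. In that form the proof of the theorem uses $\omega_a$ only as a positive constant, so it applies verbatim.
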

%
%
\begin{proposition}\label{post-processing-optimal-convex-optimization}
    Given pre-trained $\widehat{\eta}(x)$ and auxiliary model $\widehat{\mathbb{P}}(A\mid X,Y)$, the empirical objective can be  decomposed as 
    $\widehat{H}(\lambda)=\widehat f(\lambda) + \xi \| \lambda\|_1$, where $\widehat{f}(\lambda)$ is convex and $L$-smooth with respect to $\lambda$.
\end{proposition}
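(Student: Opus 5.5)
The plan is to take $\widehat f(\lambda) := \frac{\tau}{N}\sum_{i=1}^N \log \sum_{j=1}^m \exp(\widehat\beta^{\lambda}_j(x_i)/\tau)$, so the decomposition $\widehat H = \widehat f + \xi\|\lambda\|_1$ holds by definition. Everything then reduces to two observations. First, $\lambda \mapsto \widehat\beta^\lambda(x)$ is affine. Second, the log-sum-exp function $\mathrm{lse}_\tau(z) := \tau\log\sum_j \exp(z_j/\tau)$ is convex and smooth on $\mathbb{R}^m$.

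For the affine structure, recall $\widehat{\mathbf M}(a,\lambda) = \mathbf I - \frac{1}{\widehat\omega_a}\sum_k \lambda_k \widehat{\mathbf D}^{a,k}$. Substituting into the definition of $\widehat\beta^\lambda$ gives, for each sample $x_i$,
\begin{align*}
\widehat\beta^\lambda(x_i) = b_i + G_i \lambda,
\end{align*}
with the following ingredients:
\begin{align*}
b_i &= \Big[\sum_a \mathrm{Diag}(\widehat{\mathbf q}^a(x_i))\Big]^\top \widehat\eta(x_i),\\
G_i &\in \mathbb{R}^{m\times K}, \quad k\text{-th column } -\sum_a \frac{1}{\widehat\omega_a}\big[\mathrm{Diag}(\widehat{\mathbf q}^a(x_i))\widehat{\mathbf D}^{a,k}\big]^\top \widehat\eta(x_i).
\end{align*}
Hence $\widehat f(\lambda) = \frac1N\sum_i \mathrm{lse}_\tau(b_i + G_i\lambda)$. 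Convexity then follows because a convex function composed with an affine map is convex, and an average of convex functions is convex.

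For smoothness, I would compute the derivatives of $\mathrm{lse}_\tau$ directly. Its gradient is $\nabla \mathrm{lse}_\tau(z) = \mathrm{softmax}(z/\tau) =: p$, and its Hessian is $\frac1\tau(\mathrm{Diag}(p) - pp^\top)$. This Hessian is positive semidefinite, which confirms convexity again. Its operator norm is at most $\frac1\tau\max_j p_j \le \frac1\tau$, by Gershgorin or because $\mathrm{Diag}(p)-pp^\top \preceq \mathrm{Diag}(p)$. The chain rule then gives
\begin{align*}
\nabla^2 \widehat f(\lambda) = \frac1N\sum_i G_i^\top \nabla^2\mathrm{lse}_\tau(\cdot)\, G_i \preceq \frac{1}{\tau N}\sum_i G_i^\top G_i,
\end{align*}
so $\widehat f$ is $L$-smooth with $L = \frac{1}{\tau N}\sum_i \|G_i\|_2^2 \le \frac{1}{\tau}\max_i\|G_i\|_2^2$.

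The only real obstacle is showing that $L$ is finite and explicit, which means bounding $\|G_i\|_2$ uniformly. I would argue as follows. The entries of $\widehat{\mathbf q}^a$ and $\widehat\eta$ lie in $[0,1]$, with $\widehat\eta(x_i)\in\Delta_m$. The empirical group weights satisfy $\widehat\omega_a>0$, since each group present in the data has $\widehat\omega_a \ge 1/N$. The matrices $\widehat{\mathbf D}^{a,k}$ have bounded entries; for the criteria in Table~\ref{fairness-metric-table-1} these are ratios of group and label frequencies. Each entry of $G_i$ is therefore bounded by $\sum_a \|\widehat{\mathbf D}^{a,k}\|_{\max}/\widehat\omega_a$, and the Frobenius norm yields
\begin{align*}
L \le \frac{mK}{\tau}\max_{k}\Big(\sum_a \|\widehat{\mathbf D}^{a,k}\|_{\max}/\widehat\omega_a\Big)^2 < \infty.
\end{align*}
I would note that $L$ scales as $1/\tau$, which shows why the entropic regularization with $\tau>0$ is exactly what makes the dual objective smooth. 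This justifies applying proximal gradient methods to $\widehat f + \xi\|\cdot\|_1$ over $\Lambda$.
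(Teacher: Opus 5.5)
Your proof is correct. It is essentially the argument the paper relies on: the proposition is stated without a standalone proof, but the proof of Theorem~\ref{theorem-post-generalization-error} in the appendix uses your ingredients, namely the affine form $\widehat{\beta}^{\lambda}(x)=\widehat{\eta}(x)-\sum_{k}\lambda_k\widehat{\phi}(x,k)$ (your $b_i+G_i\lambda$, where the $k$-th column of $G_i$ is $-\widehat{\phi}(x_i,k)$), the log-sum-exp gradient $-\widehat{\mathscr{D}}(\widehat{h}^{\lambda})$, and the Lipschitzness of softmax. A minor sharpening: the paper's column bound $\|\widehat{\phi}(x,k)\|_1\le\widehat{r}_k$, which uses $\sum_a\widehat{\mathbf{q}}^a_y(x)=1$, improves your Frobenius estimate to $L\le\|\widehat{r}\|_2^2/\tau$, removing the factor $m$ and the sum over groups.
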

This property of the objective function motivates us to directly apply proximal gradient descent to solve for the optimal value of $\lambda$, which ensures fast convergence for convex and smooth functions~\cite{proximal-algorithms}, as outlined in Algorithm~\ref{algorithm-post-processing}.
We now proceed with a deeper analysis of the algorithm's generalization risk.
%

%
%
%

\begin{algorithm}[tp]
\caption{OptFair~(Post-Processing)} 
\label{algorithm-post-processing}
\begin{algorithmic}
    \STATE {\bfseries Input:} data $\mathcal{S}=\{x_{i},y_{i},a_{i}\}_{i=1}^{N}$; bound $B_\Lambda$;\\
                              \qquad \quad fairness level $\xi$; learning rate $\eta_{\lambda}$;
    \STATE Compute the empirical statistics $\widehat{\mathbf{D}}^{a,k}$, weight $\widehat{\omega}_a$;
    \STATE Training model $\widehat{\eta}$ and auxiliary model $\widehat{\mathbf{q}}^a(\cdot)$;
    \STATE Initialize parameter $\lambda$;
    \FOR{$t=0$ {\bfseries to} $T-1$}
    \STATE Calculate the update of $\lambda^{t+\frac{1}{2}}=\lambda^t - \eta_{\lambda} \nabla \widehat{H}(\lambda^t)$;
    \STATE Performing proximal operator 
    \vspace{-10pt}
    \begin{align*}
        \lambda^{t+1}=\mathrm{prox}_{\eta_\lambda (\xi \| \lambda \|_1 + \mathbb{I}_{\lambda \in \Lambda})}(\lambda^{t+\frac{1}{2}});
    \end{align*}\\
    \vspace{-10pt}
    \ENDFOR
    \STATE Obtain fair classifier $\widehat{h}^{\lambda^T}$ from Eq.~\eqref{empirical-solution};
    \STATE {\bfseries Output: $\lambda^T, \widehat{h}^{\lambda^T}$}; 
\end{algorithmic}
\end{algorithm}
 
%

\begin{theorem}
    \label{theorem-post-generalization-error} $\mathrm{(Informal).}$
    Under mild assumptions, 
    with operator norm $\| f\|_1:=\int_\mathcal{X} |f| d\mu$, denoting the risk resulting from the auxiliary model, finite samples, and empirical statistics respectively by 
    \begin{align*}
        \epsilon_1 &:= |\mathcal{A}| \| {\eta} - \widehat{\eta} \|_1 + \sum_{a\in \mathcal{A}} \| \mathbf{q}^a - \widehat{\mathbf{q}}^a \|_1, \ \epsilon_2 := \sqrt{\frac{\log(1/\delta)}{N}},\\
        \epsilon_3 &:= \sum_{k=1}^K \Omega_k^N, \quad \Omega_k^N = \max_{a}\| \mathbf{D}^{a,k} - \widehat{\mathbf{D}}^{a,k} \|_{\infty},
    \end{align*}
    there exists an integer $N_1 > 0$, such that if the sample number $N \geq N_1$, the empirically optimal fair classifier $\widehat{h}^{\widehat{\lambda}^*}$ determined by $\widehat{\lambda}^* \in \arg \max_{\lambda \in {\Lambda}}\widehat{H}(\lambda)$ satisfies that, for any $\delta\in(0,1)$, with a probability of at least $1-\delta$,
    \begin{align*}
        \mathcal{R}(\widehat{h}^{\widehat{\lambda}^*}) &\leq \mathcal{R}(h^{\lambda^*}) + \tau \log m + \mathcal{O}\left( \left(\frac{1}{\tau}+1\right)\sum_{i=1}^3\epsilon_i \right),\\
        |\mathscr{D}_k (\widehat{h}^{\widehat \lambda^*})|
        & \leq \xi + \mathcal{O}\left( \epsilon_1 + \frac{1}{\tau}\epsilon_2 + \Omega_k^N \right),
    \end{align*}
    and by adjusting $\tau$, the risk of accuracy can achieve
    \begin{align*}
        \mathcal{R}(\widehat{h}^{\widehat{\lambda}^*}) &\leq \mathcal{R}(h^{\lambda^*}) + \mathcal{O}\left( \psi(\epsilon_1) + \psi(\epsilon_2) + \psi(\epsilon_3) \right),
    \end{align*}
    where $\psi(\epsilon):= \epsilon+\epsilon^\frac{1}{2}$ and classifier $h^{\lambda^*} \in \mathcal{H}$ is determined by the optimal solution $\lambda^*$ characterized in Theorem~\ref{theorem-bayes-optimal-entropy-relaxed}.
\end{theorem}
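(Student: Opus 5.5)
We will compare three objects: the population regularized problem, whose solution $h^{\lambda^*}$ and dual $H$ are given by Theorem~\ref{theorem-bayes-optimal-entropy-relaxed}; a plug-in \emph{population} surrogate $\widetilde H(\lambda)=\mathbb{E}_X[\tau\log\sum_j\exp(\widehat\beta^\lambda_j(X)/\tau)]+\xi\|\lambda\|_1$ built from $\widehat\eta,\widehat{\mathbf q}^a,\widehat{\mathbf M}$; and the empirical dual $\widehat H$ of the corollary. The whole argument exploits two facts. First, $\lambda\mapsto\tau\log\sum_j\exp(\beta^\lambda_j/\tau)$ is $1$-Lipschitz in $\|\cdot\|_\infty$ of $\beta$. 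Second, its gradient with respect to $\lambda$ equals (up to sign) the fairness functional evaluated at the softmax classifier $h^\lambda$. Concretely, $\nabla_\lambda \mathbb{E}[\tau\log\sum\exp(\beta^\lambda/\tau)] = -\mathscr D(h^\lambda)$, which follows from writing $\beta^\lambda$ via the identity $\sum_a p_a\eta(x,a)$ used in the proof of Theorem~\ref{theorem-bayes-optimal}.

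\textbf{Step 1: score perturbation.} We first bound $\sup_{\lambda\in\Lambda}\mathbb{E}_X\|\widehat\beta^\lambda(X)-\beta^\lambda(X)\|_\infty$. Expand $\beta^\lambda=\sum_a \mathrm{Diag}(\mathbf q^a)\mathbf M(a,\lambda)^\top\eta$ (the population analogue of the corollary's formula) and telescope across the three replaced factors. With $\|\mathbf M\|$ bounded by $1+B_\Lambda\max\|\mathbf D\|/\omega_a$, this yields $O(\epsilon_1+\epsilon_3)$, where replacing $\mathbf D,\omega$ by their estimates costs $\Omega_k^N$. It follows that $\sup_\Lambda|\widetilde H-H|=O(\epsilon_1+\epsilon_3)$, and the same perturbation controls $\|h^\lambda-\widehat h^\lambda\|_1$ by $O(\epsilon_1+\epsilon_3)/\tau$, since softmax with temperature $\tau$ is $1/\tau$-Lipschitz.

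\textbf{Step 2: uniform concentration of the empirical dual.} We need $\sup_{\lambda\in\Lambda}|\widehat H(\lambda)-\widetilde H(\lambda)|$. The summand is bounded, because $\widehat\beta$ is bounded on the compact set $\Lambda$, and it is Lipschitz in $\lambda$. Since $\Lambda$ is a finite-dimensional $\ell_1$-ball, a covering argument combined with Hoeffding gives $O(\sqrt{K\log(1/\delta)/N})=O(\epsilon_2)$. We will do the same for the gradients, $\sup_\lambda\|\nabla\widehat f-\nabla\widetilde f\|_\infty=O(\epsilon_2/\tau)$, where the factor $1/\tau$ comes from the smoothness constant $L$ of Proposition~\ref{post-processing-optimal-convex-optimization}. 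The threshold $N_1$ enters here, to guarantee that all empirical group frequencies satisfy $\widehat\omega_a\ge\omega_a/2$, so that the estimated $\widehat{\mathbf M}$ is well defined and bounded.

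\textbf{Step 3: fairness.} The fairness bound is where the dual structure does the work. Optimality of $\widehat\lambda^*$ for $\widehat f+\xi\|\cdot\|_1$ over $\Lambda$ gives the subgradient condition $\|\nabla\widehat f(\widehat\lambda^*)\|_\infty\le\xi$ whenever $\widehat\lambda^*$ lies in the interior of $\Lambda$, that is, $|\widehat{\mathscr D}_k(\widehat h^{\widehat\lambda^*})|\le\xi$. If it lies on the boundary, we choose $B_\Lambda$ large enough (via Assumption~\ref{assumpt-1} and a Slater-type argument, as in Theorem~\ref{theorem-bayes-optimal}) to rule this out. Transferring from $\widehat{\mathscr D}$ to $\mathscr D$ uses Step 2 for sampling, at cost $\epsilon_2/\tau$, Step 1 for the model error $\epsilon_1$, and $\Omega_k^N$ for the coefficient error. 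We expect this gradient-level transfer to be the main obstacle: the value bounds are easy, but the fairness claim requires uniform control of the gradients and the boundary case of $\Lambda$. If the interior argument fails, our fallback is to use strong-duality slackness, which yields the weaker $\xi+(1+\text{err})/B_\Lambda$ form.

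\textbf{Step 4: accuracy.} By strong duality (Theorem~\ref{theorem-bayes-optimal-entropy-relaxed}), $\mathcal R(h^{\lambda^*})-\tau\mathcal E(h^{\lambda^*})=-H(\lambda^*)$, with sign conventions matched to the risk formulation. We chain $H(\lambda^*)$, $\widetilde H$, $\widehat H(\widehat\lambda^*)$ using Steps 1–2, and we recover $\mathcal R(\widehat h^{\widehat\lambda^*})$ from the Lagrangian identity $\mathcal L(h^\lambda,\lambda)=-H(\lambda)$ up to a term $\lambda^\top\mathscr D$, controlled by $B_\Lambda$ times the error from Step 3. Since $0\le\mathcal E\le\log m$, discarding the entropy costs at most $\tau\log m$. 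Collecting the terms gives $\tau\log m+O((1+1/\tau)\sum_i\epsilon_i)$. Finally, setting $\tau\asymp\sqrt{\epsilon}$ balances $\tau$ against $\epsilon/\tau$ and produces $\psi(\epsilon)=\epsilon+\epsilon^{1/2}$ for each source separately, by subadditivity of the square root.
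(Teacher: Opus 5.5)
Your proposal is correct and follows the paper's skeleton. KKT at $\widehat\lambda^*$ gives $|\widehat{\mathscr D}_k(\widehat h^{\widehat\lambda^*})|\le\xi$. You compare $\widehat{\mathscr D}_k$ uniformly with the population plug-in functional $\widetilde{\mathscr D}_k$, then telescope to $\mathscr D_k$. For accuracy you use complementary slackness, the bound $B_\Lambda\|\widehat{\mathscr D}-\mathscr D\|_\infty$, $0\le\mathcal E\le\log m$, and balance $\tau$. The route differs from the paper in three technical places.

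(i) \emph{Uniform concentration.} The paper uses Rademacher complexity. It pushes the softmax through the vector-contraction inequality (Lemma~\ref{lemma-structural-generalization-risk}), which is where its $\frac1\tau\sqrt{m/N}$ comes from, and then bounds a linear class by $B_\Lambda\|\widehat r\|_2\sqrt{m/N}$. Your cover-plus-Hoeffding argument on the $K$-dimensional $\ell_1$-ball is more elementary and puts $1/\tau$ only inside a logarithm. However, a single-scale cover gives a rate of order $\sqrt{(K\log(B_\Lambda N/\tau)+\log(1/\delta))/N}$, which is a $\log N$ factor beyond $\epsilon_2$ unless you chain.

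(ii) \emph{Boundary of $\Lambda$.} The paper never addresses it. It asserts $0\in\partial\widehat H(\widehat\lambda^*)$ and uses $(\widehat\lambda^*)^\top\widehat{\mathscr D}(\widehat h^{\widehat\lambda^*})=\xi\|\widehat\lambda^*\|_1$ without justification. Your Slater-type repair works. If $\widehat{\mathscr D}(h')=0$, then $\tau\log\sum_j e^{\beta_j/\tau}\ge\beta^\top p$ for all $p\in\Delta_m$ gives $\widehat f(\lambda)\ge\frac1N\sum_n\widehat\beta^\lambda(x_n)^\top h'(x_n)=\frac1N\sum_n\widehat\beta^0(x_n)^\top h'(x_n)\ge0$. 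Since also $\widehat H(0)\le1+\tau\log m$, every minimizer has $\|\lambda\|_1\le(1+\tau\log m)/\xi$. Taking $B_\Lambda$ above this legitimizes both the KKT condition and the slackness identity, and Assumption~\ref{assumpt-1} gives the same for $\lambda^*$.

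(iii) \emph{Accuracy step.} The paper compares $\widehat h^{\widehat\lambda^*}$ with $h^{\widehat\lambda^*}$ through the Lipschitz softmax, which costs $(\epsilon_1+\epsilon_3)/\tau$. Your dual-value chain can avoid this. Let $\widetilde{\mathcal L}$ be the Lagrangian built with $\widehat\beta^\lambda$. It is minimized at $\widehat h^\lambda$ with value $1-\widetilde H(\lambda)$, and for every $h$ it differs from $\mathcal L(h,\lambda)$ by at most $\mathbb E\|\widehat\beta^\lambda-\beta^\lambda\|_\infty$. Hence $\mathcal L(\widehat h^{\widehat\lambda^*},\widehat\lambda^*)\le 1-H(\widehat\lambda^*)+2\sup_{\lambda\in\Lambda}\mathbb E\|\widehat\beta^\lambda-\beta^\lambda\|_\infty\le 1-H(\lambda^*)+O(\epsilon_1+\epsilon_2+\epsilon_3)$. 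This carries no $1/\tau$, and it needs neither value concentration of $\widehat H$ nor the softmax-perturbation bound at the end of your Step 1. You therefore prove the stated bound with room to spare.

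Minor slips:
\begin{itemize}
\item The population score is $\sum_a\mathbf M(a,\lambda)^\top\mathrm{Diag}(\mathbf q^a)\eta$, not $\mathrm{Diag}(\mathbf q^a)\mathbf M^\top\eta$.
\item Replacing $\omega_a$ by $\widehat\omega_a$ costs a sampling term of order $\epsilon_2$, not $\Omega_k^N$.
\item $N_1$ must also keep $\widehat{\mathbf D}^{a,k}$ bounded (e.g.\ lower-bound $\widehat{\mathbb P}(Y=y,A=a)$ for EO/EOP). The paper imposes this as a separate assumption.
\end{itemize}
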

The complete theorem with its proof is provided in the Appendix~\ref{proof-of-theorem-post-generalization-error}.
Theorem~\ref{theorem-post-generalization-error} highlights the sources of generalization error in the post-processing algorithm: the gap between the auxiliary model and the true conditional distribution, the generalization error from finite samples, and the discrepancy between the frequency estimator and true statistics.
%

\begin{figure*}[thb]
  \centering
     {
    \includegraphics[width=\linewidth]{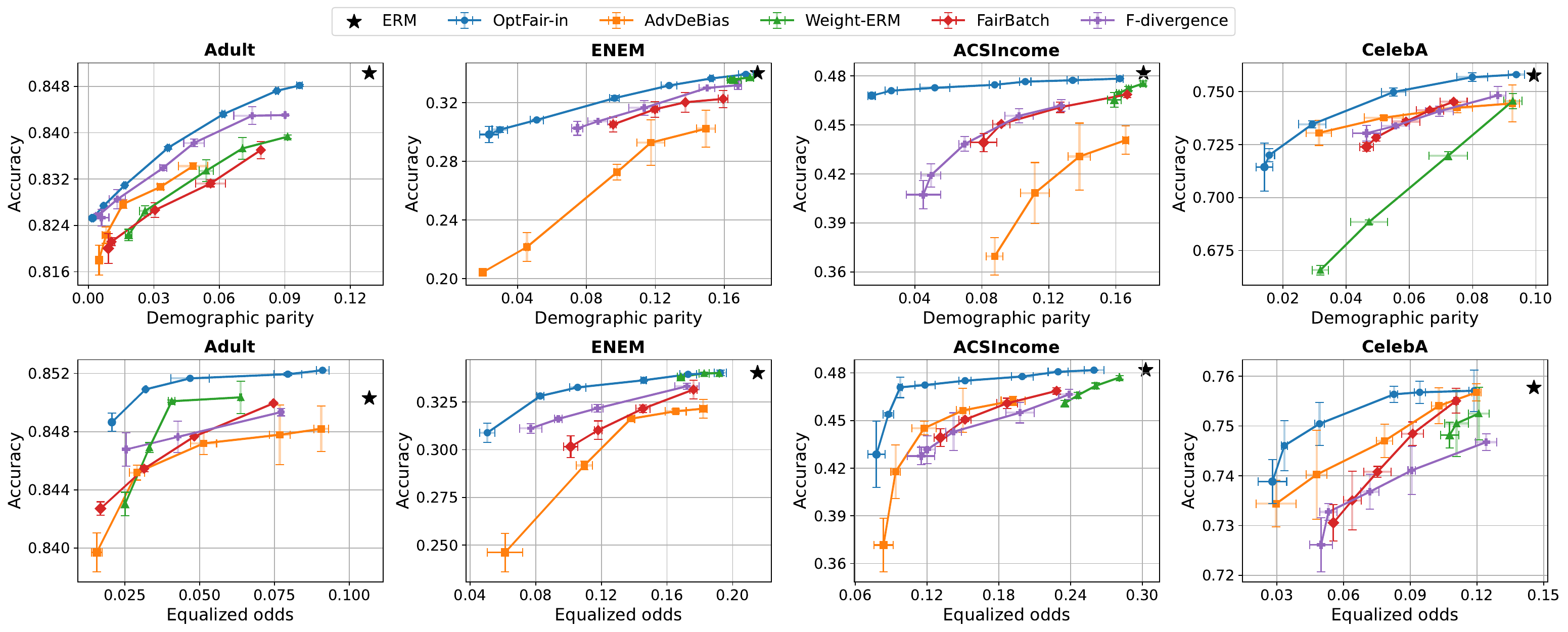}}\\
    {
    \includegraphics[width=\linewidth]{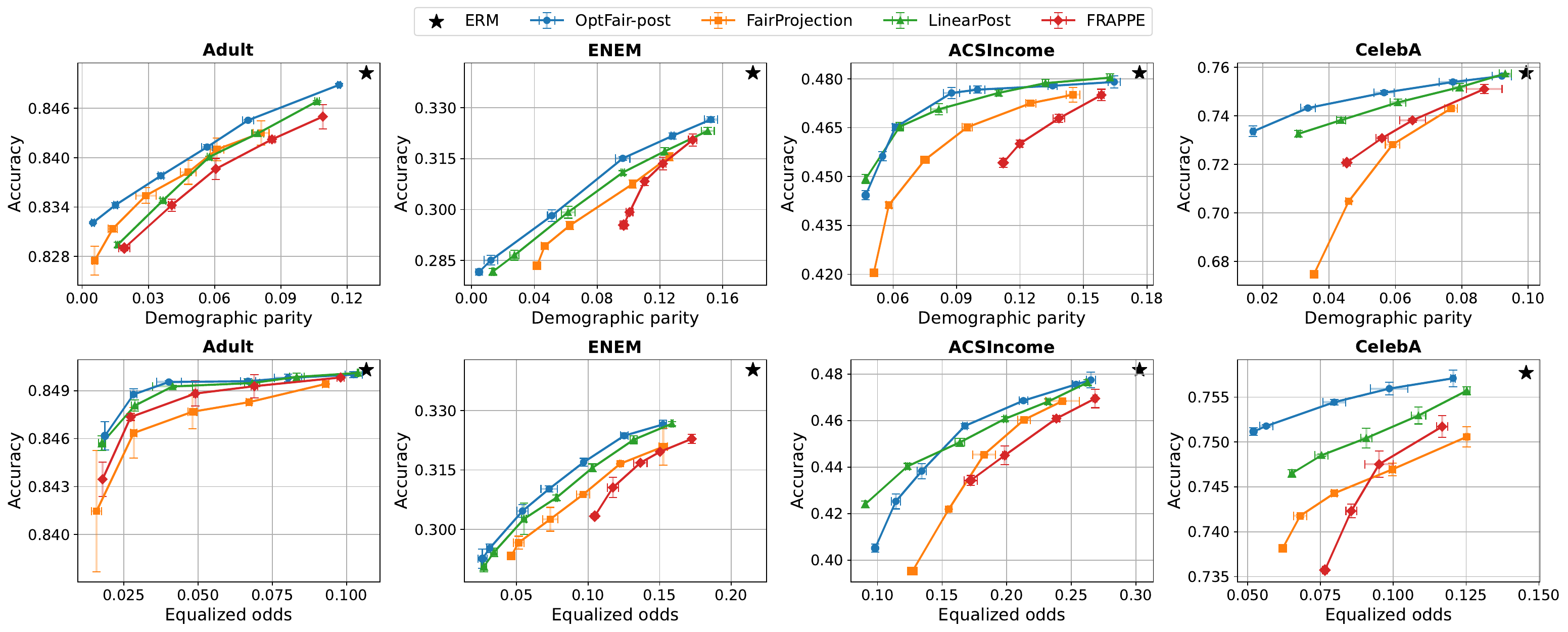}}
    \caption{Multi-Class Fair Classification Results. The top row shows comparisons between in-processing methods and baselines, whereas the bottom row reports results for post-processing methods. The curve closer to the upper left corner indicates a better trade-off between accuracy and fairness. The error bars for each point represent the 90\% confidence interval.
 }\label{multiclass-result-main}
\end{figure*}

%
%

\subsection{From Random to Deterministic Classifier}
\label{section-Deterministic-classifier}
Certain practical scenarios may require the algorithm to return a deterministic classifier, even at the cost of a potential accuracy drop or a mild violation of the fairness constraints.
For the in-processing approach, the learned pair $(\bar h, \bar \lambda)$ can be regarded as an approximate saddle-point solution. Thus, to obtain a deterministic classifier, we fix $\lambda = \bar \lambda$ and minimize the cost-sensitive loss in Eq.~\eqref{inner-loss-function}, yielding the classifier closest to the equilibrium solution.
For the post-processing approach, a deterministic prediction rule can be directly obtained by taking $\arg\max h(x)$. In practice, $\tau$ is chosen to be small to mitigate the effect of the regularization term on the accuracy-fairness trade-off, as characterized in Theorem~\ref{theorem-post-generalization-error}. Consequently, the randomized classifier in Eq.~\eqref{empirical-solution} is already nearly deterministic.

\section{Experiments}
\label{experiments}
We now present the empirical results demonstrating that \M~achieves competitive performance on accuracy-fairness trade-off compared to benchmark methods.
%
%
The detailed information in this section is provided in \textbf{Appendix~\ref{datasets-and-experimental-setting-appendix}}, with additional experiments in \textbf{Appendix~\ref{Appendix-Additional-Experiments}}.
%
%
%

\subsection{Datasets and Experimental Settings}


\textbf{Datasets.}
The experiments are conducted on four real-world datasets, including \textbf{Adult}~\cite{adult-asuncion2007uci}, \textbf{ENEM}~\cite{enem-do2018instituto}, \textbf{ACSIncome}~\cite{ACSIncome-ACSCoverage-NEURIPS2021_32e54441} and \textbf{CelebA}~\cite{celeba-zhang2020celeba}, which are well established for assessing bias mitigation techniques.
%
%
Logistic regression is used for Adult, MLPs for ENEM and ACSIncome, and ResNet for CelebA.
Except for Adult, which is a binary classification dataset, the tasks constructed on all other datasets involve four or more classes.

\textbf{Baselines.} 
%
%
We compare the performance of \M~with in- and post-processing baselines, along with classic Empirical Risk Minimization method (ERM): (i) In-processing: \textbf{AdvDebias}~\cite{AdvDeBias}, \textbf{Weight-ERM}~\cite{overlap}, \textbf{FairBatch}~\cite{fairbatch}, and F-divergence~\cite{F-divergence-baseline}; (ii) Post-processing: \textbf{FairProjection}~\cite{beyond-alghamdi2022beyond}, \textbf{LinearPost}~\cite{unified-postprocessing-framework-group-xian2024}, and \textbf{FRAPP\'E}~\cite{frappe-baseline}.
We select baselines that support multi-class fairness calibration and multiple fairness notions in the attribute-blind setting; thus, binary fair learning and attribute-aware approaches are not included (e.g.~\citet{reduction-agarwal2018reductions,fair-kernel,fair-and-optimal-postpro-pmlr-v202-xian23b,fair-guarantees-demographic-denis2024fairness,multioutput-Distributional-fair}).

\textbf{Evaluation.} \textit{(i) Firstly}, we partition each dataset into a 60\% training set and the remaining 40\% as the test set. \textit{(ii) Secondly}, to ensure fair comparison, in-processing and post-processing methods are evaluated separately, since their effectiveness varies across datasets, with our model denoted as \M-in and \M-post \textit{(iii) Thirdly}, we evaluate the methods with the Accuracy (Acc) and fairness metric ($\mathscr{D}$), with smaller values of fairness metrics indicating a fairer model.

\subsection{Overall Comparison}
We compare~\M~with benchmarks tailored to multi-class classification in terms of the in-processing and post-processing case on the four datasets.
The detailed Pareto frontier is reported in Figure~\ref{multiclass-result-main} to evaluate the ability of~\M~to strike the optimal accuracy-fairness balance.

\textbf{Comparison results.} Overall, when compared to existing SOTA methods, \M~demonstrates superior performance in achieving a balanced trade-off between accuracy and fairness.
We observe that, particularly in the in-processing setting, our method exhibits substantial advantages and offers greater controllability compared to competing methods. This benefit stems from the fact that our approach theoretically approximates the optimal accuracy-fairness trade-off.

\textbf{Intrinsic trade-off.} The experiments reveal a fundamental trade-off between fairness and accuracy; imposing fairness as a constraint typically reduces accuracy, consistent with our theoretical results. However, in some cases, such as EO on Adult, accuracy improves, especially with in-processing methods. This is due to the reduction of inherent bias, which enhances model performance, as seen in previous work~\cite{reduction-agarwal2018reductions}.

\subsection{Ablation Study}
This subsection presents ablation experiments that combine in-processing and post-processing methods.
Since explicit sensitive attributes are difficult to integrate into image datasets, preventing the joint application of both approaches, we focus on tabular datasets and consider Equalized Odds as a representative fairness criterion on ENEM and ACSIncome datasets.
Our approach proceeds in two stages: we first train the model using the in-processing method until a predefined fairness threshold is reached, and then apply post-processing to further calibrate fairness.
The experimental result is shown in Figure~\ref{Ablation}.

\begin{figure}[t]
   \centering
   {
    \includegraphics[width=\linewidth]{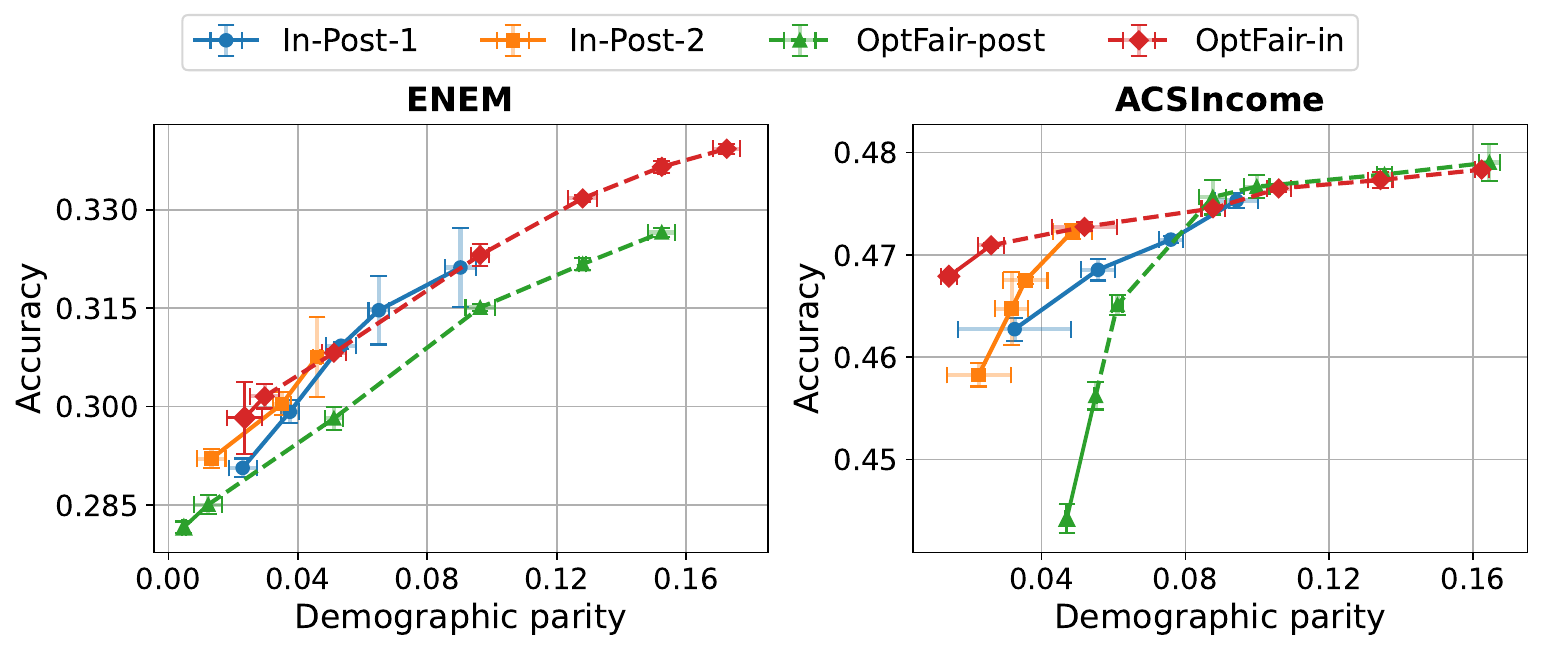}}\\
    {
    \includegraphics[width=\linewidth]{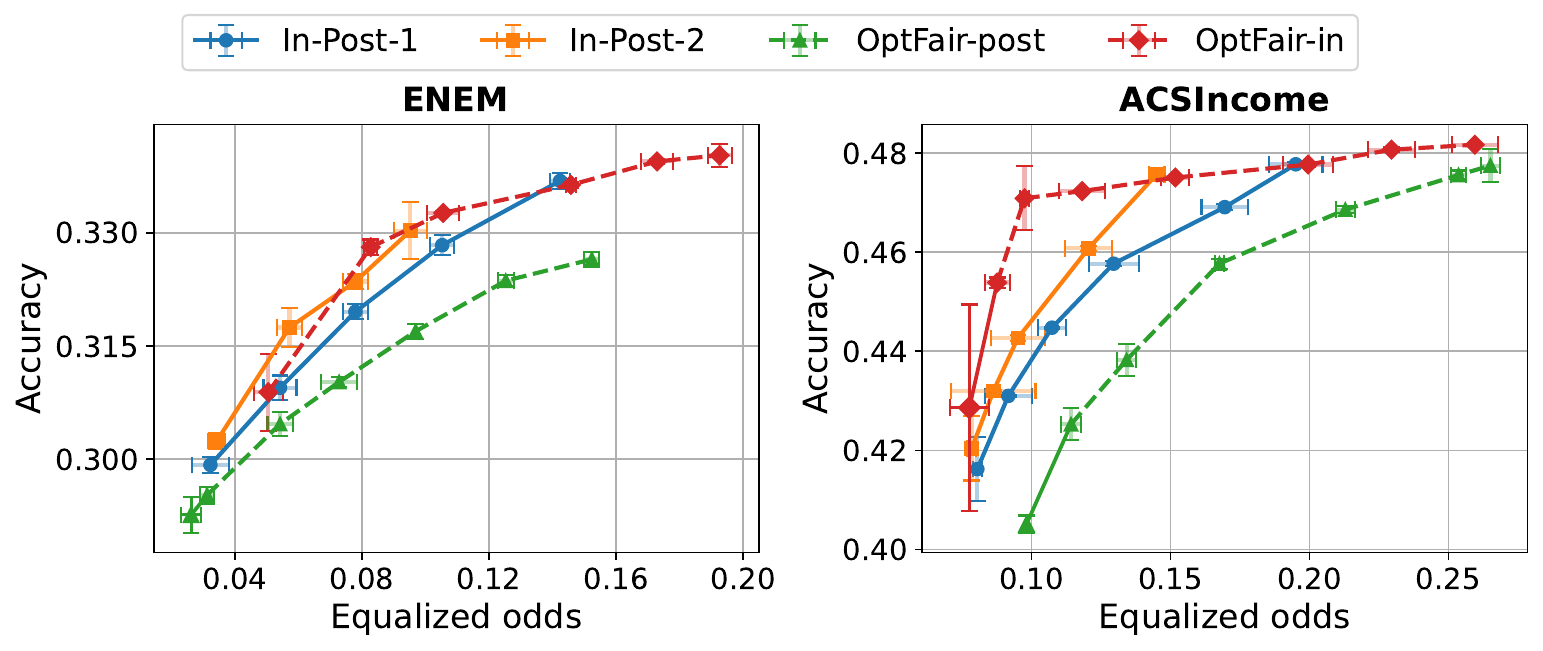}}
    \caption{Ablation study.
Top row: the post-processing module is activated when the DP disparity falls below 0.1 for In-Post-1 and 0.05 for In-Post-2 on both ENEM and ACSIncome. 
Bottom row: the module is activated when the EO disparity falls below 0.15/0.20 for In-Post-1 and 0.10/0.15 for In-Post-2 on ENEM/ACSIncome, respectively.
      }
    \label{Ablation}
    \vspace{-5pt}
\end{figure}

\textbf{In- \& post-processing.} The results indicate that, although slight improvements may be observed in some cases, combining the two debiasing methods generally yields performance that lies between the two individual approaches.
The in-processing method targets representation-level bias during training at a higher computational cost, whereas post-processing calibrates the model’s output distribution without correcting representation-level bias. 
Therefore, the combination of both methods often fails to yield further improvements in debiasing effectiveness, as demonstrated by the ablation experiments here.


\section{Discussion}
\textbf{Relation to previous work.}
The optimal fair classifier proposed in this paper extends prior work on generalized fairness definitions and multi-class scenarios.
In the binary case, the optimal fair classifier in Theorem~\ref{theorem-bayes-optimal} degenerates into threshold retrieval on $[0, 1]$, aligning with the results from previous binary fairness studies~\cite{cost-of-fairness-binary,posthoc}.
The in-processing method resembles the original reduction approach in the binary case~\cite{reduction-agarwal2018reductions}, with the primary distinction being that the loss function is formulated as cross-entropy.
Furthermore, existing literature~\cite{unified-postprocessing-framework-group-xian2024,fair-guarantees-demographic-denis2024fairness} establishes that optimal fair classifiers can be obtained by modifying the Bayes-optimal scores (cf. Theorem~\ref{theorem-bayes-optimal} and Example~\ref{example-dp}), and these frameworks typically assume continuity of the output distribution and are confined to post-processing.

%

%

\textbf{Attribute-aware classifier.}
The intervention technique for attribute-aware classifiers also can be derived following the proof technique of Theorem~\ref{theorem-bayes-optimal}.
In Appendix~\ref{attribute-aware-case-appendix}, we provide expressions for optimal classifiers under the corresponding fairness constraints, along with the corresponding example.
%
%
Notably, the attribute-aware Bayes-optimal fair classifier for the DP constraint shares a similar structure with the formulation proposed in~\citet{fair-guarantees-demographic-denis2024fairness}.


\section{Conclusion}
This paper studies multi-class fair classification and proposes a group-fairness calibration framework, \M.
We derive an analytically tractable probabilistic characterization of the Bayes-optimal fair classifier, which makes the multi-class accuracy-fairness Pareto frontier explicit and actionable for algorithm design.
Based on this characterization, we develop two training-phase compatible \textit{attribute-blind} methods: an in-processing reduction to cost-sensitive classification and a post-processing convex calibration of output probabilities.
We provide generalization guarantees showing that both approaches are statistically consistent and converge to the optimal Pareto frontier.
Experiments on multiple real-world datasets demonstrate that the proposed method consistently achieves a more controllable accuracy-fairness balance than existing baselines across several group-fairness criteria.
Finally, \M~offers a systematic pathway for multi-class fairness calibration and motivates extensions to richer fairness notions and broader deployment settings.

\section*{Reproducibility Statement}
Details for the experimental setting are provided in the beginning of Section~\ref{experiments} and Appendix~\ref{datasets-and-experimental-setting-appendix}, and the code can be found at \href{https://github.com/liizhang/OptFair-Multi-Class-Fairness}{https://github.com/liizhang/OptFair-Multi-Class-Fairness}.

\section*{Acknowledgments}
This work was supported in part by the National Natural Science Foundation of China (No.~62522217, No.~62402148).

\section*{Impact Statement}
This work advances fair machine learning for multi-class classification by providing principled methods to characterize and approximate the optimal accuracy-fairness trade-off. Our framework may benefit high-stakes applications such as healthcare, finance, education, and criminal justice by helping reduce group-level disparities while maintaining predictive utility. It also offers flexibility under different deployment settings, and attribute-blind inference can be useful when sensitive attributes are unavailable or restricted at test time. However, the considered fairness criteria capture only specific group-level notions and may not address individual fairness, intersectional harms, or domain-specific ethical concerns. Therefore, deployment should involve careful validation, monitoring, and transparent reporting of the chosen fairness criteria and limitations.


\nocite{langley00}

\bibliography{icml_conference}
\bibliographystyle{icml2026}

\newpage
\appendix
\onecolumn
\section{Fairness Criteria and Discussions}
\label{fairness-notations}

\subsection{Group Fairness Criteria and Confusion-Matrix Representations}
\label{Group-Fairness-Confusion-Matrix-notations}
We clarify several fairness criteria and their representations through confusion matrices, including those in Table~\ref{fairness-metric-table-1} (DP, EOP, EO), and additional metrics like Overall Accuracy Equality (OAE) and Mean Equalized Odds~\cite{beyond-alghamdi2022beyond}.

For random tuple $(X,Y,A)$, we recall the definition of attribute-blind prediction $\mathbb{P}(\widehat{Y}=y \mid \mathbf{x})={h}_y(\mathbf{x})$, and group-specific confusion matrices $\mathbf{C}^a$, where $a \in \mathcal{A}$, with $\mathbf{C}_{i,j}^a := \mathbb{P}(Y = i, \widehat{Y} = j \mid A = a)$.
Moreover, we use $p_\delta$ to denote the probability of event $\delta$ occurring.
For example, $p_a:= \mathbb{P}(A=a)$, $p_{y}=\mathbb{P}(Y=y)$, $p_{a,y}:=\mathbb{P}(A=a,Y=y)$, $p_{y|a}:=\mathbb{P}(Y=y\mid A=a)$, $p_{a|y}:=\mathbb{P}(A=a\mid Y=y)$.


\begin{example} 
For the multi-class DP criterion~\cite{fair-and-optimal-postpro-pmlr-v202-xian23b,fair-guarantees-demographic-denis2024fairness},
\begin{align*}
    \mathscr{D}_{DP} = \max_{y \in [m]}\max_{a \in\mathcal{A}}\left| \mathbb{P}(\widehat{Y}=y \mid A=a) -  \mathbb{P}(\widehat{Y}=y) \right|,
\end{align*}
where $\mathbb{P}(Y=y \mid A=a)=\sum_{i\in [m]} \mathbb{P}(\widehat{Y}=y, Y=i \mid A=a)=\sum_{i\in [m]} \mathbf{C}^{a}_{i,y}$ and $\mathbb{P}(\widehat{Y}=y)=\sum_{a' \in\mathcal{A}}\mathbb{P}(A=a')\sum_{i\in[m]}\mathbb{P}(\widehat{Y}=y, Y=i \mid A=a')=\sum_{a' \in\mathcal{A}}\sum_{i\in[m]}\mathbb{P}(A=a') \mathbf{C}^{a'}_{i,y}$.\\
Hence, we have
\begin{align*}
    \mathscr{D}_{DP} 
    = \max_{y \in [m]}\max_{a \in\mathcal{A}}\left| \sum_{a' \in \mathcal{A}} \sum_{i \in [m]} \big( \mathbb{I}[a' =a ]-\mathbb{P}(A=a') \big) \mathbf{C}^{a'}_{i,y} \right| 
    =\max_{\substack{k = (y,a) \in \mathcal{K} \\ \mathcal{K} =[m]\times \mathcal{A}}}\left| \sum_{a' \in \mathcal{A}} \langle \mathbf{D}^{a',k} , \mathbf{C}^{a'} \rangle \right|,
\end{align*}
where $\mathbf{D}^{a',k} \in \mathbb{R}^{m \times m}$, and when $k=(y,a)$, the $y$-th column elements of $\mathbf{D}^{a',k}$ are $\mathbb{I}[a' =a ]-\mathbb{P}(A=a')$ with all other elements set to $0$.
\end{example}

\begin{example}
    For the multi-class EOP criterion~\cite{unified-postprocessing-framework-group-xian2024},
    \begin{align*}
    \mathscr{D}_{EOP} = \max_{y \in [m]}\max_{a \in\mathcal{A}}\left| \mathbb{P}(\widehat{Y}=y \mid A=a,Y=y) -  \mathbb{P}(\widehat{Y}=y\mid Y=y) \right|,
\end{align*}
where $\mathbb{P}(\widehat{Y}=y \mid A=a,Y=y)=\frac{\omega_a}{\mathbb{P}({Y=y, A=a})}\mathbf{C}^{a}_{y,y}$ and $\mathbb{P}(\widehat{Y}=y\mid Y=y)=\sum_{a' \in\mathcal{A}}\frac{\mathbb{P}(A=a')}{\mathbb{P}(Y=y)} \mathbf{C}^{a'}_{y,y}$.\\
Hence, we have
\begin{align*}
    \mathscr{D}_{EOP} 
    = \max_{y \in [m]}\max_{a \in\mathcal{A}}\left| \sum_{a' \in \mathcal{A}} \omega_{a'} \bigg( \frac{\mathbb{I}[a' = a ]}{\mathbb{P}({Y=y, A=a})}-\frac{1}{\mathbb{P}({Y=y})} \bigg) \mathbf{C}^{a'}_{y,y} \right| 
    =\max_{\substack{k = (y,a) \in \mathcal{K} \\ \mathcal{K} =[m]\times \mathcal{A}}}\left| \sum_{a' \in \mathcal{A}} \langle \mathbf{D}^{a',k} , \mathbf{C}^{a'} \rangle \right|,
\end{align*}
where $\mathbf{D}^{a',k} \in \mathbb{R}^{m \times m}$, and when $k=(y,a)$, the entry in the $y$-th row and $y$-th column is $\omega_{a'} \bigg( \frac{\mathbb{I}[a' = a ]}{\mathbb{P}({Y=y, A=a})}-\frac{1}{\mathbb{P}({Y=y})} \bigg)$ with all other elements set to $0$.
\end{example}

\begin{example}
    For the multi-class EO criterion~\cite{unified-postprocessing-framework-group-xian2024}, 
    \begin{align*}
    \mathscr{D}_{EO} = \max_{y,y' \in [m]}\max_{a \in\mathcal{A}}\left| \mathbb{P}(\widehat{Y}=y \mid A=a,Y=y') -  \mathbb{P}(\widehat{Y}=y\mid Y=y') \right|,
\end{align*}    
where $\mathbb{P}(\widehat{Y}=y \mid A=a,Y=y')=\frac{\omega_a}{\mathbb{P}({Y=y', A=a})}\mathbf{C}^{a}_{y',y}$ and $\mathbb{P}(\widehat{Y}=y\mid Y=y')=\sum_{a' \in\mathcal{A}}\frac{\mathbb{P}(A=a')}{\mathbb{P}(Y=y')} \mathbf{C}^{a'}_{y',y}$.\\
Hence, we have
\begin{align*}
    \mathscr{D}_{EO} 
    = \max_{y,y' \in [m]}\max_{a \in\mathcal{A}}\left| \sum_{a' \in \mathcal{A}} \omega_{a'} \bigg( \frac{\mathbb{I}[a' = a ]}{\mathbb{P}({Y=y', A=a})}-\frac{1}{\mathbb{P}({Y=y'})} \bigg) \mathbf{C}^{a'}_{y',y} \right| 
    =\max_{\substack{k = (y,y',a) \in \mathcal{K} \\ \mathcal{K} =[m]\times[m]\times \mathcal{A}}}\left| \sum_{a' \in \mathcal{A}} \langle \mathbf{D}^{a',k} , \mathbf{C}^{a'} \rangle \right|,
\end{align*}
where $\mathbf{D}^{a',k} \in \mathbb{R}^{m \times m}$, and when $k=(y,y',a)$, the entry in the $y'$-th row and $y$-th column is $\omega_{a'} \bigg( \frac{\mathbb{I}[a' = a ]}{\mathbb{P}({Y=y', A=a})}-\frac{1}{\mathbb{P}({Y=y'})} \bigg)$ with all other elements set to $0$.
\end{example}

\begin{example}
     We follow~\cite{beyond-alghamdi2022beyond} to introduce another natural extension of Equalized Odds in multi-class setting,
     the Mean Equalized Odds (MEO), and consider its constraint representation:
    \begin{align*}
    \mathscr{D}_{MEO} = \max_{y \in [m]}\max_{a \in\mathcal{A}}\frac{1}{2}(\left| \mathrm{TPR}_y(a) - \mathrm{TPR}_y \right| + \left| \mathrm{FPR}_y(a) - \mathrm{FPR}_y \right|),
\end{align*}
where $\mathrm{TPR}_y(a)=\mathbb{P}(\widehat{Y}=y\mid Y=y,A=a), \mathrm{TPR}_y=\mathbb{P}(\widehat{Y}=y\mid Y=y)$ and $\mathrm{FPR}_y(a)=\mathbb{P}(\widehat{Y}=y\mid Y\neq y,A=a), \mathrm{FPR}_y=\mathbb{P}(\widehat{Y}=y\mid Y\neq y)$.
We can obtain that
\begin{align*}
    \mathbb{P}(\widehat{Y}&=y \mid Y \neq y, A=a) = \frac{\mathbb{P}(\widehat{Y}=y, Y \neq y \mid A=a)}{\mathbb{P}(Y\neq y\mid A=a)}=\frac{\sum_{i \neq y}\mathbf{C}_{i,y}^a}{\sum_{j \neq y}\mathbb{P}(Y=j\mid A=a)}\\
    \mathbb{P}(\widehat{Y}&=y \mid Y \neq y) = \frac{\mathbb{P}(\widehat{Y}=y , Y\neq y)}{\mathbb{P}(Y\neq y)} = \frac{\sum_{a'\in \mathcal{A}}\omega_{a'} \sum_{i\neq y} \mathbf{C}_{i,y}^{a'}}{\sum_{j \neq y}\mathbb{P}(Y=j)}.
\end{align*}

It shows that
\begin{align*}
    \frac{1}{2}&(\left| \mathrm{TPR}_y(a) - \mathrm{TPR}_y \right| + \left| \mathrm{FPR}_y(a) - \mathrm{FPR}_y \right|)\\
    &=\frac{1}{2}\left| \sum_{a' \in \mathcal{A}} \omega_{a'} \bigg( \frac{1}{\mathbb{P}(Y=y,A=a)} \mathbb{I}[a' =a ]-\frac{1}{\mathbb{P}(Y=y)} \bigg) \mathbf{C}^{a'}_{y,y} \right|\\
    & \quad + \left| \sum_{a' \in \mathcal{A}} \sum_{i\neq y} \omega_{a'} \bigg( \frac{1}{\sum_{j \neq y} \mathbb{P}(Y=j,A=a)} \mathbb{I}[a =a' ]-\frac{1}{\sum_{j\neq y} \mathbb{P}(Y=j)} \bigg) \mathbf{C}^{a'}_{ j,y} \right|,\\
\end{align*}
where the entry in the $y$-th row and $y$-th column is $\frac{p_{a'}}{p_{a',y}} \mathbb{I}[a =a' ]-\frac{p_a}{p_y}$ with all other elements set to $0$ for $\mathbf{D}^{a,0}_{a',y} \in \mathbb{R}^{m \times m}$, and the entry in the $y$-th column is $\frac{p_{a'}}{\sum_{y_j \neq y} p_{a',y_j}} \mathbb{I}[a =a' ]-\frac{p_a}{\sum_{y_j\neq y} p_{y_j}}$ except for the $y$-th row with all other elements set to $0$ for $\mathbf{D}^{a,1}_{a',y} \in \mathbb{R}^{m \times m}$.
\end{example}

\begin{example}
    For the multi-class fairness criterion Overall Accuracy Equality (OAE),
\begin{align*}
    \mathscr{D}_{OAE} = \max_{a \in\mathcal{A}} \left| \mathbb{P}(\widehat{Y}=Y \mid A=a) -  \mathbb{P}(\widehat{Y}=Y) \right|,
\end{align*}
where $\mathbb{P}(\widehat{Y}=Y \mid A=a)=\sum_{y\in [m]}\mathbf{C}_{y,y}^a$ and $\mathbb{P}(\widehat{Y}=Y)=\sum_{a'\in \mathcal{A}}\omega_{a'}\sum_{y\in [m]}\mathbf{C}_{y,y}^{a'}$.
Hence, we have
\begin{align*}
    \mathscr{D}_{OAE} = \max_{a \in\mathcal{A}} \left| \sum_{a'\in \mathcal{A}} \sum_{y\in [m]} \left(\mathbb{I}(a'=a) - \omega_{a'}\right)  \mathbf{C}_{y,y}^{a'} \right|
    =\max_{k = a \in \mathcal{A}}\left| \sum_{a' \in \mathcal{A}} \langle \mathbf{D}^{a',k} , \mathbf{C}^{a'} \rangle \right|,
\end{align*}
where $\mathbf{D}^{a',k} \in \mathbb{R}^{m \times m}$, and when $k=a$, the diagonal entries of the matrix are set to $\mathbb{I}(a'=a) - \omega_{a'}$, with all other elements set to $0$.
\end{example}

\section{Proofs and Discussions for Section~\ref{section-bayes-optimal-fair-classifier}}
\label{proofs-for-section-bayes-optimal-classifier}

\subsection{Discussion on Assumption~\ref{assumpt-1}}
\label{Discussion-on-Assumption}
(\textbf{Existence}).
Assumption~\ref{assumpt-1} requires the existence of a classifier that satisfies the fairness constraint  $|\mathscr{D}_k| = 0$. 
We now discuss the feasibility of this condition for the five fairness metrics mentioned above.
First, for the DP, EO, EOP, and MEO criteria, as described in the main text, it is clear that there exist naive classifiers $h(x) = \mathbf{e}_y$, for $y \in [m]$  and $\forall x \in \mathcal{X}$, that satisfy the fairness constraint $|\mathscr{D}_k| = 0$.
For the OAE criterion, a clear solution satisfying the constraint is the random classifier $h(x) = \left(\frac{1}{m}, \cdots, \frac{1}{m}\right) \in \Delta_m$ for all $x \in \mathcal{X}$.

\subsection{Proof of Theorem~\ref{theorem-bayes-optimal}}
\label{proof-of-theorem-bayes-optimal}

\subsubsection{Variational Formulation and Useful Lemmas}
For clarity of exposition, we first reformulate the Eq.~\ref{fair-problem-formulation} in the variational form.
Let $\eta_y(x,a):=\mathbb{P}(Y=y\mid X=x,A=a)$ be the attribute-aware Bayes-optimal score.

For $h \in \mathcal{H}:\mathcal{X} \to \Delta_m$,
\begin{equation*}
    \begin{split}
    \mathcal{R}(h)
    &= 1 - \sum_{a\in \mathcal{A}} \omega_a\langle \mathbf{I}, \mathbf{C}^a(h) \rangle 
    = 1 - \sum_{a \in \mathcal{A}} \omega_a \mathbb{E}_{X|A=a}\big[[\eta(X,a)]^\top h(X)\big],\\
    \mathscr{D}_k(h)
    &=\sum_{a \in \mathcal{A}}\langle\mathbf{D}^{a,k}, \mathbf{C}^a(h) \rangle  
    = \sum_{a \in \mathcal{A}} \mathbb{E}_{X|A=a} \big[ [\eta(X,a)]^\top \mathbf{D}^{a,k} h(X) \big],
    \end{split}
\end{equation*}
where $\mathbb{E}_{X|A=a}[f(x)]:= \int_{\mathcal{X}}f(x)d\mu_a(x)$.

The Bayes-optimal fair classification problem in~\eqref{fair-problem-formulation} can be reformulated as 
\begin{align}
    \label{variational-fair-problem}
    \begin{split}
    \min_{h \in \mathcal{H}} \quad &1 - \sum_{a \in \mathcal{A}} \omega_a \mathbb{E}_{X|A=a}\big[[\eta(X,a)]^\top h(X)\big],\\
    \textrm{s.t.} \quad &\left|\sum_{a \in \mathcal{A}} \mathbb{E}_{X|A=a} \big[ [\eta(X,a)]^\top \mathbf{D}^{a,k} h(X) \big] \right| \leq \xi, \ k \in \mathcal{K}.
    \end{split}
\end{align}

To prove Theorem~\ref{theorem-bayes-optimal}, we first introduce some useful definitions and lemmas.

\begin{definition}\label{set-addition-def}
Let $V$ be a real vector space and let $A, B \subseteq V$.  The sum of $A$ and $B$ is defined by
\begin{equation*}
  A + B := \{\,a + b \mid a \in A,\; b \in B\}.
\end{equation*}
\end{definition}
\begin{definition}\label{convex-hull-def}
\textit{(Convex hull.)} For a set $C \subset \mathbb{R}^n$, the convex hull $\mathrm{conv} \{C\}$ consists of all the convex combinations of elements of $C$:
$$
\mathrm{conv} \{C\}=\left\{\sum_{i=0}^p \alpha_i x_i \mid x_i \in C, \alpha_i \geq 0, \sum_{i=0}^p \alpha_i=1, p \geq 0\right\}
$$
\end{definition}
\begin{lemma}
\label{subgradient-lemma}
(Interchange rule for subdifferentials, see~\cite{lemma-subgradient-Bertsekas1973StochasticOP}.)
The subdifferential of the function $F(x)=\mathbb{E}\{f(x, \omega)\}$ at a point $x$ is given by
\begin{equation*}
\partial F(x)=\mathbb{E}\{\partial f(x, \omega)\}
\end{equation*} 
where $f(\cdot,\omega)$ is a real-value convex function and the set $\mathbb E\{\partial f(x, \omega)\}$ is defined as
\begin{equation*}
    \begin{split}
        \mathbb{E}\{\partial f(x, &\omega)\} := \int_{\Omega} \partial f(x, \omega) d \mathcal{P}(\omega) \\
        &= \bigg\{\theta^{*} \in \mathbb{R}^n : \theta^*=\int_{\Omega} \theta^{*}(\omega) d \mathcal{P}(\omega), \theta^{*}(\cdot): \text{measurable}, \theta^{*}(\omega) \in \partial f(x, \omega) \text{ a.e.} \bigg\}.
    \end{split}
\end{equation*}     
\end{lemma}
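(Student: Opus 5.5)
The plan is to prove the two inclusions separately. Throughout I assume the standing conditions under which the rule is stated in~\cite{lemma-subgradient-Bertsekas1973StochasticOP}: $f(\cdot,\omega)$ is finite and convex for every $\omega$, $f(y,\cdot)$ is measurable for every $y$, and $F(y)=\mathbb{E}\{f(y,\omega)\}$ is finite for all $y$ in a neighbourhood of $x$. The reverse inclusion $\partial F(x)\subseteq \mathbb{E}\{\partial f(x,\omega)\}$ will be obtained by comparing support functions. This reduces the set-valued identity to a scalar identity between directional derivatives.

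\textbf{Step 1 (easy inclusion).} Let $\theta^*(\cdot)$ be a measurable selection with $\theta^*(\omega)\in\partial f(x,\omega)$ almost everywhere. First I show it is integrable. For each $\omega$, $\theta^*(\omega)^\top e_i \le f(x+e_i,\omega)-f(x,\omega)$, and the same holds with $-e_i$ in place of $e_i$. Here the $e_i$ are scaled to lie in the neighbourhood where $F$ is finite. Hence $\|\theta^*(\omega)\|_\infty$ is dominated by an integrable function. Next, integrating the subgradient inequality $f(y,\omega)\ge f(x,\omega)+\theta^*(\omega)^\top(y-x)$ over $\omega$ gives $F(y)\ge F(x)+\theta^\top(y-x)$ with $\theta=\int\theta^*\,d\mathcal{P}$. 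Therefore $\mathbb{E}\{\partial f(x,\omega)\}\subseteq\partial F(x)$.

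\textbf{Step 2 (directional derivatives commute with expectation).} For a direction $d$, the difference quotient $q_t(\omega)=\bigl(f(x+td,\omega)-f(x,\omega)\bigr)/t$ is nonincreasing as $t\downarrow 0$, by convexity. It is bounded above by the integrable function $q_{t_0}$ and bounded below by $-\bigl(f(x-t_0 d,\omega)-f(x,\omega)\bigr)/t_0$, which is also integrable. Monotone convergence then yields $F'(x;d)=\mathbb{E}\{f'(x,\omega;d)\}$.

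\textbf{Step 3 (support functions).} Write $G:=\mathbb{E}\{\partial f(x,\omega)\}$. By Step 1, $G$ is convex and bounded. I claim its support function is $\sigma_G(d)=\mathbb{E}\{\sigma_{\partial f(x,\omega)}(d)\}=\mathbb{E}\{f'(x,\omega;d)\}$.
\begin{itemize}
\item The inequality $\sigma_G(d)\le\mathbb{E}\{f'(x,\omega;d)\}$ is immediate, since every selection satisfies $\theta^*(\omega)^\top d\le f'(x,\omega;d)$.
\item For the reverse inequality, I need a measurable selection $\theta^*_d(\omega)\in\arg\max_{\theta\in\partial f(x,\omega)}\theta^\top d$. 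The multifunction $\omega\mapsto\partial f(x,\omega)$ is closed-valued and measurable, because it is cut out by countably many inequalities $f(y,\omega)\ge f(x,\omega)+\theta^\top(y-x)$ over rational $y$. So the argmax multifunction is measurable as well, and the Kuratowski--Ryll-Nardzewski theorem supplies the selection.
\end{itemize}
Combining Step 2 with the standard fact $\sigma_{\partial F(x)}(d)=F'(x;d)$ for a convex function finite near $x$ gives $\sigma_G=\sigma_{\partial F(x)}$. Two closed convex sets with the same support function coincide, so the result follows once $G$ is known to be closed.

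\textbf{Main obstacle.} I expect the delicate points to be the measurable-selection argument in Step 3 and the closedness of $G$. For closedness I would use that the selections are uniformly dominated by an integrable bound, from Step 1. Given a sequence in $G$, I would extract a weakly convergent subsequence of the corresponding selections in $L^1$ via the Dunford--Pettis theorem. Then Mazur's lemma, together with the fact that each $\partial f(x,\omega)$ is closed and convex, shows that the limit selection is again almost everywhere in $\partial f(x,\omega)$. Its integral is the limit point, so $G$ is closed.
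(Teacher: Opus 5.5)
The paper gives no proof of this lemma. It imports the result from the cited Bertsekas (1973) paper, and the transcription omits the standing hypotheses under which it holds: measurability of $f(y,\cdot)$, and integrability of $f(y,\cdot)$ for $y$ near $x$.

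Your proposal is a correct, self-contained proof along the standard lines for this interchange rule. Monotone convergence of the convex difference quotients gives $F'(x;d)=\mathbb{E}\{f'(x,\omega;d)\}$. The remaining work is to show that the set of integrals of selections is a closed convex set with support function $\mathbb{E}\{f'(x,\omega;d)\}$. You do this by hand: a Kuratowski--Ryll-Nardzewski selection of the argmax face gives attainment, and Dunford--Pettis plus Mazur gives closedness.

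Both pieces are needed. Attainment of the support function in every direction does not by itself make a bounded convex set closed. A counterexample is a stadium with the four junction points between its flat sides and its arcs removed, since those points are extreme but not exposed. So you were right to treat closedness separately.

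Compared with the bare citation, your version makes the hypotheses explicit and shows they hold where the paper uses the lemma. There $f(\lambda,x)=\max_j\beta^\lambda_j(x)$ is finite and convex in $\lambda$, and measurable and bounded in $x$. This justifies the hard inclusion $\partial F(x)\subseteq\mathbb{E}\{\partial f(x,\omega)\}$, which is the one the paper actually needs to extract a representing classifier $h_{\lambda^*}$ from $0\in\partial H(\lambda^*)$.

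Two small points to tidy up:
\begin{itemize}
\item In Step 1, take $y$ outside the neighbourhood where $F$ is finite. Then $F(y)$ is still well defined in $(-\infty,+\infty]$, because $f(y,\cdot)$ is bounded below by the integrable function $f(x,\cdot)+\theta^*(\cdot)^\top(y-x)$. So the integrated subgradient inequality remains valid for every $y$.
\item In Step 3, keep the countable-intersection representation (closed half-spaces with measurable offsets) as your justification that $\omega\mapsto\partial f(x,\omega)$ is measurable. Graph measurability alone would need a complete measure space to give the weak measurability that Kuratowski--Ryll-Nardzewski requires.
\end{itemize}
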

\begin{lemma}
\label{conv-subgrad-lemma}
(Danskin's theorem, see~\cite{Variational-Analysis-RockWets98}.)
Let $f_1, \dots, f_m:$ $\mathbb{R}^n \rightarrow (-\infty, +\infty]$ be convex functions. Define $f(x) = \max \{ f_1(x), \dots, f_m(x) \}, \quad \forall x \in \mathbb{R}^n$.
For $ \theta \in \bigcap_{i=1}^m \text{dom} f_i $, define $ I(\theta) = \{ i \mid f_i(\theta) = f(\theta) \} $. Then,
$$
\partial f(\theta) = \mathrm{conv} \big\{ \partial f_i(\theta): i \in I(\theta)\big\}.
$$
where $\mathrm{conv}\{ \cdot\}$ indicates the convex hull operation.
\end{lemma}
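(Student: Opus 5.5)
The plan is to prove the two inclusions separately. I will work under the standing interpretation that $\theta$ lies in the interior of $\bigcap_{i}\mathrm{dom} f_i$, so that each $f_i$ is finite and continuous near $\theta$. This is the situation in which the lemma is used later: the functions $\lambda\mapsto\beta^\lambda_j(x)$ are affine, hence real-valued. At a boundary point of the domain the equality can fail, since subdifferentials there may be unbounded and the convex hull of their union need not be closed.

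\textbf{Inclusion $\supseteq$.} Fix $i\in I(\theta)$ and $g\in\partial f_i(\theta)$. For every $x$,
\begin{align*}
f(x)\ \ge\ f_i(x)\ \ge\ f_i(\theta)+g^\top(x-\theta)\ =\ f(\theta)+g^\top(x-\theta),
\end{align*}
where the last equality uses that $i$ is active. Hence $g\in\partial f(\theta)$, so $\partial f_i(\theta)\subseteq\partial f(\theta)$ for each active $i$. Since $\partial f(\theta)$ is convex, it contains the convex hull of the union of these sets.

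\textbf{Inclusion $\subseteq$, via support functions.} For each inactive $j\notin I(\theta)$ we have $f_j(\theta)<f(\theta)$. By continuity of $f_j$ and $f$ near $\theta$, this strict inequality persists on a neighbourhood of $\theta$. Therefore, for every direction $d$ and all small $t>0$, $f(\theta+td)=\max_{i\in I(\theta)} f_i(\theta+td)$. Subtracting $f(\theta)=f_i(\theta)$, dividing by $t$ and letting $t\downarrow0$ gives
\begin{align*}
f'(\theta;d)=\max_{i\in I(\theta)} f_i'(\theta;d).
\end{align*}
Exchanging the limit with a finite maximum is legitimate because each difference quotient is monotone in $t$ and the index set is finite.

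At interior points, the directional derivative of a convex function is the support function of its subdifferential: $f_i'(\theta;d)=\sigma_{\partial f_i(\theta)}(d)$ and $f'(\theta;d)=\sigma_{\partial f(\theta)}(d)$, with each $\partial f_i(\theta)$ nonempty, convex and compact. Set $C:=\mathrm{conv}\{\partial f_i(\theta): i\in I(\theta)\}$. The support function of a convex hull of a union is the maximum of the individual support functions, so $\sigma_{\partial f(\theta)}=\sigma_C$. A finite union of compact sets is compact, and in $\mathbb{R}^n$ its convex hull is again compact, so $C$ is closed and convex. Two closed convex sets with the same support function coincide, which gives $\partial f(\theta)=C$.

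\textbf{Main obstacle.} The delicate step is the directional-derivative identity together with the closedness of $C$. Both depend on $\theta$ being interior, which is what yields continuity of the $f_i$ near $\theta$ and boundedness of $\partial f_i(\theta)$. I would therefore state this interiority hypothesis explicitly, noting that it holds automatically for the finite-valued, affine-in-$\lambda$ functions to which the lemma is later applied.
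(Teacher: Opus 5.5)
Your argument is correct. The paper does not prove this lemma; it only cites Rockafellar--Wets, so your proof serves as a self-contained replacement, and it follows the standard textbook route. The easy inclusion comes from the subgradient inequality at an active index. The hard inclusion localizes to the active set, derives $f'(\theta;d)=\max_{i\in I(\theta)} f_i'(\theta;d)$, and then identifies two closed convex sets through their support functions. Every step holds at interior points. In particular, the compactness of $C$, since the convex hull of a compact set in $\mathbb{R}^n$ is compact, is what gives exact equality rather than equality up to closure.

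The interiority hypothesis you add is genuinely necessary, because the lemma as literally stated is false. With extended-valued $f_i$ and $\theta$ only assumed to lie in $\bigcap_i \mathrm{dom} f_i$, the formula can fail, and the failure is not just a closedness issue as your remark suggests. Take, in $\mathbb{R}^2$, $f_1,f_2$ to be the indicator functions of the closed unit disks centered at $(0,1)$ and $(0,-1)$, and $\theta=0$. Then $f=\max\{f_1,f_2\}$ is the indicator of $\{0\}$, so $\partial f(0)=\mathbb{R}^2$. Both indices are active, yet $\mathrm{conv}\{\partial f_1(0)\cup\partial f_2(0)\}=\{0\}\times\mathbb{R}$, which is already closed.

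What is missing at boundary points is a qualification condition, of the same kind the sum rule for normal cones requires; interiority of $\theta$ is one such condition. The paper only applies the lemma to $\lambda\mapsto\max_j\beta^\lambda_j(x)$, where each $\beta^\lambda_j(x)$ is affine and hence finite in $\lambda$. So your restricted version covers every use of the lemma in the paper.
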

\begin{lemma} (First-Order Optimality condition, see~\cite{Variational-Analysis-RockWets98}.)
\label{subgrad-optimal-lemma}
    Let $f:\mathbb{R}^d \to \mathbb{R}$ be a convex continuous function. We consider the minimizer $\theta^{*}$ of the function $f$ over the set $B$. Then for $\theta^{*}$ to be locally optimal it is necessary that 
    \begin{equation*}
        \partial f({\theta^{*}})+\mathcal{N}_B({\theta^{*}}) \ni 0,
    \end{equation*}
    where $\mathcal{N}_B$ denotes the normal cone of set $B$.
\end{lemma}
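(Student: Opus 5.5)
The plan is to reduce the constrained problem to an unconstrained one with an indicator function, and then split the subdifferential of the sum. Throughout we assume, as the statement implicitly does, that $B\subseteq\mathbb{R}^d$ is nonempty, closed and convex, so that the normal cone is
\begin{align*}
\mathcal{N}_B(\theta^*)=\{v\in\mathbb{R}^d:\langle v,\theta-\theta^*\rangle\le 0\ \ \forall\theta\in B\}.
\end{align*}
Write $\delta_B$ for the indicator of $B$: it is $0$ on $B$ and $+\infty$ off $B$. It is proper, convex and lower semicontinuous, and $\partial\delta_B(\theta^*)=\mathcal{N}_B(\theta^*)$ follows directly from the subgradient inequality.

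\textbf{Step 1 (local equals global).} If $\theta^*$ is a local minimizer of $f$ over $B$, it is a global one. For any $\theta\in B$ and small $t>0$, the point $\theta^*+t(\theta-\theta^*)$ lies in $B$ by convexity and in the neighbourhood where $\theta^*$ is minimal. Convexity of $f$ then gives
\begin{align*}
f(\theta^*)\le f(\theta^*+t(\theta-\theta^*))\le (1-t)f(\theta^*)+tf(\theta),
\end{align*}
hence $f(\theta^*)\le f(\theta)$. So $\theta^*$ globally minimizes $F:=f+\delta_B$ over $\mathbb{R}^d$, and Fermat's rule for convex functions gives $0\in\partial F(\theta^*)$.

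\textbf{Step 2 (sum rule).} We need $\partial(f+\delta_B)(\theta^*)=\partial f(\theta^*)+\mathcal{N}_B(\theta^*)$, with the sum in the sense of Definition~\ref{set-addition-def}. The inclusion $\supseteq$ is immediate: add the two subgradient inequalities. The inclusion $\subseteq$ is the main obstacle, and this is where continuity of $f$ is used. I would invoke the Moreau--Rockafellar theorem, whose qualification condition (one function finite and continuous at a point of the other's domain) holds because $f$ is finite and continuous on all of $\mathbb{R}^d$ and $B\neq\emptyset$.

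Should a self-contained argument be preferred, I would instead use directional derivatives directly in the minimization.
\begin{itemize}
\item Since $f$ is finite and convex, its directional derivative $f'(\theta^*;d)=\max_{g\in\partial f(\theta^*)}\langle g,d\rangle$ exists, and $\partial f(\theta^*)$ is nonempty and compact.
\item Optimality along feasible directions gives $f'(\theta^*;d)\ge 0$ for every $d$ in the cone $T=\mathrm{cone}(B-\theta^*)$, hence for every $d$ in its closure $\overline{T}$.
\item Define $\varphi(d,g):=\langle g,d\rangle$ on $\overline{T}\times\partial f(\theta^*)$. This gives $\inf_{d\in\overline{T}}\max_{g\in\partial f(\theta^*)}\varphi(d,g)\ge0$.
\item $\varphi$ is bilinear and $\partial f(\theta^*)$ is compact and convex, so a minimax theorem (e.g.\ Sion's) lets us swap the order. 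Hence $\max_{g}\inf_{d\in\overline{T}}\langle g,d\rangle\ge0$.
\item The inner infimum is $0$ if $g\in\overline{T}^{\circ}$ and $-\infty$ otherwise, since $\overline{T}$ is a cone. So some $g\in\partial f(\theta^*)$ lies in the dual cone $\overline{T}^{\circ}$.
\item The dual cone $\overline{T}^{\circ}$ consists of the vectors $g$ with $\langle g,\theta-\theta^*\rangle\ge0$ for all $\theta\in B$, which is exactly $-\mathcal{N}_B(\theta^*)$. Thus $v:=-g\in\mathcal{N}_B(\theta^*)$ and $0=g+v\in\partial f(\theta^*)+\mathcal{N}_B(\theta^*)$.
\end{itemize}

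\textbf{Step 3 (conclusion).} Combining Steps 1 and 2 yields $0\in\partial f(\theta^*)+\mathcal{N}_B(\theta^*)$, which is the claimed necessary condition.

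I expect the swap of $\inf$ and $\max$ in Step 2 to be the only delicate point, because $\overline{T}$ is unbounded. If the minimax theorem is awkward to apply there, I would restrict $d$ to $\overline{T}\cap\{\|d\|\le1\}$, which is compact and convex. The same conclusion follows: the restricted inner infimum is $0$ exactly when $g$ lies in the dual cone, and is strictly negative otherwise.
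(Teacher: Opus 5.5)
The paper gives no proof of this lemma; it only cites Rockafellar--Wets, where it is Theorem~8.15. In that source $B$ is an arbitrary closed set and $\mathcal{N}_B$ is the general (limiting) normal cone. The argument there is variational:
\begin{itemize}
\item a local minimizer of $f$ on $B$ is a local minimizer of $f+\delta_B$;
\item Fermat's rule puts $0$ in the limiting subdifferential $\partial(f+\delta_B)(\theta^*)$;
\item the limiting sum rule gives $\partial(f+\delta_B)(\theta^*)\subseteq\partial f(\theta^*)+\mathcal{N}_B(\theta^*)$. Its qualification condition $\partial^\infty f(\theta^*)=\{0\}$ holds automatically, because a finite convex function is locally Lipschitz.
\end{itemize}
Your route is the classical convex-analysis one: Fermat's rule for the convex function $f+\delta_B$ plus Moreau--Rockafellar, or the max formula for $f'(\theta^*;\cdot)$ plus a minimax step. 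Under your hypothesis that $B$ is nonempty and convex it is correct and elementary, and closedness of $B$ is not even needed. The step you flag as delicate is not. Sion's theorem needs only one of the two sets to be compact, and $\partial f(\theta^*)$ is, so no truncation of $\overline{T}$ is required. Equivalently, strictly separate the compact convex set $\partial f(\theta^*)$ from the closed convex cone $-\mathcal{N}_B(\theta^*)$ to produce a $d\in\overline{T}$ with $f'(\theta^*;d)<0$.

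The real issue is scope. Convexity of $B$ is not implicit in the statement; the phrase ``locally optimal'' is there precisely because the source allows nonconvex $B$. It is also load-bearing in your proof: Step~1, convexity of $T$, convexity of $\delta_B$ and Moreau--Rockafellar all use it. Without it, the conclusion fails for the cone you define.

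Take $f(x,y)=|x|+y$ and $B=\{(x,y):y\ge -|x|/2\}$. The origin minimizes $f$ on $B$, since $f\ge |x|/2\ge 0$ there, and $\partial f(0)=[-1,1]\times\{1\}$. But $B$ contains $(\pm1,-\tfrac12)$ and $(0,1)$, so $\{v:\langle v,\theta\rangle\le 0\ \forall\theta\in B\}=\{0\}$ and $0\notin\partial f(0)+\mathcal{N}_B(0)$ in your sense. The limiting normal cone, by contrast, contains $(\tfrac12,-1)$, a limit of outward normals along the branch $x<0$, and $(-\tfrac12,1)+(\tfrac12,-1)=0$.

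So your proposal proves the convex-$B$ case, not the lemma as stated. Covering general closed $B$ requires the limiting normal cone and the nonsmooth calculus above. For the paper this is harmless: every invocation of the lemma minimizes a finite convex dual function over $\mathbb{R}^K$ or over the $\ell_1$-ball $\Lambda$. Both are convex, so your version covers every use the paper makes of it.
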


\subsubsection{Complete Proof of Theorem~\ref{theorem-bayes-optimal}}

For the primal problem Eq.~(\ref{variational-fair-problem}), it follows that the formal Lagrange function can be written as
\begin{equation*}
    \label{lagrangian-function-derivation}
    \begin{split}
    \mathcal{L}(h,&\lambda^{(1)}, \lambda^{(2)}) \\
    &= 1-\sum_{a\in \mathcal{A}} \omega_a\mathbb{E}_{X|A=a}\left[[\eta(X,a)]^\top h(X)\right] + \sum_{k=1}^{K}(\lambda^{(1)}_k - \lambda^{(2)}_k) \sum_{a \in \mathcal{A}} \mathbb{E}_{X|A=a}\left[[\eta(X,a)]^\top \mathbf{D}^{a,k} h(X)\right]\\
    & \quad - \xi \sum_{k=1}^{K}(\lambda^{(1)}_k + \lambda^{(2)}_k)\\
    &= 1+ \sum_{a\in \mathcal{A}} \mathbb{E}_{X|A=a}\left[[\eta(X,a)]^\top \left(\sum_{k=1}^{K}(\lambda^{(1)}_k - \lambda^{(2)}_k)\mathbf{D}^{a,k} - \omega_a \mathbf{I} \right) h(X)\right]- \xi \sum_{k=1}^{K}(\lambda^{(1)}_k + \lambda^{(2)}_k),\\
    \end{split}
\end{equation*}
where $\lambda^{(1)}, \lambda^{(2)} \in \mathbb{R}^{K}_{\geq0}$ are the dual parameters. 
We first characterize the explicit form of the dual problem.

\textbf{(Duality).}
The dual problem can be written as 
\begin{align*}
    \max_{\lambda^{(1)}, \lambda^{(2)}\in \mathbb{R}^{K}_{\geq0}} \min_{h \in \mathcal{H}} L(h,\lambda^{(1)}, \lambda^{(2)}).
\end{align*}
With $\mathbf M(a,\lambda)= \mathbf{I}-\frac{1}{\omega_a}\sum_{k=1}^{K}(\lambda^{(1)}_k - \lambda^{(2)}_k)\mathbf{D}^{a,k}$, after removing the terms unrelated to the classifier $h$, the inner problem $ \min_{h \in \mathcal{H}} L(h,\lambda^{(1)}, \lambda^{(2)})$ turns to
\begin{align*}
     \max_{h \in \mathcal{H}} &\sum_{a\in \mathcal{A}} \omega_a\mathbb{E}_{X|A=a}\left[[\eta(X,a)]^\top \mathbf{M}(a,\lambda) h(X)\right].
\end{align*}
Leveraging the definition of conditional expectation and the law of total expectation, the inner problem can be converted into a solvable form:
\begin{align*}
     &\sum_{a\in \mathcal{A}} \omega_a\mathbb{E}_{X|A=a}\left[[\eta(X,a)]^\top \mathbf{M}(a,\lambda) h(X)\right]\\
     &= \sum_{a\in \mathcal{A}}P(A=a) \mathbb{E}_{X|A=a}\left[[\eta(X,a)]^\top \mathbf{M}(a,\lambda) h(X)\right]\\
    &=  \mathbb{E}_{A} \left[\mathbb{E}_{X|A}\left[[\eta(X,A)]^\top \mathbf{M}(A,\lambda) h(X)\right]\right])\\
    &=  \mathbb{E}_{X,A} \left[[\eta(X,A)]^\top \mathbf{M}(A,\lambda) h(X)\right]\\
    &=  \mathbb{E}_{X} \left[\mathbb{E}_{A|X} \left[[\eta(X,A)]^\top \mathbf{M}(A,\lambda) h(X)\right]\right]\\
    &= \mathbb{E}_{X}\left[\sum_{a\in \mathcal{A}}{P(A=a|X)}[\eta(X,a)]^\top \mathbf{M}(a,\lambda) h(X)\right]\\
    &= \mathbb{E}_{X}\left[ \left( \sum_{a\in \mathcal{A}}{P(A=a|X)}[\eta(X,a)]^\top \mathbf{M}(a,\lambda) \right) h(X)\right].
\end{align*}
Given $(\lambda^{(1)},\lambda^{(2)})$, it is explicit that the inner problem has a solution
\begin{align*}
    h(x) \in \mathrm{conv} \left\{ e_y: y \in \arg \max_{i \in [m]} \left( \sum_{a\in \mathcal{A}}{P(A=a|X)}[\eta(X,a)]^\top \mathbf{M}(a,\lambda) \right)_i \right\}
\end{align*}
The definition of convex hull ($\mathrm{conv}$) follows Def.~\ref{convex-hull-def}.
Thus, we can derive the dual problem as $\min_{\lambda^{(1)}, \lambda^{(2)}\in \mathbb{R}^{K}_{\geq0}}H(\lambda^{(1)}, \lambda^{(2)})$, and
\begin{align*}
     H(\lambda^{(1)}, \lambda^{(2)}) = \mathbb{E}_X \left[ \max_{i \in [m]}\left( \sum_{a\in \mathcal{A}}{\mathbb{P}(A=a|X)}[\eta(X,a)]^\top \mathbf {M}(\lambda,a) \right)_i \right] + \xi \|\lambda^{(1)} + \lambda^{(2)} \|_1.
\end{align*}
Since the dual problem minimizes the $H(\lambda^{(1)}, \lambda^{(2)})$ subject to $\lambda^{(1)}, \lambda^{(2)} \geq 0$ element-wise, it follows that the optimal dual parameter $(\lambda^{(1)}, \lambda^{(2)})$ satisfies $\lambda^{(1)}_k \lambda^{(2)}_k=0$, for every $k \in \mathcal{K}$; that is, if $\lambda^{(1)}_k >0 $, then $\lambda^{(2)}_k =0 $, and if $\lambda^{(2)}_k >0 $, then $\lambda^{(1)}_k =0 $.
This result can be derived via a proof by contradiction: 
If $\lambda^{(1)}_k> \lambda^{(2)}_k > 0$ in the optimal dual parameter $(\lambda^{(1)}, \lambda^{(2)})$, defining $\lambda^{(1)*}_k:= \lambda^{(1)}_k-\lambda^{(2)}_k,\lambda^{(2)*}_k:=0$, it is explicit that, with other elements unchanged, the pair $(\lambda^{(1)*}_k,\lambda^{(2)*}_k)$ yields a strictly smaller value of the dual function $H$ compared to $(\lambda^{(1)}_k, \lambda^{(2)}_k)$, which implicates that the parameter $(\lambda^{(1)}, \lambda^{(2)})$ is not optimal.
Therefore, denoting $\lambda=\lambda^{(1)}-\lambda^{(2)}\in \mathbb{R}^{K}$, the dual problem can be reduced to $\min_{\lambda\in \mathbb{R}^{K}}H(\lambda)$, and
\begin{align*}
     H(\lambda) = \mathbb{E}_X \left[ \max_{i \in [m]}\left( \sum_{a\in \mathcal{A}}{\mathbb{P}(A=a|X)}[\eta(X,a)]^\top \mathbf {M}(\lambda,a) \right)_i \right] + \xi \|\lambda \|_1,
\end{align*}
where $\mathbf M(a,\lambda)=\omega_a \mathbf{I}-\sum_{k=1}^{K}\lambda_k\mathbf{D}^{a,k}$.

Now, we discuss whether the primal problem is equivalent to the dual problem, i.e., whether strong duality holds. 
The standard approach is to apply relaxed Slater's condition or the Fenchel-Rockafellar theorem, which will be discussed later.
However, since our domain $\mathcal{H}:\mathcal{X}\to \Delta_m$ is not even a linear vector space, we cannot rely on those theorems to prove strong duality. 
Thanks to its closed-form expression, we directly show that there is no duality gap between the dual and primal problems.

\textbf{(Boundness.)} First let us show that any optimal $\lambda^*$ is bounded. 
Since there exists a classifier that is feasible for $\xi =0$, the feasible set of the primal problem is non-empty for every $\xi >0$.
With a slight abuse of notation, considering one of the optimal classifiers $h^*_{\xi}$ under fairness constraint $\xi >0$ and the classifier $h'$ that is feasible for $\xi = 0$, the duality gap between the primal and dual problem tells us that 
\begin{align*}
    \max_{\lambda\in \mathbb{R}^{K}} \min_{h\in \mathcal{H}} L(h,\lambda) =  \max_{\lambda\in \mathbb{R}^{K}} 1 - H(\lambda) \leq \mathcal{R}(h^*_{\xi}) \leq \mathcal{R}(h'),
\end{align*}
Note that given $\lambda$, the solution to the inner optimization does not depend on $\xi$. Hence, for $\xi >0$, 
\begin{align*}
     H(\lambda) = \mathbb{E}_X \left[ \max_{i \in [m]}\left( \sum_{a\in \mathcal{A}}{\mathbb{P}(A=a|X)}[\eta(X,a)]^\top \mathbf {M}(\lambda,a) \right)_i \right] + \xi \|\lambda \|_1 \geq 1 - \mathcal{R}(h').
\end{align*}
Taking the limit $\xi \to 0$, we obtain that $H(\lambda) \geq 1 + \xi \|\lambda \|_1 - \mathcal{R}(h')$.
As $\|\lambda \|_1 \to \infty$, it follows that $H(\lambda) \to \infty$, which contradicts the objective of minimizing $H(\lambda)$. Therefore, we obtain that the parameter $\lambda$ must be $L_1$-bounded.

\textbf{(Feasibility.)} Next, it is necessary to prove that the solution induced by the dual problem is feasible for the primal problem.
Note that we have denoted
\begin{align*}
    \beta^{\lambda}(x):=\sum_{a\in \mathcal{A}}\mathbb{P}(A=a|X=x)[\eta(x,a)]^\top \mathbf {M}(\lambda,a) \in \mathbb{R}^m.
\end{align*}
To present the proof more clearly, we consider the sub-differential of $H(\lambda)$ with respect to the $k$-th element of $\lambda$. Since $H(\lambda)$ is convex to parameters $\lambda$, the sub-differential is given by
\begin{align*}
    \frac{\partial}{\partial \lambda_k} & H(\lambda) =\frac{\partial}{\partial \lambda_k}\mathbb{E}_X \left[ \max_{i \in [m]}\left( \beta^{\lambda}(X) \right)_i \right] + \xi \partial | \lambda_k |\\
    &=\mathbb{E}_X \left[ \frac{\partial}{\partial \lambda_k}\max_{i \in [m]}\left( \beta^{\lambda}(X) \right)_i \right] + \xi \partial | \lambda_k |\\
    &= \left\{ \int_{\mathcal{X}} \mathrm{conv}\left( - \sum_{a\in \mathcal{A}}\mathbb{P}(A=a|X=x)[\eta(x,a)]^\top \mathbf{D}^{a,k} e_i: i \in \arg\max_i(\beta^{\lambda}(x)) \right)d\mathbb{P}(x) \right\}  + \xi \partial | \lambda_k |.
\end{align*}
The second equation is due to Lemma~\ref{subgradient-lemma}, and the third equation is by Lemma~\ref{conv-subgrad-lemma}. The addition here is defined by Def.~\ref{set-addition-def}.
Let the probabilistic classifier $h_{\lambda}(x) \in \mathrm{conv} \{ e_y: y \in \arg \max_{i \in [m]} \beta^\lambda (x) \}$. Due to the linearity, the sub-differential for $\lambda_k$ can be formulated as
\begin{align*}
    &\frac{\partial}{\partial \lambda_k} H(\lambda)\\
    & \ = \Bigg\{ -\int_{\mathcal{X}} \sum_{a\in \mathcal{A}}\mathbb{P}(A=a|X=x)[\eta(x,a)]^\top \mathbf{D}^{a,k} h_{\lambda}(x)d\mathbb{P}(x): h_{\lambda}(x) \in \mathrm{conv} \{ e_y: y \in \arg \max_{i \in [m]} \beta^\lambda (x) \}  \Bigg\} + \xi \partial | \lambda_k |\\
    & \ =\left\{ -\mathbb{E}_X \left[ \sum_{a\in \mathcal{A}}\mathbb{P}(A=a|X)[\eta(x,a)]^\top \mathbf{D}^{a,k} h_{\lambda}(x)\right]: h_{\lambda}(x) \in \mathrm{conv} \{ e_y: y \in \arg \max_{i \in [m]} \beta^\lambda (x) \}  \right\} + \xi \partial | \lambda_k |\\
    & \ =\left\{ -\mathscr{D}_k(h_{\lambda}(x)): h_{\lambda}(x) \in \mathrm{conv} \{ e_y: y \in \arg \max_{i \in [m]} \beta^\lambda (x) \}  \right\} + \xi \partial | \lambda_k |.
\end{align*}
Note that the choice of $h_{\lambda}$ does not depend on $k\in \mathcal{K}$. Hence, we obtain the sub-differential of $H$ with respect to $\lambda$,
\begin{align*}
    &\frac{\partial}{\partial \lambda} H(\lambda)=\left\{ -\mathscr{D}_{1:K}(h_{\lambda}(x)): h_{\lambda}(x) \in \mathrm{conv} \{ e_y: y \in \arg \max_{i \in [m]} \beta^\lambda (x) \}  \right\} + \xi \partial \| \lambda \|_1,
\end{align*}
where $\mathscr{D}_{1:K}$ denotes a vector of length $k$, with the $k$-th element given by $\mathscr{D}_{k}$.
This formulation naturally clarifies that each sub-gradient in $\partial H$ can be characterized by dual parameter $\lambda$ and decision $h_\lambda$.
By Lemma~\ref{subgrad-optimal-lemma}, the optimality condition for the dual problem implies that $0 \in \partial H(\lambda^*)$. 
In other words, there exists at least one decision $h^*_{\lambda^*}(x), x\in \mathcal{X}$ that meets the optimality condition.
Given optimal $\lambda^*$ and corresponding $h^*_{\lambda^*}$, if $\lambda^*_k > 0$, the $k$-th sub-differential reduces to $-\mathscr{D}_k(h^*_{\lambda^*})+\xi$, which implies $\mathscr{D}_k(h^*_{\lambda^*})=\xi$. Similarly, if $\lambda^*_k < 0$, the $k$-th sub-differential reduces to $-\mathscr{D}_k(h^*_{\lambda^*})-\xi$, which implies $\mathscr{D}_k(h^*_{\lambda^*})=-\xi$. Besides, if $\lambda^*_k = 0$, it follows that $0 \in -\mathscr{D}_k(h^*_{\lambda^*}) + [-\xi,\xi]$, namely $\mathscr{D}_k(h^*_{\lambda^*}) \in [-\xi,\xi]$. 

Overall, we have shown that for all $k \in \mathcal{K}$, $|\mathscr{D}_{k}(h^*_{\lambda^*})| \leq \xi$, which implies the feasibility.

\textbf{(Optimality.)} The last step is to prove that the classifier $h^*_{\lambda^*}$ above is the optimal solution of the primal problem.
Defining $(\lambda^{*(1)}_k, \lambda^{*(2)}_k)$ by $\lambda^{*(1)}_k:= |\lambda^*_k|\mathbb{I}(\lambda^*_k > 0)$ and $\lambda^{*(2)}_k:= |\lambda^*_k| \mathbb{I}(\lambda^*_k < 0)$, it is obvious that the pair also satisfies the optimality conditions for the dual problem given $h^*_{\lambda^*}$.
From the proof above, we can obtain that, for $\forall k \in \mathcal{K}$, 
\begin{align*}
    (\lambda^{(1)}_{k}-\lambda^{(2)}_{k}) \mathscr{D}_k(h^*_{\lambda^*})  -(\lambda^{(1)}_{k}+\lambda^{(2)}_{k}) \xi=0,
\end{align*}
which satisfies the optimality conditions for the dual solution of the constrained optimization problem.
Consequently, the Lagrangian function equals the risk function when plugging in the optimal classifier, $\mathcal{L}(h^*_{\lambda^*}, \lambda^{*(1)},\lambda^{*(2)})=\mathcal{R}(h^*_{\lambda^*})$.
For any other classifier $h' \in \mathcal{H}$ that satisfies the fairness constraints, denoting its corresponding dual parameter that maximizes the dual problem as $(\lambda^{'(1)},\lambda^{'(2)})$, it can be deduced that
\begin{align*}
    \mathcal{L}(h^*_{\lambda^*}, \lambda^{*(1)},\lambda^{*(2)}) \leq \mathcal{L}( {h}',\lambda^{'(1)},\lambda^{'(2)}) \leq \mathcal{R}({h}').
\end{align*}
Therefore, we arrive at
\begin{align*}
    \mathcal{R}(h^*_{\lambda^*})=\mathcal{L}(h^*_{\lambda^*}, \lambda^{*(1)},\lambda^{*(2)}) \leq  \mathcal{R}({h}').
\end{align*}
This completes the proof. \hfill \ensuremath{\square}

\subsection{Proof of theorem~\ref{theorem-bayes-optimal-entropy-relaxed}} \label{proof-of-theorem-bayes-optimal-entropy-relaxed}
\subsubsection{Useful Lemma}
\begin{lemma}
\label{gibbs-point-wise-optimal-lemma}
Let $\alpha \in \mathbb{R}^m$ be a vector that satisfies the constraints $\sum_{i=1}^m \alpha_i = 1$ and $\alpha_i \geq 0$. Consider the optimization problem:
\begin{align*}
    \max_{\alpha \in \mathbb{R}^m} f(\alpha)= \sum_{i=1}^m \beta_i \alpha_i - \rho \alpha_i \log \alpha_i
\quad \text{s.t.} \quad \sum_{i=1}^m \alpha_i = 1, \quad \alpha_i \geq 0,
\end{align*}
where $\rho > 0$ is a constant. The solution to this problem is given by:
\begin{align*}
\alpha_i^* = \frac{e^{\frac{\beta_i}{\rho}}}{\sum_{j=1}^m e^{\frac{\beta_j}{\rho}}}, \quad \text{for } \quad i = 1, \dots, m.
\end{align*}
The maximum value of the objective function is:
\begin{align*}
 f(\alpha^*) = \rho \log\left(\sum_{j=1}^m e^{\frac{\beta_j}{\rho}}\right).
\end{align*}
\end{lemma}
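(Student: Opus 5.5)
The plan is to prove Lemma~\ref{gibbs-point-wise-optimal-lemma} as a strictly concave maximization over the simplex $\Delta_m$. I will first establish existence and uniqueness of the maximizer, and then identify it using a Gibbs-type (KL divergence) inequality. That route avoids handling KKT multipliers for the boundary constraints $\alpha_i\ge 0$.

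\textbf{Existence and uniqueness.} With the convention $0\log 0:=0$, the function $f$ is continuous on the compact set $\Delta_m$, so a maximizer exists. Since $t\mapsto t\log t$ is strictly convex on $[0,\infty)$ and $\rho>0$, the objective $f$ is strictly concave, so the maximizer is unique.

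\textbf{Identifying the maximizer.} Set $Z:=\sum_{j=1}^m e^{\beta_j/\rho}$ and $\alpha^*_i:=e^{\beta_i/\rho}/Z$. Each $\alpha^*_i$ is strictly positive, so $\alpha^*$ lies in the interior of $\Delta_m$, and $\beta_i=\rho\log\alpha^*_i+\rho\log Z$. Substituting this into $f$ for an arbitrary $\alpha\in\Delta_m$ and using $\sum_i\alpha_i=1$ gives
\begin{align*}
f(\alpha)=\rho\log Z+\rho\sum_{i=1}^m\alpha_i\log\frac{\alpha^*_i}{\alpha_i}=\rho\log Z-\rho\,\mathrm{KL}(\alpha\,\|\,\alpha^*).
\end{align*}
Terms with $\alpha_i=0$ contribute zero under the $0\log 0$ convention, so the identity holds on the whole simplex.

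\textbf{Conclusion.} By Gibbs' inequality, which follows from Jensen's inequality applied to the concave $\log$, we have $\mathrm{KL}(\alpha\|\alpha^*)\ge 0$, with equality if and only if $\alpha=\alpha^*$. Hence $f(\alpha)\le\rho\log Z$ for every $\alpha\in\Delta_m$, and equality holds exactly at $\alpha=\alpha^*$. This simultaneously gives the stated maximizer and the maximal value $f(\alpha^*)=\rho\log\sum_{j=1}^m e^{\beta_j/\rho}$.

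\textbf{Main obstacle and alternative route.} The only delicate point is the boundary of the simplex, where $\log\alpha_i$ is undefined. The KL identity handles this cleanly through the $0\log 0$ convention. An alternative is the Lagrangian stationarity condition $\beta_i-\rho(\log\alpha_i+1)-\nu=0$, which yields the softmax form directly. That route, however, requires first showing the maximizer is interior, which follows because the derivative of $-\rho t\log t$ tends to $+\infty$ as $t\to 0^+$, so mass can always be profitably moved onto any zero coordinate. I would present the KL argument as the main proof and mention the stationarity computation only as a consistency check.
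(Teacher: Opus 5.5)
Your proof is correct, and it takes a different route from the paper. The paper writes a Lagrangian with a multiplier $\lambda$ for $\sum_i\alpha_i=1$ and multipliers $\mu_i$ for $\alpha_i\ge 0$. It then sets $\beta_i-\rho(1+\log\alpha_i)+\lambda=0$, silently dropping the $\mu_i$, solves for $\alpha_i\propto e^{\beta_i/\rho}$, normalizes, and substitutes back to get the value. That is a formal stationarity computation. It implicitly assumes the maximizer is interior, so that complementary slackness removes the $\mu_i$. It also never invokes concavity to conclude that the stationary point is a global maximum rather than merely a critical point.

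Your identity $f(\alpha)=\rho\log Z-\rho\,\mathrm{KL}(\alpha\,\|\,\alpha^*)$ is a global certificate. In one step it gives the upper bound on all of $\Delta_m$, boundary included via $0\log 0=0$. It also identifies the maximizer, proves uniqueness, and gives the optimal value, with no multipliers and no interiority argument. This is exactly the Gibbs variational principle, which the paper mentions only in passing when the lemma is applied in the proof of Theorem~\ref{theorem-bayes-optimal-entropy-relaxed}.

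What the paper's route buys is that it \emph{derives} the softmax form, whereas yours needs the candidate $\alpha^*$ in hand before it can certify it. A minor point: your existence-and-uniqueness paragraph becomes redundant once the KL identity is established. Equality in Gibbs' inequality already forces $\alpha=\alpha^*$, because $\alpha^*$ has full support, so that paragraph can serve only as a sanity check.
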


\textit{Proof of Lemma~\ref{gibbs-point-wise-optimal-lemma}.}
We begin by defining the Lagrangian of the problem. The Lagrangian for this constrained optimization problem is:
\begin{align*}
\mathcal{L}(\alpha, \lambda, \mu) = \sum_{i=1}^m \left( \beta_i \alpha_i - \rho \alpha_i \log \alpha_i \right) + \lambda \left( \sum_{i=1}^m \alpha_i - 1 \right) + \sum_{i=1}^m \mu_i \alpha_i,
\end{align*}
where $\lambda$ and $\mu_i$ are Lagrange multipliers associated with the constraints $\sum_{i=1}^m \alpha_i = 1$ and $\alpha_i \geq 0$, respectively.
To find the optimal solution, we compute the partial derivative of the Lagrangian with respect to $\alpha_i$ and set it equal to zero,
\begin{align*}
\frac{\partial \mathcal{L}}{\partial \alpha_i} = \beta_i - \rho(1 + \log \alpha_i) + \lambda = 0.
\end{align*}
Solving for $\log \alpha_i$, we get
\begin{align*}
\log \alpha_i = \frac{\beta_i + \lambda - \rho}{\rho}.
\end{align*}
Exponentiating both sides, 
$$
\alpha_i = e^{\frac{\beta_i + \lambda - \rho}{\rho}} = C e^{\frac{\beta_i}{\rho}},
$$
where $C = e^{\frac{\lambda - \rho}{\rho}}$ is a constant that we will determine using the constraint $\sum_{i=1}^m \alpha_i = 1$.
Next, we impose the constraint $\sum_{i=1}^m \alpha_i = 1$,
\begin{align*}
\sum_{i=1}^m \alpha_i = \sum_{i=1}^m C e^{\frac{\beta_i}{\rho}} = C \sum_{i=1}^m e^{\frac{\beta_i}{\rho}} = 1.
\end{align*}
Solving for $C$, we get
\begin{align*}
C = \left( \sum_{i=1}^m e^{\frac{\beta_i}{\rho}} \right)^{-1}.
\end{align*}
Thus, the optimal solution for $\alpha_i$ is:
\begin{align*}
\alpha_i^* = \frac{e^{\frac{\beta_i}{\rho}}}{\sum_{j=1}^m e^{\frac{\beta_j}{\rho}}}.
\end{align*}
Finally, substituting this solution into the objective function, the maximum value is:
\begin{align*}
f(\alpha^*) = \rho \log \left( \sum_{j=1}^m e^{\frac{\beta_j}{\rho}} \right).
\end{align*}
%
%
Thus, the proof is complete. \hfill \ensuremath{\square}

\subsubsection{Complete Proof of Theorem~\ref{theorem-bayes-optimal-entropy-relaxed}}
Denoting $\rho_a(x):=\mathbb{P}(A=a\mid X=x)$, as shown in the proof of Theorem~\ref{theorem-bayes-optimal}, the optimization objective can be formulated as
\begin{align}
    \label{entropic-regularized-variational-problem}
    \begin{split}
    \min_{h \in \mathcal{H}} \quad &1 - \mathbb{E}_{X}\bigg[\sum_{a \in \mathcal{A}} \rho_a(X) [\eta(X,a)]^\top h(X)\bigg] + \tau \mathbb{E}_X \left[\sum_{i=1}^m h_i(X)\log h_i(X)\right],\\
    \textrm{s.t.} \quad &\left|\mathbb{E}_{X} \bigg[\sum_{a \in \mathcal{A}}\frac{1}{\omega_a} \rho_a(X) [\eta(X,a)]^\top \mathbf{D}^{a,k} h(X) \bigg] \right| \leq \xi, \ k \in \mathcal{K}.
    \end{split}
\end{align}
It follows that the formal Lagrange function can be written as
\begin{equation*}
    \label{entropic-regularized-lagrangian-function}
    \begin{split}
    L(h,&\lambda^{(1)}, \lambda^{(2)}) \\
    &= 1+ \mathbb{E}_{X}\left[\sum_{a\in \mathcal{A}} \rho_a(X)[\eta(X,a)]^\top \left(\frac{1}{\omega_a}\sum_{k=1}^{K}(\lambda^{(1)}_k - \lambda^{(2)}_k)\mathbf{D}^{a,k} - \mathbf{I} \right) h(X)\right]\\
    & \quad + \tau \mathbb{E}_X \left[\sum_{i=1}^m h_i(X)\log h_i(X)\right] - \xi \sum_{k=1}^{K}(\lambda^{(1)}_k + \lambda^{(2)}_k)\\
    &= 1 - \mathbb{E}_{X}\left[\sum_{a\in \mathcal{A}} \rho_a(X)[\eta(X,a)]^\top \mathbf{M}(a,\lambda) h(X)- \tau \sum_{i=1}^m h_i(X)\log h_i(X)\right] - \xi \sum_{k=1}^{K}(\lambda^{(1)}_k + \lambda^{(2)}_k),\\
    &= 1 - \mathbb{E}_{X}\left[(\beta^\lambda(X))^\top h(X)- \tau \sum_{i=1}^m h_i(X)\log h_i(X)\right] - \xi \sum_{k=1}^{K}(\lambda^{(1)}_k + \lambda^{(2)}_k),\\
    \end{split}
\end{equation*}
where $\mathbf M(a,\lambda):= \mathbf{I}-\frac{1}{\omega_a}\sum_{k=1}^{K}(\lambda^{(1)}_k - \lambda^{(2)}_k)\mathbf{D}^{a,k}$, and $\lambda^{(1)}, \lambda^{(2)} \in \mathbb{R}^{K}_{\geq0}$ are the dual parameters. 
We first characterize the explicit form of the dual problem.

\textbf{(Duality).}
The dual problem can be written as 
$$    \max_{\lambda^{(1)}, \lambda^{(2)}\in \mathbb{R}^{K}_{\geq0}} \min_{h \in \mathcal{H}} L(h,\lambda^{(1)}, \lambda^{(2)}).$$
Considering the inner optimization, after removing the terms unrelated to the classifier $h$, the inner problem $ \min_{h \in \mathcal{H}} L(h,\lambda^{(1)}, \lambda^{(2)})$ turns to
\begin{align*}
     \max_{h \in \mathcal{H}} \quad \mathbb{E}_{X}\left[\sum_{i=1}^m \beta^\lambda_i(X) h_i(X)- \tau h_i(X)\log h_i(X)\right].
\end{align*}
By Lemma~\ref{gibbs-point-wise-optimal-lemma}, it is explicit that there exists an optimal solution
\begin{align*}
    h_i^{\star}(x)=\frac{\exp \left(\beta^{\lambda}_i(x) / \tau\right)}{\sum_{j=1}^m \exp \left(\beta^{\lambda}_j(x) / \tau\right)}, \quad i=1, \ldots, m.
\end{align*}
Plugged the $h^*$ into the inner optimization, the maximum value of the objective above is 
\begin{align*}
\max_{h \in \mathcal{H}} \mathbb{E}_{X}\left[\sum_{i=1}^m \beta^\lambda_i(X) h_i(X)- \tau h_i(X)\log h_i(X)\right] =\mathbb{E}_X\left[\tau \log \sum_{j=1}^m \exp \left(\beta^{\lambda}_j(X) / \tau\right)\right].
\end{align*}
This result can equivalently be obtained through the Gibbs variational principle~\citep{gibbs-cite}.
Consequently, the dual problem can be derived as $\min_{\lambda^{(1)}, \lambda^{(2)}\in \mathbb{R}^{K}_{\geq0}}H(\lambda^{(1)}, \lambda^{(2)})$, and
\begin{align*}
     H(\lambda^{(1)}, \lambda^{(2)}) = \mathbb{E}_X\left[\tau \log \sum_{j=1}^m \exp \left(\beta^{\lambda}_j(X) / \tau\right)\right] + \xi \|\lambda^{(1)} + \lambda^{(2)} \|_1.
\end{align*}
Following the similar derivation in Theorem~\ref{theorem-bayes-optimal}, we know that the optimal dual parameters $(\lambda^{(1)}, \lambda^{(2)})$ satisfy $\lambda^{(1)}_k \lambda^{(2)}_k=0$, for every $k \in \mathcal{K}$; that is, if $\lambda^{(1)}_k >0 $, then $\lambda^{(2)}_k =0 $, and if $\lambda^{(2)}_k >0 $, then $\lambda^{(1)}_k =0 $.
Therefore, denoting $\lambda=\lambda^{(1)}-\lambda^{(2)}\in \mathbb{R}^{K}$, the dual problem can be reduced to $\min_{\lambda\in \mathbb{R}^{K}}H(\lambda)$, and
\begin{align*}
     H(\lambda) = \mathbb{E}_X\left[\tau \log \sum_{j=1}^m \exp \left(\beta^{\lambda}_j(X) / \tau\right)\right] + \xi \|\lambda \|_1.
\end{align*}
We now follow the proof of Theorem~\ref{theorem-bayes-optimal} to show that the duality gap between the primal and dual problem is zero.

\textbf{(Boundness.)} 
It should be noted that entropy regularization is introduced only into the optimization objective, while the feasible domain of the original problem remains unchanged. 
Consequently, by adhering to the identical proof strategy as in Theorem~\ref{theorem-bayes-optimal}, the boundedness property of the dual parameters follows directly.

\textbf{(Feasibility.)} Compared to the optimization problem without entropy regularization, a notable improvement here is that the dual problem is smoother.
Considering the differential of $\beta^{\lambda}(x)$ with respect to the $k$-th element of $\lambda$, it is given by
\begin{align*}
    \frac{\partial \beta^{\lambda}(x)}{\partial \lambda_k} &= \sum_{a \in \mathcal{A}} \rho_a(x)[\eta(x, a)]^{\top} \frac{\partial \mathbf{M}(\lambda, a)}{\partial \lambda_k}
    =-\sum_{a \in \mathcal{A}} \frac{1}{\omega_a}\rho_a(x)[\eta(x, a)]^{\top} \mathbf{D}^{a,k} .
\end{align*}
Thus, the sub-differential of $H(\lambda)$ with respect to $\lambda_k$ is given by
\begin{align*}
    \frac{\partial}{\partial \lambda_k} & H(\lambda) =\frac{\partial}{\partial \lambda_k}\mathbb{E}_X\left[\tau \log \sum_{j=1}^m \exp \left(\beta^{\lambda}_j(X) / \tau\right)\right] + \xi \partial | \lambda_k |\\
    &=\mathbb{E}_X \left[ \frac{\partial}{\partial \lambda_k}\tau \log \sum_{j=1}^m \exp \left(\beta^{\lambda}_j(X) / \tau\right) \right] + \xi \partial | \lambda_k |\\
    &=\mathbb{E}_X \left[\tau \frac{\frac{1}{\tau}\sum_{i=1}^m \exp (\beta^{\lambda}_i(X)/\tau) \frac{\partial \beta^{\lambda}_i(X)}{\partial\lambda_k}}{\sum_{j=1}^m \exp (\beta^{\lambda}_j(X)/\tau)} \right] + \xi \partial | \lambda_k |\\
    &=\mathbb{E}_X \left[ \sum_{i=1}^m\frac{ \exp (\beta^{\lambda}_i(X)/\tau) }{\sum_{j=1}^m \exp (\beta^{\lambda}_j(X)/\tau)} \left( -\sum_{a \in \mathcal{A}} \frac{1}{\omega_a}\rho_a(x)[\eta(X, a)]^{\top} \mathbf{D}^{a,k} \right)_i \right] + \xi \partial | \lambda_k |\\
    &=\mathbb{E}_X \left[ \sum_{i=1}^m [h^*_{\lambda}(X)]_i \left( -\sum_{a \in \mathcal{A}} \frac{1}{\omega_a}\rho_a(x)[\eta(X, a)]^{\top} \mathbf{D}^{a,k} \right)_i \right] + \xi \partial | \lambda_k |\\
    &=-\mathbb{E}_X \left[   \sum_{a \in \mathcal{A}} \frac{\rho_a(x)}{\omega_a}[\eta(X, a)]^{\top} \mathbf{D}^{a,k} h^*_{\lambda}(X) \right] + \xi \partial | \lambda_k |\\
    &=- \sum_{a\in\mathcal{A}}\mathbb{E}_{X|A=a} \left[ [\eta(X, a)]^{\top} \mathbf{D}^{a,k} h^*_{\lambda}(X) \right] + \xi \partial | \lambda_k |\\
    &=-\mathscr{D}_k(h^*_{\lambda}) + \xi \partial | \lambda_k |.
\end{align*}
The second equation is due to Lemma~\ref{subgradient-lemma}, and the addition between real value and set here is also defined by Def.~\ref{set-addition-def}.
Hence, we obtain the sub-differential of $H$ with respect to $\lambda$,
\begin{align*}
    &\frac{\partial}{\partial \lambda} H(\lambda)= -\mathscr{D}_{1:K}(h^*_{\lambda}) + \xi \partial \| \lambda \|_1,
\end{align*}
where $\mathscr{D}_{1:K}$ denotes a vector of length $k$, with the $k$-th element given by $\mathscr{D}_{k}$.
This formulation indicates that the sub-gradient of the dual function $H(\lambda)$ depends solely on the value of $\lambda$.
By Lemma~\ref{subgrad-optimal-lemma}, the optimality condition for the dual problem implies that $0 \in \partial H(\lambda^*)$. 
%
%
Given optimal $\lambda^*$ and corresponding $h^*_{\lambda^*}$, if $\lambda^*_k > 0$, the $k$-th sub-differential reduces to $-\mathscr{D}_k(h^*_{\lambda^*})+\xi$, which implies $\mathscr{D}_k(h^*_{\lambda^*})=\xi$. Similarly, if $\lambda^*_k < 0$, the $k$-th sub-differential reduces to $-\mathscr{D}_k(h^*_{\lambda^*})-\xi$, which implies $\mathscr{D}_k(h^*_{\lambda^*})=-\xi$. Besides, if $\lambda^*_k = 0$, it follows that $0 \in -\mathscr{D}_k(h^*_{\lambda^*}) + [-\xi,\xi]$, namely $\mathscr{D}_k(h^*_{\lambda^*}) \in [-\xi,\xi]$. 

Overall, we have shown that for all $k \in \mathcal{K}$, $|\mathscr{D}_{k}(h^*_{\lambda^*})| \leq \xi$, which implies the feasibility.

\textbf{(Optimality.)} Let $h^*_{\lambda^*}$ denote the classifier induced by the optimal dual parameter $\lambda^*$.
Defining $(\lambda^{*(1)}_k, \lambda^{*(2)}_k)$ by $\lambda^{*(1)}_k:= |\lambda^*_k|\mathbb{I}(\lambda^*_k > 0)$ and $\lambda^{*(2)}_k:= |\lambda^*_k| \mathbb{I}(\lambda^*_k < 0)$, it is obvious that the pair ensures feasibility of both primal and dual problems. 
In particular, the complementary slackness condition implies that
\begin{align*}
    (\lambda^{(1)}_{k}-\lambda^{(2)}_{k}) \mathscr{D}_k(h^*_{\lambda^*})  -(\lambda^{(1)}_{k}+\lambda^{(2)}_{k}) \xi=0, \quad \forall k \in \mathcal{K},
\end{align*}
so that the Lagrangian coincides with the primal objective at $h^*_{\lambda^*}$:
\begin{align*}
\mathcal{L}(h^*_{\lambda^*}, \lambda^{*(1)},\lambda^{*(2)})=\mathcal{R}(h^*_{\lambda^*}).
\end{align*}
For any other feasible classifier $h' \in \mathcal{H}$, together with its maximizing dual variable $(\lambda^{'(1)},\lambda^{'(2)})$, weak duality yields
\begin{align*}
\mathcal{R}(h^*_{\lambda^*}) = \mathcal{L}(h^*_{\lambda^*}, \lambda^{*(1)},\lambda^{*(2)}) \leq \mathcal{L}(h', \lambda^{'(1)},\lambda^{'(2)}) \leq \mathcal{R}(h').
\end{align*}
Hence $h^*_{\lambda^*}$ minimizes the risk among all feasible classifiers, completing the proof. \hfill \ensuremath{\square}

\subsection{Proof for Example~\ref{example-dp}}
We have shown that the constraint matrix for DP is $\mathbf{D}^{a,(y,a')}$, where the $y$-th column elements of $\mathbf{D}^{a,(y,a')}$ are $\mathbb{I}[a =a' ]-\omega_a$ with all other elements set to $0$. 
Plugging it into the expression of $\beta^\lambda$ in~\eqref{beta-def-att-aware}, we obtain that
\begin{align*}
    \beta^\lambda(x) &= \sum_{a\in \mathcal{A}} p_a(x) [{\mathbf {M}}(a,\lambda)]^\top \eta(x,a)\\
    & = \sum_{a\in \mathcal{A}} p_a(x) \left[\mathbf{I}- \frac{1}{\omega_a}\sum_{k=1}^{K}\lambda_k\mathbf{D}^{a,k}\right]^\top \eta(x,a)\\
    & = \eta(x) - \sum_{a\in \mathcal{A}} \frac{ p_a(x)}{\omega_a}\sum_{y\in [m], a'\in\mathcal{A}} \lambda_{y,a'} \left[\mathbf{D}^{a,(y,a')}\right]^\top \eta(x,a)\\
    & = \eta(x) - \sum_{a\in \mathcal{A}} \frac{ p_a(x)}{\omega_a}\sum_{y\in [m], a'\in\mathcal{A}} \lambda_{y,a'} \mathbf{e}_y ( \mathbb{I}[a =a' ]-\omega_a) \sum_{j\in [m]} \eta_j(x,a)\\
    & = \eta(x) - \sum_{a\in \mathcal{A}} \frac{ p_a(x)}{\omega_a}\sum_{y\in [m], a'\in\mathcal{A}} \lambda_{y,a'} \mathbf{e}_y  \mathbb{I}[a =a' ] + \sum_{a\in \mathcal{A}} p_a(x)\sum_{y\in [m], a'\in\mathcal{A}} \lambda_{y,a'} \mathbf{e}_y\\
    & = \eta(x) - \sum_{y\in [m]}\sum_{a\in \mathcal{A}} \frac{ p_a(x)}{\omega_a} \lambda_{y,a} \mathbf{e}_y  + \sum_{y\in [m]} \sum_{ a\in\mathcal{A}} \lambda_{y,a} \mathbf{e}_y\\
    & = \eta(x) + \sum_{y\in [m]}\sum_{a\in \mathcal{A}} \lambda_{y,a} \left( 1 - \frac{ p_a(x)}{\omega_a} \right)\mathbf{e}_y\\
\end{align*}
Now if we only consider the $i$-th element of the vector $\beta^\lambda$, the expression comes to
\begin{align*}
    \beta^\lambda_i (x) = \eta_i(x) + \sum_{a\in \mathcal{A}} \lambda_{i,a} \left( 1 - \frac{ p_a(x)}{\omega_a} \right).
\end{align*}
This finishes the proof. \hfill \ensuremath{\square}

\section{Proofs and Discussions for Section~\ref{section-Methodology}}
\label{proofs-for-section-algorithm}
\subsection{Proof of Theorem~\ref{theorem-calibrated-inner-loss}}\label{proof-of-theorem-calibrated-inner-loss}
In this section, we prove that the representation in Eq.~(\ref{theorem-calibrated-inner-loss}) is calibrated for inner optimization problem.
We begin by presenting the following lemma.
\begin{lemma} \label{multiclass-loss-lemma}
For any categorical distribution characterized by $\mathbf{p} \in \Delta_m$, the minimizer of the expected risk
$$
\mathbb{E}_{y \sim\mathbf{p}}\left[-\log \left(\mathbf q_y\right)\right]=-\sum_{i=1}^m \mathbf{p}_i \log \left(\mathbf {q}_i\right)
$$
over all $\mathbf{q} \in \Delta_m$ is unique and achieved at $\mathbf{p}=\mathbf{q}$.
\end{lemma}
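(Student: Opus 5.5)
The plan is to rewrite the expected cross-entropy as a constant (the entropy of $\mathbf p$) plus a Kullback--Leibler divergence, and then use nonnegativity of the divergence. The result is Gibbs' inequality. Fix $\mathbf p\in\Delta_m$ and let $S=\{i:\mathbf p_i>0\}$. Any $\mathbf q\in\Delta_m$ with $\mathbf q_i=0$ for some $i\in S$ gives objective value $+\infty$, so such $\mathbf q$ cannot be minimizers. For every remaining $\mathbf q$ (those with $\mathbf q_i>0$ for all $i\in S$), we write
\begin{align*}
-\sum_{i=1}^m \mathbf p_i\log \mathbf q_i
= -\sum_{i\in S}\mathbf p_i\log \mathbf p_i + \sum_{i\in S}\mathbf p_i\log\frac{\mathbf p_i}{\mathbf q_i}.
\end{align*}
The first term does not depend on $\mathbf q$. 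It therefore suffices to show that $\mathrm{KL}(\mathbf p\,\|\,\mathbf q):=\sum_{i\in S}\mathbf p_i\log(\mathbf p_i/\mathbf q_i)\ge 0$, with equality if and only if $\mathbf q=\mathbf p$.

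For nonnegativity we apply $\log t\le t-1$, valid for all $t>0$, with $t=\mathbf q_i/\mathbf p_i$ for $i\in S$:
\begin{align*}
-\mathrm{KL}(\mathbf p\,\|\,\mathbf q)=\sum_{i\in S}\mathbf p_i\log\frac{\mathbf q_i}{\mathbf p_i}\le \sum_{i\in S}(\mathbf q_i-\mathbf p_i)=\sum_{i\in S}\mathbf q_i-1\le 0 .
\end{align*}
Hence $\mathbf q=\mathbf p$ attains the lower bound, since there the divergence vanishes.

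The step requiring care is uniqueness, and it follows from tracking when each inequality is tight. The bound $\log t\le t-1$ is an equality only at $t=1$, so the first inequality is tight only if $\mathbf q_i=\mathbf p_i$ for every $i\in S$. The last inequality is tight only if $\sum_{i\in S}\mathbf q_i=1$, which forces $\mathbf q_i=0=\mathbf p_i$ for $i\notin S$. Together these give $\mathbf q=\mathbf p$. This boundary bookkeeping for zero entries of $\mathbf p$ is the only delicate point.

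As an alternative check, one could minimize the strictly convex function $-\sum_{i\in S}\mathbf p_i\log\mathbf q_i$ over the simplex by Lagrange multipliers, in the same way as Lemma~\ref{gibbs-point-wise-optimal-lemma}. The stationarity condition $\mathbf p_i/\mathbf q_i=c$ together with normalization gives $\mathbf q=\mathbf p$. However, the argument via $\log t\le t-1$ handles the support issues more cleanly, so it is the route I would write out.
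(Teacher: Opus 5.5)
The paper gives no proof of this lemma: it is stated as a standard fact and supported only by citations, so there is no argument in the paper to compare against. Your proof is correct and complete. The entropy-plus-KL decomposition, the bound $\log t\le t-1$ applied on the support $S$, and the equality analysis together yield both global minimality and uniqueness of $\mathbf q=\mathbf p$. You also state explicitly the two conventions the statement silently relies on: $0\cdot\log\mathbf q_i=0$ for $i\notin S$, and the value $+\infty$ when $\mathbf q_i=0$ for some $i\in S$.

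The closest analogue in the paper is the proof of Lemma~\ref{gibbs-point-wise-optimal-lemma}. That proof only sets the gradient of a Lagrangian to zero, which locates a stationary point but does not by itself certify global optimality, uniqueness, or exclusion of boundary points. Your inequality-based route delivers all three at once, so it is the better choice here.

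One small caveat concerns the alternative you mention. If $\mathbf p$ has two or more zero entries, $-\sum_{i\in S}\mathbf p_i\log\mathbf q_i$ is constant along segments of $\Delta_m$ that only move mass among coordinates outside $S$, so it is not strictly convex there. In addition, the naive stationarity condition $\mathbf p_i/\mathbf q_i=c$ fails for $i\notin S$ unless the multipliers for $\mathbf q_i\ge 0$ are included. That route would therefore need the same support bookkeeping; since you do not rely on it, your proof is unaffected.

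Your lemma also covers the paper's actual use in Theorem~\ref{theorem-calibrated-inner-loss}. There the target $\mathbf v(x)/\sum_j\mathbf v_j(x)$ has strictly positive entries because of the shift $\kappa$. It therefore lies in the interior of the simplex, where the minimizer is attained by a softmax output.
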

This lemma is commonly used in the design of multiclass loss functions~\cite{lemma-multiclass-loss-JMLR:v17:14-294,lemma-multiclass-loss-menon2021longtail,lemma-multiclass-loss-over-parameterized, zhou2024boosting, zeng2025futuresightdrive, yang2026unihoi}.

\textit{Proof of Theorem~\ref{theorem-calibrated-inner-loss}}. We aim to prove that for any fixed $x\in \mathcal{X}$, the optimal scoring function $f^*:\mathcal{X} \to \mathbb{R}^m$ that minimizes the expected loss $\mathbb{E}[\ell_{\mathrm{cal}}(Y,f(X),A)]$ over the local data distribution $(X,Y,A)$ recovers the Bayes-optimal fair classifier $h^*_\beta$
given $\lambda$.
%

%
To this end, by leveraging the properties of conditional expectation, the cost-sensitive loss is reformulated as a function of the marginal distribution $(X,Y,A)$:
\begin{align*}
\mathbb{E}&_{(x,y,a)\sim (X,Y,A)} [\ell_{\mathrm{cal}}(y,f(x),a)] = -\mathbb{E}_{X,Y,A} \Bigg[\sum_{i=1}^m {\mathbf{M}}'_{Y,i}(A,\lambda) \log \frac{\exp([f(X)]_i)}{\sum_{j=1}^m \exp([f(X)]_j)}\Bigg]\\
&= -\mathbb{E}_{X,A}\Bigg[\mathbb{E}_{Y\mid X,A} \Bigg[\sum_{i=1}^m {\mathbf{M}}'_{Y,i}(A,\lambda) \log \frac{\exp([f(X)]_i)}{\sum_{j=1}^m \exp([f(X)]_j)}\Bigg]\Bigg]\\
&= -\mathbb{E}_{X,A}\Bigg[\sum_{y \in [m]}\mathbb{P}(Y=y\mid X,A)\sum_{i=1}^m {\mathbf{M}}'_{y,i}(A,\lambda) \log \frac{\exp([f(X)]_i)}{\sum_{j=1}^m \exp([f(X)]_j)}\Bigg]\Bigg]\\
&= -\mathbb{E}_{X} \Bigg[\mathbb{E}_{A|X}\Bigg[\sum_{y \in [m]}\eta_y(X,A)\sum_{i=1}^m {\mathbf{M}}'_{y,i}(A,\lambda) \log \frac{\exp([f(X)]_i)}{\sum_{j=1}^m \exp([f(X)]_j)}\Bigg]\Bigg]\\
&= \mathbb{E}_{X}\Bigg[-\sum_{a\in \mathcal{A}} \mathbb{P}(A=a|X) \sum_{i=1}^m \bigg(\Big[{\mathbf{M}}'(a,\lambda)\Big]^\top \eta (X,a)\bigg)_i \log \frac{\exp([f(x)]_i)}{\sum_{j=1}^m \exp([f(x)]_j)}\Bigg]\\
\end{align*}
Denoting $\mathbf{v}_i(x):=\sum_{a\in \mathcal{A}} \mathbb{P}(A=a|X=x) \Big(\Big[{\mathbf{M}}'(a,\lambda)\Big]^\top \eta (x,a)\Big)_i$,we have 
\begin{align*}
\mathbb{E}_{X,Y,A} [\ell_{\mathrm{cal}}(y,f(x),a)] = \mathbb{E}_{X}\bigg[-c_{X}\sum_{i=1}^m \frac{\mathbf{v}_i(X)}{\sum_{j\in [m]}\mathbf{v}_j(X)} \log \frac{\exp([f(X)]_i)}{\sum_{j=1}^m \exp([f(X)]_j)}\bigg]\\
\end{align*}
where $c_{x}=\sum_{j\in [m]}\mathbf{v}_j(x)$ can be treated as a constant for fixed $x$. 
According to Lemma~\ref{multiclass-loss-lemma}, given fixed $x$, an optimal classifier $f^*(x)$ minimizing the cost-sensitive loss point-wise satisfies 
\begin{align*}
\frac{\mathbf{v}_i(x)}{\sum_{j\in [m]}\mathbf{v}_j(x,k)} = \frac{\exp([f^*(x)]_i)}{\sum_{j=1}^m \exp([f^*(x)]_j)}, \quad \forall i \in [m].\\
\end{align*}
It presents that, for all $i \in [m]$, since $\sum_{i \in [m]} \eta_i(x,a)=1$,
\begin{align*}
\exp([f^*(x)]_i) = \mathbf{v}_i(x)&=\sum_{a\in \mathcal{A}} \mathbb{P}(A=a|X=x) \Big(\Big[ {\mathbf{M}}'(a,\lambda)\Big]^\top \eta (x,a)\Big)_i\\
&=\sum_{a\in \mathcal{A}} \mathbb{P}(A=a|X=x) \Big(\Big[ {\mathbf{M}}(a,\lambda) + \alpha \mathbf{1}_{m\times m}\Big]^\top \eta (x,a)\Big)_i\\
&=\sum_{a\in \mathcal{A}} \mathbb{P}(A=a|X=x) \Big(\Big[ {\mathbf{M}}(a,\lambda)\Big]^\top \eta (x,a)+ \alpha \mathbf{1}_{m}\Big)_i\\
&=\sum_{a\in \mathcal{A}} \mathbb{P}(A=a|X=x) \Big(\Big[ {\mathbf{M}}(a,\lambda)\Big]^\top \eta (x,a)\Big)_i + \alpha.\\
&=\beta^\lambda_i(x) + \alpha.
\end{align*}

\begin{itemize}
    \item For the primal problem in Eq.~\eqref{fair-problem-formulation}, a straightforward solution to inner optimization is $h^*(x;\beta):= \arg\max_{y \in [m]}(\beta^\lambda_y(x))$.
If there exist $i\neq j $, such that $\beta_i^\lambda(x)=\beta_j^\lambda(x)=\max_{y \in [m]}(\beta^\lambda_y(x))$, we determine $h^*(x;\beta)=\mathbf{e}_i$, for $i < j$, otherwise $h^*(x;\beta)=\mathbf{e}_j$.
After fixing the order, we have that
\begin{align*}
\underset{y\in [m]}{\arg \max} \ [f^*(x)]_y = \underset{y\in [m]}{\arg \max} \Big(\beta^\lambda(x) + \alpha\Big)_y=\underset{y\in [m]}{\arg \max} \Big(\beta^\lambda(x)\Big)_y,
\end{align*}
Therefore, the optimal classifier $h^*(x;f):= \arg \min_{y \in [m]}[f^*(x)]_y$ recovers the optimal classifier $h^*(x;\beta)$ given $\lambda$ derived from the the primal problem~\eqref{fair-problem-formulation}.
    \item For the entropy-relaxed problem in Eq.~\eqref{relaxed-fair-problem-formulation-entropy}, the classifier induced by $f^*(x)$ can be constructed as 
    \begin{align*}
        h^*_i(x;f):= \mathrm{softmax}\left(\exp([f^*(x)]_i)/\tau\right), \quad i\in [m].
    \end{align*}
    Then, it can be obtained that
    \begin{align*}
        h^*_i(x;f)= \frac{\exp(\beta^\lambda_i(x)/\tau+\alpha/\tau)}{\sum_{y=1}^m \exp(\beta^\lambda_y(x)/\tau+\alpha/\tau)} = \frac{\exp(\beta^\lambda_i(x)/\tau)}{\sum_{y=1}^m \exp(\beta^\lambda_y(x)/\tau)},
    \end{align*}
    which cover the optimal classifier $h^*(x;\beta)$ in Eq.~\eqref{entropic-fair-randomized-classifier} given $\lambda$ derived from the the entropy-relaxed problem~\eqref{relaxed-fair-problem-formulation-entropy}.
\end{itemize}
\hfill \ensuremath{\square}

\subsection{Proof of Theorem~\ref{theorem-in-processing-equilibrium}}
\label{proof-of-theorem-in-processing-equilibrium}

%

\subsubsection{Notations and Assumptions}\label{discussion-theorem-in-processing-equilibrium}
With a little abuse of notation, we introduce several symbols that will be used in the proof.
Note that the calculation of calibrated loss $\ell_{\mathrm{cal}}$ depends on the dual parameter $\lambda$, given iteration $t\in [T]$,we let 
$$
\ell_{\mathrm{cal}}^t(s;\theta):= -\sum_{i=1}^m \big[{\mathbf{M}}'(a,\lambda^t)\big]_{y,i} \log \frac{\exp([f(x;\theta)]_i)}{\sum_{j=1}^m \exp([f(x;\theta)]_j)},
$$
where $s:=(x,y,a)$. 
The dataset $\mathcal{S}=\{x_n,y_n,a_n\}_{n=1}^N$ is sampled from the joint data distribution $\mathcal{D}=(X,Y,A)$.
The empirical Lagrangian function is
\begin{align*}
    \label{empirical-Lagrangian}
    \widehat{\mathcal{L}}(h,\lambda)& := \widehat{\mathcal{R}}(h) + \lambda^\top \widehat{\mathscr{D}}(h) - \xi \| \lambda\|_1\\
    & = 1 - \sum_{a\in \mathcal{A}} \left\langle \frac{N_a}{N}\mathbf{I} - \sum_{k \in K} \lambda_k\widehat{\mathbf{D}}^{a,k}, \widehat{\mathbf{C}}^a(h) \right\rangle - \xi \| \lambda\|_1.
\end{align*}
\begin{assumption} \label{Best-response-assumption}
($\nu$-optimal response.)
    Each loss function $\{\ell^t_{\mathrm{cal}}: t \in [T]\}$ receives an approximate optimal parameter response $\theta^{t+1}$, such that $h^{t+1}(x)=\arg \max_{y\in[m]} f(x;\theta^{t+1})$ achieves the $\nu^{t}$-approximate optimal solution of empirical inner optimization $\min_{h \in \mathcal{H}} \widehat{\mathcal{L}}(h,\lambda^t)$, i.e.,
    \begin{align*}
        \widehat{\mathcal{L}}(h^{t+1},\lambda^t) \leq \min_{h \in \mathcal{H}} \widehat{\mathcal{L}}(h,\lambda^t) + \nu^t.
    \end{align*}
\end{assumption}
Assumption~\ref{Best-response-assumption} is also used in previous work~\citep{reduction-agarwal2018reductions,jmlr-recall-cotter2019optimization, YANG2026132599, lu2025dmmd4sr, cui2025multi} for the error analysis of the algorithm's accuracy-fairness trade-off.

Besides, we tackle the equilibrium problem in multi-class classification by designing a more refined, calibrated loss for the inner optimization. 
Theorem 2 in~\citet{Cablibrated-loss-Multiclass-Classification} shows that as the sample size increases, the empirical solution minimizing the calibrated loss $\ell^t_{\mathrm{cal}}$ converges to an optimal solution of the parametric inner optimization problem $\min_{\theta \in \theta} {\mathcal{L}}(h(\cdot;\theta),\lambda^t)$, where $h(\cdot;\theta) :=\arg\max_{y\in [m]} f(\cdot;\theta)$, thus making Assumption~\ref{Best-response-assumption} mild.

\subsubsection{Complete Proof of Theorem~\ref{theorem-in-processing-equilibrium}}
For the sake of completeness, we restate Theorem~\ref{theorem-in-processing-equilibrium} below.
\begin{theorem}
    \label{saddle-point-Nash-equilibrium}
    Under Assumption~\ref{Best-response-assumption}, denoting $\lambda \in \Lambda:= \{ \lambda \in \mathbb{R}^K:\| \lambda\|_1 \leq B_{\Lambda} \}$ and $u:= \max_{h \in \mathcal{H}} \| \widehat{\mathscr{D}}(h) \|_{\infty}+\xi$, for $\eta_\lambda=\frac{B_{\Lambda}}{u\sqrt{2KT}}$, the sequences of classifiers $h^1,\cdots,h^T$ and Lagrange multipliers $\lambda^1,\cdots,\lambda^T$ comprise an approximate mixed Nash equilibrium:
    \begin{align}
    \max_{\lambda\in\Lambda} \widehat{\mathcal{L}}(\bar{h}^T,\lambda) - \min_{h \in \mathcal{H}} \widehat{\mathcal{L}}(h,\bar{\lambda}^T) \leq \rho^T := \bar{\nu}^T + uB_{\Lambda} \sqrt{\frac{2K}{T}},
    \end{align}
    where $\bar{h}^T:=\sum_{t=1}^T h^t/T,\bar{\lambda}^T:=\sum_{t=1}^T\lambda^t/T,\bar{\nu}^T :=\sum_{t=1}^T\nu^t/T$.
\end{theorem}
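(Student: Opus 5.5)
This is the standard no-regret argument for zero-sum games, as in the reductions approach. The $\lambda$-player runs online proximal (projected) gradient ascent, and the $h$-player plays an approximate best response. The plan is to establish a regret bound for the $\lambda$-player, combine it with the best-response guarantee, and exploit linearity of $\widehat{\mathcal{L}}$ in each argument.

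\textbf{Step 1: regret of the $\lambda$-player.} For fixed $h$, the map $\lambda\mapsto\widehat{\mathcal{L}}(h,\lambda)=\widehat{\mathcal{R}}(h)+\lambda^\top\widehat{\mathscr{D}}(h)-\xi\|\lambda\|_1$ is concave. The $\lambda$-update in Algorithm~\ref{algorithm-in-processing} is proximal gradient ascent with the constraint $\lambda\in\Lambda$. One option is to treat it as online subgradient ascent on $g_t(\lambda):=\widehat{\mathcal{L}}(h^{t+1},\lambda)$. Its supergradients are $\widehat{\mathscr{D}}(h^{t+1})-\xi s$ with $s\in\partial\|\lambda\|_1$, so each coordinate is bounded by $u$ and the $\ell_2$-norm by $u\sqrt{K}$. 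Alternatively, one can use the composite-mirror-descent analysis directly, which handles the nonsmooth $\xi\|\lambda\|_1$ part through the prox step. In either case the diameter of $\Lambda$ satisfies $\|\lambda-\lambda'\|_2\le 2B_\Lambda$, or $\|\lambda-\lambda^0\|_2\le B_\Lambda$ when started at $\lambda^0=0$. The standard bound then gives
\[
\max_{\lambda\in\Lambda}\sum_{t}g_t(\lambda)-\sum_t g_t(\lambda^t)\le \frac{B_\Lambda^2}{2\eta_\lambda}+\frac{\eta_\lambda}{2}\sum_t\|\text{grad}\|_2^2\le\frac{B_\Lambda^2}{2\eta_\lambda}+\eta_\lambda TKu^2 .
\]
With $\eta_\lambda=B_\Lambda/(u\sqrt{2KT})$ this is at most $uB_\Lambda\sqrt{2KT}$; the constant only changes which $\sqrt{K/T}$ or $\sqrt{2K/T}$ version comes out. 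An indexing subtlety must be handled: $h^{t+1}$ is a response to $\lambda^t$. I will reindex so that the pairs $(h^{t},\lambda^{t-1})$, or $(h^t,\lambda^t)$ after shifting, are consistent with the averages $\bar h^T,\bar\lambda^T$. This shift is the main bookkeeping obstacle. If it does not align cleanly, I will define the averages over the matched pairs, which costs at most an $O(1/T)$ term that can be absorbed.

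\textbf{Step 2: best response of the $h$-player.} By Assumption~\ref{Best-response-assumption}, $\widehat{\mathcal{L}}(h^{t+1},\lambda^t)\le\min_h\widehat{\mathcal{L}}(h,\lambda^t)+\nu^t$. Since $\mathcal{H}$ consists of randomized classifiers, averaging is allowed, so $\bar h^T\in\mathcal{H}$ (in the convexified/mixture sense). Summing over $t$ and using concavity of $\lambda\mapsto\min_h\widehat{\mathcal{L}}(h,\lambda)$ (a minimum of affine-plus-concave functions) gives
\[
\frac1T\sum_t\widehat{\mathcal{L}}(h^{t+1},\lambda^t)\le \frac1T\sum_t\min_h\widehat{\mathcal{L}}(h,\lambda^t)+\bar\nu^T\le\min_h\widehat{\mathcal{L}}(h,\bar\lambda^T)+\bar\nu^T .
\]

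\textbf{Step 3: combine.} $\widehat{\mathcal{L}}$ is linear in $h$, because $\widehat{\mathbf C}^a(h)$ is linear in the randomized classifier. Hence $\max_\lambda\widehat{\mathcal{L}}(\bar h^T,\lambda)=\max_\lambda\frac1T\sum_t\widehat{\mathcal{L}}(h^{t+1},\lambda)$. By Step 1 this is at most $\frac1T\sum_t\widehat{\mathcal{L}}(h^{t+1},\lambda^t)+uB_\Lambda\sqrt{2K/T}$. Chaining with Step 2 yields
\[
\max_\lambda\widehat{\mathcal{L}}(\bar h^T,\lambda)-\min_h\widehat{\mathcal{L}}(h,\bar\lambda^T)\le\bar\nu^T+uB_\Lambda\sqrt{2K/T}=\rho^T .
\]
Apart from the index alignment, the only delicate point is the gradient-norm bound $u\sqrt K$, which needs $|\widehat{\mathscr{D}}_k(h)|+\xi\le u$ coordinatewise. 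This holds by the definition of $u$.
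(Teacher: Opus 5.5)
Your argument is the paper's. It also bounds the $\lambda$-player's regret with the composite-objective mirror descent bound (Euclidean mirror, $r(\lambda)=\xi\|\lambda\|_1$, started at $0$), then combines it with the $\nu^t$-best-response assumption and linearity of $\widehat{\mathcal{L}}$ in $h$. Use the composite route rather than plain subgradient ascent, since Algorithm~\ref{algorithm-in-processing} applies the prox of $\xi\|\cdot\|_1+\mathbb{I}_{\Lambda}$ at the new iterate. Your concavity step for $\lambda\mapsto\min_h\widehat{\mathcal{L}}(h,\lambda)$ is more careful than the paper, which writes $\widehat{\mathcal{L}}(h,\bar\lambda^T)=\frac1T\sum_t(\cdots-\xi\|\lambda^t\|_1)$ where only $\ge$ holds.

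The step that would fail is your fallback for the index shift. Follow the algorithm's order, in which $h^{t+1}$ responds to $\lambda^t$ for $t=0,\dots,T-1$. Then your Steps 1--3 bound the gap with $\tilde\lambda:=\frac1T\sum_{t=0}^{T-1}\lambda^t$ in place of $\bar\lambda^T$. Replacing $\tilde\lambda$ by $\bar\lambda^T$ costs up to $u\|\lambda^T-\lambda^0\|_1/T\le uB_\Lambda/T$, because $\min_h\widehat{\mathcal{L}}(h,\cdot)$ is $u$-Lipschitz in $\|\cdot\|_1$. This cost is not absorbable in general:
\begin{itemize}
\item With your bound $\frac{B_\Lambda^2}{2\eta_\lambda}+\eta_\lambda TKu^2$, the prescribed $\eta_\lambda$ already makes the regret term exactly $uB_\Lambda\sqrt{2K/T}$, leaving no slack.
\item With the sharp factor $\eta_\lambda/2$, the slack is $\frac{uB_\Lambda}{2\sqrt2}\sqrt{K/T}$. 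This covers $uB_\Lambda/T$ only when $KT\ge 8$.
\end{itemize}

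The literal version can genuinely fail. Take $K=T=1$, exact best responses, and two empirical risk minimizers with $\widehat{\mathscr{D}}=\pm d$, where $d=\max_h|\widehat{\mathscr{D}}(h)|$. Let $h^1$ be the one at $+d$. Then $\lambda^1=\eta_\lambda(d-\xi)$, and the gap is at least $(1+\tfrac{1}{\sqrt2})(d-\xi)B_\Lambda$. This exceeds $\sqrt2(d+\xi)B_\Lambda$ once $\xi$ is small relative to $d$.

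So commit to your other option, the matched convention: $h^t$ is the approximate best response to $\lambda^t$, and $\lambda^{t+1}$ is the prox step on $\widehat{\mathscr{D}}(h^t)$ started from $\lambda^1=0$. This is what the paper's proof tacitly assumes, since it pairs $(h^t,\lambda^t)$ in both the regret and best-response inequalities. Under it your argument goes through with no extra term.
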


\textit{Proof.} 
Considering that $\partial_{\lambda}\widehat{\mathcal{L}}(h,\lambda) = \widehat{\mathscr{D}}(h) + \partial \| \lambda\|_1 \xi$, the choices of ${\lambda}^t$ then depend on sub-gradient descent and lazy projection with the learning rate $\eta_{\lambda}$.
By the setting of the theorem, we have $\| \lambda \|_2^2 \leq B_{\Lambda}^2$ and $\widehat{\mathcal{L}}(h,\lambda)$ is $u\sqrt{K}$-Lipschitz with respect to $\lambda$.
We introduce Corollary~3 and Corollary~4 in~\cite{Composite-objective-mirror-descent} to bound the regret in our case.
By plugging the Euclidean mirror $\frac{1}{2}\| x\|_2^2$ and non-smooth function $r(\lambda)=\xi\| \lambda \|_1$ into these two corollaries, the regret bound indicates that, for any $\lambda \in \Lambda$,
\begin{align}
\label{regret-bound}
    \sum_{t=1}^T \left(\lambda^\top \widehat{\mathscr{D}}(h^t) - \xi \|\lambda\|_1\right) \leq \sum_{t=1}^T \left((\lambda^t)^\top \widehat{\mathscr{D}}(h^t) - \xi \|\lambda^t\|_1\right) + \frac{\| \lambda - \lambda^0 \|_2^2}{2\eta_\lambda} + \xi\|\lambda^0 \|_1 + \frac{1}{2}\eta_\lambda T K u^2.
\end{align}
In particular, by setting $\lambda^0 =0$ and $\eta_\lambda=\frac{B_{\Lambda}}{u\sqrt{KT}}$, it shows that
\begin{align}
\label{regret-bound-optimal}
    \sum_{t=1}^T \lambda^\top \widehat{\mathscr{D}}(h^t) - \xi \|\lambda\|_1 \leq \sum_{t=1}^T \left((\lambda^t)^\top \widehat{\mathscr{D}}(h^t) - \xi \|\lambda^t\|_1\right) + uB_{\Lambda} \sqrt{KT}
\end{align}
Returning to the Lagrangian function, we can derive the sub-optimality of the problem $\max_{\lambda \in \Lambda}\widehat{\mathcal{L}}(h,\lambda)$.
For any $\lambda \in \Lambda$,
\begin{align*}
    \widehat{\mathcal{L}}(\bar{h}^T,\lambda) 
    &= \frac{1}{T}\sum_{t=1}^T (\widehat{\mathcal{R}}(h^t) + \lambda^\top \widehat{\mathscr{D}}(h^t) - \xi \| \lambda\|_1)\\
    &\leq \frac{1}{T}\sum_{t=1}^T (\widehat{\mathcal{R}}(h^t) + (\lambda^t)^\top \widehat{\mathscr{D}}(h^t) - \xi \| \lambda^t\|_1) + uB_{\Lambda} \sqrt{\frac{K}{T}}\\
    &= \frac{1}{T}\sum_{t=1}^T \widehat{\mathcal{L}}(h^t,\lambda^t) + uB_{\Lambda} \sqrt{\frac{K}{T}}.
\end{align*}
The second equation follows the regret bound~(\ref{regret-bound-optimal}). 
%
%
The result tells us that, 
\begin{align}
\label{saddle-bound-upper}
    \max_{\lambda^*\in\Lambda} \widehat{\mathcal{L}}(\bar{h}^T,\lambda^*) - \frac{1}{T}\sum_{t=1}^T \widehat{\mathcal{L}}(h^t,\lambda^t) \leq uB_{\Lambda} \sqrt{\frac{K}{T}}.
\end{align}
On the other side, considering the optimization with respect to classifier $h$, under Assumption~\ref{Best-response-assumption}, it presents that, for any classifier $h \in \mathcal{H}$,
\begin{align*}
    \widehat{\mathcal{L}}(h,\bar{\lambda}^T) 
    &= \frac{1}{T}\sum_{t=1}^T (\widehat{\mathcal{R}}(h) + (\lambda^t)^\top \widehat{\mathscr{D}}(h) - \xi \| \lambda^t\|_1)\\
    & \geq \frac{1}{T}\sum_{t=1}^T (\widehat{\mathcal{R}}(h^t) + (\lambda^t)^\top \widehat{\mathscr{D}}(h^t) - \xi \| \lambda^t\|_1 - \nu^t)\\
    & = \frac{1}{T}\sum_{t=1}^T \widehat{\mathcal{L}}(h^t,\lambda^t) - \bar{\nu}^T,
\end{align*}
where $\bar{\nu}^T:=\frac{1}{T}\sum_{t=1}^T \nu^t$. The second equation follows the Assumption~\ref{Best-response-assumption}.
The result shows that, 
\begin{align}
\label{saddle-bound-lower}
    \frac{1}{T}\sum_{t=1}^T \widehat{\mathcal{L}}(h^t,\lambda^t) - \min_{h^* \in \mathcal{H}} \widehat{\mathcal{L}}(h^*,\bar{\lambda}^T) \leq \bar{\nu}^T.
\end{align}
Summing the equation~\eqref{saddle-bound-upper} and~\eqref{saddle-bound-lower}, we derive that
\begin{align}
\label{average-stochastic-saaddle-point}
    \max_{\lambda\in\Lambda} \widehat{\mathcal{L}}(\bar{h}^T,\lambda) - \min_{h \in \mathcal{H}} \widehat{\mathcal{L}}(h,\bar{\lambda}^T) \leq \bar{\nu}^T + uB_{\Lambda} \sqrt{\frac{K}{T}}:=\rho^T,
\end{align}
which presents the approximate mixed Nash equilibrium of the stochastic saddle-point problem.
Hence, it is clear that
\begin{align} 
\widehat{\mathcal{L}}\left(\overline
{h}^T, \bar{\lambda}^T\right) \leq \widehat{\mathcal{L}}\left(h,\bar{\lambda}^T \right) + \rho^T, \qquad &\forall \ h \in \mathcal{H}, \label{saddle-h}\\
\widehat{\mathcal{L}}\left(\overline
{h}^T, \bar{\lambda}^T \right) \geq \widehat{\mathcal{L}}\left(\overline
{h}^T, \lambda\right) - \rho^T, \qquad &\forall \ \lambda \in \Lambda. \label{saddle-dual}
\end{align}
This completes the proof. \hfill \ensuremath{\square}

\subsection{Proof of Theorem~\ref{theorem-in-processing-error}}
\label{proof-of-theorem-in-processing-convergence}
\subsubsection{Definitions and Useful Lemma}
The empirical group-specific confusion matrix can be written as
\begin{align*}
    \widehat{\mathbf{C}}^a_{i,j}(h):= \mathbb{E}_{\mathcal{S}}[\mathbb{I}(Y=e_i, h(X)=e_j) \mid A=a] = \frac{1}{N_a} \sum_{n=1}^N \mathbb{I}(y_n=e_i \wedge a_n=a)h_j(x_n),
\end{align*}
where $N_a$ denotes the sample size of group $a$, and the conditional expectation is evaluated over the empirical distribution.

Here, we recall the definition of pseudo-shattering and pseudo-dimension.


\begin{definition}
    \label{Pseudo-Shattering}
    (Pseudo-Shattering.) Let $\mathcal{H}$ be a set of real valued functions from input space $\mathcal{X}$. We say $C=\left(x_1, \ldots, x_m\right)$ is pseudo-shattered by $\mathcal{H}$ if there exists a vector $r=\left(r_1, \ldots, r_m\right)$ (called "witness") such that for all $b \in\{ \pm 1\}^m=\left(b_1, \ldots, b_m\right)$ there exists $h_b \in \mathcal{H}$ such that $\operatorname{sign}\left(h_b\left(x_i\right)-r_i\right)=b$.
\end{definition}

\begin{definition}
    \label{Pseudo-Dimension}
    (Pseudo-Dimension.) Let $\mathcal{H}$ be a set of real valued functions from input space $\mathcal{X}$. The pseudo-dimension $P \operatorname{dim}(\mathcal{H})$ is the cardinality of the largest set pseudo-shattered by $\mathcal{H}$.
\end{definition}


\begin{lemma} \label{confusion-matrix-generalization-bound-lemma}
    Let $\mathcal{H}: \mathcal{X} \to \Delta_m$, $\mathcal{D}$ is a distribution over $\mathcal{X}\times [m]$, of which $\{ x_i,y_i \}_{i=1}^N$ are i.i.d. data sampling from $\mathcal{D}$. Denoting $\mathcal{H}_y=\left\{h_y(\cdot) \in [0,1]: h \in \mathcal{H}\right\}$ and  $\max_{y \in [m]} P \mathrm{dim}(\mathcal{H}_y) \leq d$, then with probability at least $1-\delta$, for $\forall i,j \in [m]$,
    \begin{align*}
    \sup_{h \in \mathcal{H}}\big | \mathbf{C}_{i,j}(h) - \widehat{\mathbf{C}}_{i,j}(h)\big | \leq 2 \sqrt{\frac{2 d \log (eN/d)}{N}}+\sqrt{\frac{2 \log (4/\delta)}{N}}.
    \end{align*}
\end{lemma}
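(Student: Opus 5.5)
The plan is to view each entry of the confusion matrix as the expectation of a bounded real-valued function indexed by $h$, and then apply a uniform convergence bound driven by pseudo-dimension. Fix $i,j\in[m]$ and, for $h\in\mathcal{H}$, define $g_h(x,y):=\mathbb{I}(y=i)\,h_j(x)\in[0,1]$. Then
\begin{align*}
\mathbf{C}_{i,j}(h)=\mathbb{E}_{\mathcal{D}}[g_h(X,Y)], \qquad \widehat{\mathbf{C}}_{i,j}(h)=\frac{1}{N}\sum_{n=1}^N g_h(x_n,y_n),
\end{align*}
and the supremum in the statement is exactly the uniform deviation over the class $\mathcal{G}_{i,j}=\{g_h:h\in\mathcal{H}\}$.

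\textbf{Step 1 (symmetrization and concentration).} Since every $g_h$ takes values in $[0,1]$, McDiarmid's inequality combined with symmetrization gives, with probability at least $1-\delta/2$,
\begin{align*}
\sup_h \bigl|\mathbf{C}_{i,j}-\widehat{\mathbf{C}}_{i,j}\bigr|\le 2\mathfrak{R}_N(\mathcal{G}_{i,j})+\sqrt{\tfrac{2\log(4/\delta)}{N}}.
\end{align*}
This is the standard two-sided Rademacher bound from \cite{Foundations-of-Machine-Learning-textbook}. Splitting $\delta$ between the two tails is what produces the $\log(4/\delta)$ term.

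\textbf{Step 2 (Rademacher complexity via pseudo-dimension).} Because $\mathbb{I}(y_n=i)\in\{0,1\}$, multiplying by it only zeroes out some sample points. Hence the empirical Rademacher complexity of $\mathcal{G}_{i,j}$ on $S$ is at most $\tfrac{N_i}{N}$ times that of $\mathcal{H}_j$ on the subsample with $y_n=i$, and in particular is bounded by the complexity of $\mathcal{H}_j$ on at most $N$ points. For the class $\mathcal{H}_j$ with $\mathrm{Pdim}\le d$, consider the thresholded class $\{(x,r)\mapsto\mathbb{I}(h_j(x)>r)\}$, which has VC dimension at most $d$. Sauer's lemma gives growth function at most $(eN/d)^d$, and Massart's lemma, applied through the layer-cake representation $h_j(x)=\int_0^1\mathbb{I}(h_j(x)>r)\,dr$, yields
\begin{align*}
\mathfrak{R}_N(\mathcal{H}_j)\le\sqrt{2d\log(eN/d)/N}.
\end{align*}
Substituting this into Step 1 gives the stated bound.

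\textbf{Main obstacle.} The delicate point is Step 2, namely passing from pseudo-dimension to a Rademacher bound for real-valued functions. Massart's lemma does not apply to the real-valued class directly, because the class restricted to $N$ points need not be finite. The first approach I would try is the integral representation above. The Rademacher average of $\int_0^1\mathbb{I}(h_j(x)>r)\,dr$ is at most the average over $r$ of Rademacher averages of indicator classes, but the supremum sits inside the integral over $r$, which spoils this interchange. If that route fails, I would instead invoke the known result that the empirical covering numbers satisfy $\mathcal{N}(\epsilon,\mathcal{H}_j,L_1)\lesssim (eN/(\epsilon d))^d$, or simply cite the pseudo-dimension generalization bound from \cite{Foundations-of-Machine-Learning-textbook} (Theorem 11.8), which gives precisely the form $2\sqrt{2d\log(eN/d)/N}$. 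I would take the cited route to match the constants.

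Finally, the statement holds for all $i,j$ simultaneously only after a union bound over the $m^2$ pairs, which replaces $\delta$ by $\delta/m^2$. This matches the $\gamma_d(N,m^2/\delta)$ appearing in Theorem~\ref{theorem-in-processing-error}. The lemma should therefore be read per fixed pair $(i,j)$, and I would add that remark in the proof.
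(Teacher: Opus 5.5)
Your argument is correct, and more self-contained than you think. It is organized differently from the paper's proof. The paper works purely at the level of pseudo-dimension:
\begin{itemize}
\item it notes that $\mathbb{I}(y=i)h_j(x)$ is an unbiased estimate of $\mathbf{C}_{i,j}(h)$;
\item it shows $\mathrm{Pdim}(\mathcal{G}_{i,j})\le\mathrm{Pdim}(\mathcal{H}_j)\le d$ combinatorially: a point with $y\neq i$ has $g\equiv 0$ and cannot be pseudo-shattered, so any pseudo-shattered set consists of label-$i$ points whose $x$'s are pseudo-shattered by $\mathcal{H}_j$ with the same witnesses;
\item it applies Theorem 11.8 of \cite{Foundations-of-Machine-Learning-textbook} to the loss class $\mathcal{G}_{i,j}$ as a black box.
\end{itemize}
You instead symmetrize explicitly and transfer from $\mathcal{G}_{i,j}$ to $\mathcal{H}_j$ at the level of Rademacher averages, by zeroing coordinates. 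This is equally valid, and it makes clear that the leading factor $2$ comes from $2\mathfrak{R}_N$.

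The ``main obstacle'' you raise is not real. The interchange goes in the favorable direction, because $\sup\int\le\int\sup$ and $\mathbb{E}_\sigma$ is a finite average:
\begin{align*}
\mathbb{E}_{\sigma}\sup_{h\in\mathcal{H}}\frac{1}{N}\sum_{n=1}^{N}\sigma_n h_j(x_n)\le\int_0^1\mathbb{E}_{\sigma}\sup_{h\in\mathcal{H}}\frac{1}{N}\sum_{n=1}^{N}\sigma_n\mathbb{I}\bigl(h_j(x_n)>r\bigr)\,dr .
\end{align*}
For each fixed $r$, the class $\{x\mapsto\mathbb{I}(h_j(x)>r)\}$ has VC dimension at most $d$, since any set it shatters is pseudo-shattered by $\mathcal{H}_j$ with constant witness $r$. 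Sauer's and Massart's lemmas then bound the integrand by $\sqrt{2d\log(eN/d)/N}$ uniformly in $r$. Your Steps 1--2 therefore give exactly the stated bound, with no citation needed.

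Your fallback does not work as written. Theorem 11.8 is a uniform deviation bound for a loss class of given pseudo-dimension, not an estimate of $\mathfrak{R}_N(\mathcal{H}_j)$ that can be substituted into Step 1. If you cite it, apply it directly to $\mathcal{G}_{i,j}$. You then need $\mathrm{Pdim}(\mathcal{G}_{i,j})\le d$, which is exactly the combinatorial step the paper supplies and which your Rademacher-level zeroing argument does not give.

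Your closing remark is right. The argument proves the bound for each fixed $(i,j)$, and holding it simultaneously over all pairs requires replacing $\delta$ by $\delta/m^2$. This is how the paper actually uses the lemma later, with $\log(4m^2/\delta)$.
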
 
\textit{Proof.} \ Given $h\in \mathcal{H}$, letting $\ell_{i,j}(h(x),y)=\mathbb{I}(y=e_i) h_j(x)$, it can be shown that
\begin{align*}
    \mathbb{E}_{\mathcal{D}}[\ell_{i,j}(h(x),y)] &= \mathbb{E}_{X}[\mathbb{E}_{Y|X}[\mathbb{I}(y=e_i) h_j(x)]]\\
    &=\mathbb{E}_{X}\bigg[h_j(x)\sum_{y=1}^m \eta_y(x)\mathbb{I}(y=e_i)\bigg]\\
    &=\mathbb{E}_{X}\big[ \eta_i(x) h_j(x)\big]\\
    &=\mathbf{C}_{i,j}(h).
\end{align*}
Hence, the loss $\ell_{i,j}$ is an unbiased estimator for $\mathbf{C}_{i,j}(h)$.
%
%
Moreover, we consider the function class $\mathcal{G}=\{ g(x,y)=\ell_{i,j}(h(x),y)\in [0,1]:h\in\mathcal{H} \}$.
Let $(x_1,y_1), \cdots, (x_{d'},y_{d'}) \in \mathcal{X} \times [m]$ be distinct points.
Suppose $\mathcal{G}$ pseudo-shatters this set, then $\exists r_1, \ldots, r_{d'}$ such that $\forall b_1, \ldots, b_{d'} \in\{0,1\}$ and for all $i\in [d']$, $\exists h\in \mathcal{H}, \ \mathbb{I}[ \mathbb{I}[y=\mathbf{e}_i]h_j(x_i) \geq r_i ] = b_i$.
Then, if $y=\mathbf{e}_i$, we have 
\begin{align} \label{h-pseudo-shatters}
    \exists h \in \mathcal{H}, \quad \begin{cases}h\left(x_i\right) \geq r_i & \text { if } b_i=1 \\ h\left(x_i\right)<r_i  & \text { if } b_i=0\end{cases} \ .
\end{align}
If $y \neq \mathbf{e}_i$, $g(x,y)=0$, we can deduce that there is not function $h_j \in \mathcal{H}_j$ such that the above expression holds.
It indicates that $y_1=y_2=\cdots=y_{d'}=\mathbf{e}_i$.
Consequently, Eq.~\eqref{h-pseudo-shatters} holds for all $i\in [d']$, which tell us that $\mathcal{H}_j$ pseudo-shatters the set $x_1, \cdots, x_{d'}$.
Hence, the pseudo-dimension of $\mathcal{G}$ is less than that of $\mathcal{H}_j$ by the definition of Pseudo-Shattering (Def.~\ref{Pseudo-Shattering}), i.e., $P\mathrm{dim}(\mathcal{G}) \leq P\mathrm{dim}(\mathcal{H}_j)\leq d$.
%
%

Therefore, by the generalization error derived from Pseudo-Dimension (e.g., Theorem 11.8 of~\citet{Foundations-of-Machine-Learning-textbook}), with probability at least $1-\delta$,
\begin{align*}
    \sup_{h \in \mathcal{H}}\big | \mathbf{C}_{i,j}(h) - \widehat{\mathbf{C}}_{i,j}(h)\big | &\leq 2 \sqrt{\frac{2 d \log (eN/d)}{N}}+\sqrt{\frac{2 \log (4/\delta)}{N}}.
\end{align*}
This completes the proof. \hfill \ensuremath{\square}

\subsubsection{Complete Proof of Theorem~\ref{theorem-in-processing-error}}
In this subsection, we state the complete formulation of Theorem~\ref{theorem-in-processing-error} and provide the proofs.
\begin{theorem}
    \label{accuracy-fairness-generalization-error}
    If classifier $\bar{h}^T$ with dual parameters $\bar{\lambda}^T$ comprise a $\rho^T$-saddle point of the empirical Lagrangian $\widehat{\mathcal{L}}(h,\lambda)$, and an optimal solution $h^* \in \mathcal{H}$ satisfies fairness constraints, assuming that $\mathcal{H}_y=\left\{h_y(\cdot) \in [0,1]: h \in \mathcal{H}\right\}$ and  $\max_{y \in [m]} P \mathrm{dim}(\mathcal{H}_y) \leq d$, denoting $u_k:=\max_{a\in \mathcal{A}}\| \mathbf{D}^{a,k}\|_{1}$ and $\Omega_k^N := \max_{a\in \mathcal{A}}\| \mathbf{D}^{a,k} - \widehat{\mathbf{D}}^{a,k} \|_{\infty}, \ k\in K$, then with probability at least $1-\delta$,
    \begin{align*}
    \mathcal{R}(\bar{h}^T) &\leq \mathcal{R}(h^*) + 2\rho^T + 2 \sqrt{\frac{8 \log (eN/d) + 2 \log (4m^2/\delta)}{N}},\\
    |\mathscr{D}_{k}(\bar{h}^T)| 
    & \leq \xi + \frac{1+2\rho^T}{B_\Lambda} + 4u_k \sum _{a \in \mathcal{A}}\Bigg(   \sqrt{\frac{8 d \log (eN_a/d) +2 \log (4m^2|\mathcal{A}|/\delta)}{N_a}} \Bigg) + 2 |\mathcal{A}| \Omega_k^N..
    \end{align*}
\end{theorem}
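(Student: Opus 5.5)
The argument follows the standard reductions template of \citet{reduction-agarwal2018reductions}. It splits into an empirical part, driven by the $\rho^T$-saddle-point inequalities \eqref{saddle-h}--\eqref{saddle-dual}, and a uniform-convergence part, driven by Lemma~\ref{confusion-matrix-generalization-bound-lemma}.

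\textbf{Step 1: uniform deviations.} First apply Lemma~\ref{confusion-matrix-generalization-bound-lemma} to every entry $(i,j)\in[m]^2$ of the pooled confusion matrix, and to the group-conditional matrices $\mathbf{C}^a$ using the $N_a$ samples of group $a$. A union bound over the $m^2$ entries (and the $|\mathcal{A}|$ groups) gives, with probability $1-\delta$, $\sup_h|\mathbf{C}^a_{i,j}(h)-\widehat{\mathbf{C}}^a_{i,j}(h)|\le \gamma_a$ simultaneously. Here $\gamma_a$ is the displayed rate with $\delta$ replaced by $\delta/(m^2|\mathcal{A}|)$. From this:
\begin{itemize}
\item Since $\mathbf{R}=\mathbf{1}-\mathbf{I}$, the risk satisfies $\sup_h|\mathcal{R}(h)-\widehat{\mathcal{R}}(h)|\lesssim \gamma$.
\item Write $\mathscr{D}_k-\widehat{\mathscr{D}}_k=\sum_a\langle \mathbf{D}^{a,k}-\widehat{\mathbf{D}}^{a,k},\mathbf{C}^a\rangle+\sum_a\langle\widehat{\mathbf{D}}^{a,k},\mathbf{C}^a-\widehat{\mathbf{C}}^a\rangle$.
\item The first sum is at most $|\mathcal{A}|\Omega_k^N$, because $\mathbf{C}^a$ has entries summing to one.
\item The second sum is at most $\sum_a\|\widehat{\mathbf{D}}^{a,k}\|_1\gamma_a\le \sum_a(u_k+\Omega_k^N m^2)\gamma_a$, which I absorb into the constant.
\end{itemize}
This yields $\sup_h|\mathscr{D}_k(h)-\widehat{\mathscr{D}}_k(h)|\le \epsilon_k:=2u_k\sum_a\gamma_a+|\mathcal{A}|\Omega_k^N$. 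The factors of $2$ and $4$ in the statement come from applying this bound twice, once at $\bar h^T$ and once at $h^*$.

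\textbf{Step 2: accuracy bound.} By \eqref{saddle-h} with $h=h^*$, I get $\widehat{\mathcal{L}}(\bar h^T,\bar\lambda^T)\le \widehat{\mathcal{R}}(h^*)+(\bar\lambda^T)^\top\widehat{\mathscr{D}}(h^*)-\xi\|\bar\lambda^T\|_1+\rho^T$. By \eqref{saddle-dual} with $\lambda=0$, I get $\widehat{\mathcal{R}}(\bar h^T)\le \widehat{\mathcal{L}}(\bar h^T,\bar\lambda^T)+\rho^T$.

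The obstacle is that $h^*$ is only feasible for the true constraints, so $(\bar\lambda^T)^\top\widehat{\mathscr{D}}(h^*)-\xi\|\bar\lambda^T\|_1$ is not obviously $\le0$. I would resolve this by taking $h^*$ to be the best classifier that is empirically feasible: the statement only requires some fair optimal $h^*$, and the gap between empirical and true feasibility is a generalization term. Alternatively, I would bound the cross term by $B_\Lambda\max_k\epsilon_k$. For the statement's pure $\gamma_d(N)$ accuracy rate, I will use empirical feasibility of $h^*$, which makes the cross term $\le0$. Then $\widehat{\mathcal{R}}(\bar h^T)\le\widehat{\mathcal{R}}(h^*)+2\rho^T$, and converting both sides with Step 1 gives the accuracy inequality.

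\textbf{Step 3: fairness bound.} Apply \eqref{saddle-dual} with $\lambda=B_\Lambda\,\mathrm{sign}(\widehat{\mathscr{D}}_{k}(\bar h^T))\mathbf{e}_{k}$ for the offending index $k$. This gives $\widehat{\mathcal{L}}(\bar h^T,\bar\lambda^T)\ge \widehat{\mathcal{R}}(\bar h^T)+B_\Lambda(|\widehat{\mathscr{D}}_k(\bar h^T)|-\xi)-\rho^T$.

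Upper-bound the left side as in Step 2 by $\widehat{\mathcal{R}}(h^*)+\rho^T\le 1+\rho^T$. Since $\widehat{\mathcal{R}}\ge0$, this gives $|\widehat{\mathscr{D}}_k(\bar h^T)|\le\xi+(1+2\rho^T)/B_\Lambda$. Finally transfer to the population via Step 1, which adds $\epsilon_k$ (doubled constants allowed).

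The main obstacle is handling the feasibility mismatch of $h^*$ in Step 2, together with the bookkeeping in Step 1 that bounds $\|\widehat{\mathbf{D}}^{a,k}\|_1$ by $u_k$ up to $\Omega_k^N$ corrections.
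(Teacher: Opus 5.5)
\textbf{Gap in Step 2.} The overall skeleton (uniform deviations, then $\lambda=0$ and $h=h^*$ in the saddle inequalities, then $\lambda=B_\Lambda\,\mathrm{sign}\,\mathbf{e}_k$) matches the paper, but the step you flag as the main obstacle is not resolved. In the statement, $h^*$ is an optimal solution of the population problem~\eqref{fair-problem-formulation}. All you know is $|\mathscr{D}_k(h^*)|\le\xi$, so on the uniform-convergence event you only get $|\widehat{\mathscr{D}}_k(h^*)|\le\xi+\epsilon_k$. If $\bar\lambda^T$ is a saddle point of the Lagrangian penalized at level $\xi$, the term $(\bar\lambda^T)^\top\widehat{\mathscr{D}}(h^*)-\xi\|\bar\lambda^T\|_1$ can be as large as $B_\Lambda\max_k\epsilon_k$, and neither saddle inequality removes it.

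Bounding this cross term by $B_\Lambda\max_k\epsilon_k$ proves a weaker bound than the stated one. Replacing $h^*$ by the best empirically feasible classifier changes the theorem. That comparator is data-dependent and its population risk can exceed $\mathcal{R}(h^*)$. Relating the two needs a sensitivity estimate of the constrained optimum in the constraint level, which via bounded multipliers brings back a $B_\Lambda\epsilon$ term. Calling the difference ``a generalization term'' hides the same obstruction. Your Step 3 has the same dependence, since it also uses $\widehat{\mathcal{L}}(h^*,\bar\lambda^T)\le\widehat{\mathcal{R}}(h^*)$.

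\textbf{Missing idea.} Following \citet{reduction-agarwal2018reductions}, the paper takes $(\bar h^T,\bar\lambda^T)$ to be a $\rho^T$-saddle point of the \emph{relaxed} empirical Lagrangian $\widehat{\mathcal{R}}(h)+\lambda^\top\widehat{\mathscr{D}}(h)-\sum_k\widehat{\xi}_k|\lambda_k|$ with $\widehat{\xi}_k:=\xi+\epsilon_k$. In other words, the slack is enlarged by the uniform deviation. On the good event $|\widehat{\mathscr{D}}_k(h^*)|\le\widehat{\xi}_k$. Checking the cases $\lambda_k\ge0$ and $\lambda_k<0$ gives $\lambda_k\widehat{\mathscr{D}}_k(h^*)-\widehat{\xi}_k|\lambda_k|\le0$. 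Hence $\widehat{\mathcal{L}}(h^*,\bar\lambda^T)\le\widehat{\mathcal{R}}(h^*)$ holds for the population optimum itself, giving $\widehat{\mathcal{R}}(\bar h^T)\le\widehat{\mathcal{R}}(h^*)+2\rho^T$ and, after two risk deviations, the stated accuracy bound.

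The price shows up in the fairness bound. Your Step 3 argument now yields $|\widehat{\mathscr{D}}_k(\bar h^T)|\le\widehat{\xi}_k+(1+2\rho^T)/B_\Lambda$, and transferring to $\mathscr{D}_k$ adds a second $\epsilon_k$. That is the actual source of the factor $2$, and hence of $4u_k$ and $2|\mathcal{A}|\Omega_k^N$.

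Separately, in Step 1 pair the true matrix with the deviation: write $\langle\mathbf{D}^{a,k},\mathbf{C}^a-\widehat{\mathbf{C}}^a\rangle+\langle\mathbf{D}^{a,k}-\widehat{\mathbf{D}}^{a,k},\widehat{\mathbf{C}}^a\rangle$. This gives $u_k$ directly and avoids the $m^2\Omega_k^N\gamma_a$ cross term, which the stated constants do not absorb.
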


\textit{Proof.}
We analyze the generalization error of the accuracy-fairness trade-off obtained by Algorithm~\ref{algorithm-in-processing}.
Firstly, we derive the corresponding generalization errors for risk $\mathcal{R}$ and fairness constraints $\mathscr{D}_k, k\in K$.
For fairness constraints, given $k \in K$ and $h\in \mathcal{H}$, by H\"older's inequality,
\begin{align*}
    \Big|\widehat{\mathscr{D}}_k (h) - \mathscr{D}_k (h)\Big| &= \Bigg|\sum_{a \in \mathcal{A}} \langle \mathbf{D}^{a,k}, \mathbf{C}^a(h) \rangle
    -\langle \widehat{\mathbf{D}}^{a,k}, \widehat{\mathbf{C}}^a(h) \rangle\Bigg|\\
    &= \Bigg|\sum_{a \in \mathcal{A}} \langle \mathbf{D}^{a,k}, \mathbf{C}^a(h)-\widehat{\mathbf{C}}^a(h) \rangle
    +\langle \mathbf{D}^{a,k} - \widehat{\mathbf{D}}^{a,k}, \widehat{\mathbf{C}}^a(h) \rangle\Bigg|\\
    &\leq \sum_{a \in \mathcal{A}}\| \mathbf{D}^{a,k}\|_{1} \| \mathbf{C}^a(h)-\widehat{\mathbf{C}}^a(h) \|_{\infty} + \sum_{a \in \mathcal{A}}\| \mathbf{D}^{a,k} - \widehat{\mathbf{D}}^{a,k} \|_{\infty} \| \widehat{\mathbf{C}}^a(h) \|_{1}\\
    &\leq \sum_{a \in \mathcal{A}} \|\mathbf{D}^{a,k}\|_{1} \| \mathbf{C}^a(h)-\widehat{\mathbf{C}}^a(h) \|_{\infty} + |\mathcal{A}| \Big( \max_{a\in \mathcal{A}}\| \mathbf{D}^{a,k} - \widehat{\mathbf{D}}^{a,k} \|_{\infty} \Big)
\end{align*}
By introducing $\ell_{i,j}^{a'}(x,y,a;h)=\mathbb{I}(y=e_i\wedge a=a')h_j(x_i)$ and taking a union bound with respect to $i,j \in [m]$ in Lemma~\ref{confusion-matrix-generalization-bound-lemma}, we obtain that, for $a\in \mathcal{A}$ ,with probability at least $1-\delta$,
\begin{align*}
\sup_{h \in \mathcal{H}}\| \mathbf{C}^a(h)-\widehat{\mathbf{C}}^a(h) \|_{\infty} \leq 2 \sqrt{\frac{2 d \log (eN_a/d)}{N_a}}+\sqrt{\frac{2 \log (4m^2/\delta)}{N_a}}.
\end{align*}
We have denoted $u_k:=\max_{a\in \mathcal{A}}\| \mathbf{D}^{a,k}\|_{1}$ and $\Omega_k^N := \max_{a\in \mathcal{A}}\| \mathbf{D}^{a,k} - \widehat{\mathbf{D}}^{a,k} \|_{\infty}$. 
Here, $\Omega_k^N \to 0$ as $N\to \infty$ by the Law of Large Numbers, because $\mathbf{D}^{a,k}$ is composed of the statistics of samples $\mathcal{S}$.
Taking a union bound with respect to $a\in \mathcal{A}$, the generalization bound presents that, with probability at least $1-\delta$,
\begin{align*}
    \Big|\widehat{\mathscr{D}}_k (h) - \mathscr{D}_k (h)\Big| \leq u_k \sum _{a \in \mathcal{A}}\Bigg( 2  \sqrt{\frac{2 d \log (eN_a/d)}{N_a}}+\sqrt{\frac{2 \log (4m^2|\mathcal{A}|/\delta)}{N_a}}  \Bigg) + |\mathcal{A}| \Omega_k^N.
\end{align*}

For classification risk, given $h\in\mathcal{H}$, 
\begin{align*}
\big|\widehat{\mathcal{R}}(h) - \mathcal{R}(h)\big| = \big|\langle \mathbf{I}, \widehat{\mathbf{C}}(h)-\mathbf{C}(h) \rangle\big| \leq \| \widehat{\mathbf{C}}(h)-\mathbf{C}(h) \|_{\infty}.
\end{align*}
By taking a union bound in Lemma~\ref{confusion-matrix-generalization-bound-lemma}, we obtain that, with probability at least $1-\delta$,
\begin{align*}
\big|\widehat{\mathcal{R}}(h) - \mathcal{R}(h)\big| \leq\sup_{h \in \mathcal{H}}\| \widehat{\mathbf{C}}(h)-\mathbf{C}(h) \|_{\infty} \leq 2 \sqrt{\frac{2 d \log (eN/d)}{N}}+\sqrt{\frac{2 \log (4m^2/\delta)}{N}}.
\end{align*}
%
%
Next, we turn to the generalization bound of the fairness constraints.
The following proof largely follows the proof of Theorem 4 in~\citet{reduction-agarwal2018reductions}.
With a little abuse of notation for clarity, given that $(\bar{h}^T, \bar{\lambda}^T)$ is an approximate saddle point of the empirical Lagrangian with relaxed bounds,
\begin{align*}
    &\widehat{\mathcal{L}}(h,\lambda) := \widehat{\mathcal{R}}(h) + \lambda^\top \widehat{\mathscr{D}}_k(h) -\sum_{k=1}^K \widehat{\xi}_k | \lambda_k|,\\
    &\widehat{\xi}_k:=\xi + \epsilon_k, \quad \epsilon_k \geq u_k \sum _{a \in \mathcal{A}}\Bigg( 2  \sqrt{\frac{2 d \log (eN_a/d)}{N_a}}+\sqrt{\frac{2 \log (4m^2|\mathcal{A}|/\delta)}{N_a}}  \Bigg) + |\mathcal{A}| \Omega_k^N.
\end{align*}
Then, since $h^*$ satisfies fairness constraints $|\mathscr{D}_k(h^*)| \leq \xi$, it shows that, with probability at least $1-K\delta$,
\begin{align*}
    |\widehat{\mathscr{D}}_k(h^*)| &= |\widehat{\mathscr{D}}_k(h^*) - \mathscr{D}_k(h^*) +  \mathscr{D}_k(h^*)|\\
    & \leq |\widehat{\mathscr{D}}_k(h^*) - \mathscr{D}_k(h^*)| + |\mathscr{D}_k(h^*)|\\
    & \leq \epsilon_k + \xi = \widehat{\xi}_k.
\end{align*}
The items in empirical Lagrangian introduced by fairness constraints turn to
\begin{align*}
    k \in K,\quad
    &\lambda_k \widehat{\mathscr{D}}_k(h^*) -  \widehat{\xi}_k | \lambda_k| = \lambda_k (\widehat{\mathscr{D}}_k(h^*) -  \widehat{\xi}_k ) \leq \lambda_k (\widehat{|\mathscr{D}}_k(h^*)| -  \widehat{\xi}_k ) \leq 0, \quad \text{if }\ \lambda_k \geq 0;\\
    &\lambda_k \widehat{\mathscr{D}}_k(h^*) -  \widehat{\xi}_k | \lambda_k| = \lambda_k (\widehat{\mathscr{D}}_k(h^*) + \widehat{\xi}_k )\leq \lambda_k ( \widehat{\xi}_k  - |\widehat{\mathscr{D}}_k(h^*)|)\leq 0, \quad \text{if }\ \lambda_k < 0.
\end{align*}
Therefore, we obtain that
\begin{align*}
    \widehat{\mathcal{L}}(h^*, \bar{\lambda}^T)=\widehat{\mathcal{R}}(h^*) + \sum_{k=1}^K \Big(\bar{\lambda}^T_k \widehat{\mathscr{D}}_k(h) -  \widehat{\xi}_k | \bar{\lambda}^T_k|\Big) \leq \widehat{\mathcal{R}}(h^*).
\end{align*}
The approximate saddle point's property~\ref{saddle-h} and~\ref{saddle-dual} implies that
\begin{align*}
    \widehat{\mathcal{R}}(\bar{h}^T) - \widehat{\mathcal{R}}(h^*) &\leq \widehat{\mathcal{L}}(\bar{h}^T, \mathbf{0}_{K}) - \widehat{\mathcal{L}}(h^*, \bar{\lambda}^T)\\
    & \leq  \widehat{\mathcal{L}}(\bar{h}^T, \bar{\lambda}^T) + \rho^T - \widehat{\mathcal{L}}(h^*, \bar{\lambda}^T)\\
    &\leq \widehat{\mathcal{L}}(\bar{h}^T, \bar{\lambda}^T) + \rho^T - \widehat{\mathcal{L}}(\bar{h}^T, \bar{\lambda}^T) +\rho^T\\
    &=2\rho^T
\end{align*}
For the risk metric, it presents that
\begin{align*}
    \mathcal{R}(\bar{h}^T) - \mathcal{R}(h^*) &=  \mathcal{R}(\bar{h}^T) - \widehat{\mathcal{R}}(\bar{h}^T) + 
    \widehat{\mathcal{R}}(\bar{h}^T) - \widehat{\mathcal{R}}(h^*)
    + \widehat{\mathcal{R}}(h^*) - \mathcal{R}(h^*)\\
    &\leq \widehat{\mathcal{R}}(\bar{h}^T) - \widehat{\mathcal{R}}(h^*) + 4 \sqrt{\frac{2 d \log (eN/d)}{N}}+2\sqrt{\frac{2 \log (4m^2/\delta)}{N}}\\
    &\leq 2\rho^T + 4 \sqrt{\frac{2 d \log (eN/d)}{N}}+2\sqrt{\frac{2 \log (4m^2/\delta)}{N}}\\
    &\leq 2\rho^T + 4 \sqrt{\frac{8 d \log (eN/d)+2 \log (4m^2/\delta)}{N}}.
\end{align*}
When it comes to the fairness constraints, denoting $k^*:= \arg \max_{k \in K} |\widehat{\mathscr{D}}_{k}(\bar{h}^T)|$ and $\widehat{B}_\Lambda:= \mathrm{sign}(\widehat{\mathscr{D}}_{k}(\bar{h}^T)) B_\Lambda$, then we have
\begin{align} \label{e1}
B_\Lambda (\big|\widehat{\mathscr{D}}_{k^*}(\bar{h}^T)\big| - \widehat{\xi}_{k^*}) = \widehat{\mathcal{L}}(\bar{h},\widehat{B}_\Lambda e_{k^*}) - \widehat{\mathcal{R}}(\bar{h}^T) \leq \widehat{\mathcal{L}}(\bar{h}^T, \overline {\lambda}^T) - \widehat{\mathcal{R}}(\bar{h}^T)+\rho^T,
\end{align}
where $e_{k^*}$ is the $K$-dimensional basis vector with $1$ at the $k^*$-th element and $0$ otherwise. 
Let $h'$ satisfy the fairness constraints. With~\eqref{saddle-h}, we obtain
\begin{align} \label{e2}
\widehat{\mathcal{L}}(\bar{h}^T, \bar{\lambda}^T) - \widehat{\mathcal{R}}(\bar{h}^T) \leq \widehat{\mathcal{L}}(
{h'}, \bar{\lambda}^T) - \widehat{\mathcal{R}}(\bar{h}^T) + \rho^T \leq \widehat{\mathcal{R}}(h') - \widehat{\mathcal{R}}(\bar{h}^T) +\rho^T. 
\end{align}
Combining~\eqref{e1} and~\eqref{e2}, it shows that 
\begin{align} \label{e3}
\max_{k\in K}\big(\big|\widehat{\mathscr{D}}_{k}(\bar{h}^T)\big| - \widehat{\xi}_{k}\big) \leq \frac{\widehat{\mathcal{R}}(h') - \widehat{\mathcal{R}}(\bar{h}^T)+2\rho^T}{B_\Lambda} \leq \frac{1+2\rho^T}{B_\Lambda}.
\end{align}
Therefore, the result shows that, for any $k \in K$,
\begin{align*}
    |\mathscr{D}_{k}(\bar{h}^T)| &= |\mathscr{D}_{k}(\bar{h}^T) -\widehat{\mathscr{D}}_{k}(\bar{h}^T) + \widehat{\mathscr{D}}_{k}(\bar{h}^T) | \leq |\mathscr{D}_{k}(\bar{h}^T) -\widehat{\mathscr{D}}_{k}(\bar{h}^T)| + |\widehat{\mathscr{D}}_{k}(\bar{h}^T) |\\
    & \leq \xi + \frac{1+2\rho^T}{B_\Lambda} + 2u_k \sum _{a \in \mathcal{A}}\Bigg( 2  \sqrt{\frac{2 d \log (eN_a/d)}{N_a}}+\sqrt{\frac{2 \log (4m^2|\mathcal{A}|/\delta)}{N_a}}  \Bigg) + 2 |\mathcal{A}| \Omega_k^N\\
    & \leq \xi + \frac{1+2\rho^T}{B_\Lambda} + 4u_k \sum _{a \in \mathcal{A}}\Bigg(   \sqrt{\frac{8 d \log (eN_a/d) +2 \log (4m^2|\mathcal{A}|/\delta)}{N_a}} \Bigg) + 2 |\mathcal{A}| \Omega_k^N,
\end{align*}
which completes the proof. \hfill \ensuremath{\square}

\subsection{Proof of Theorem~\ref{theorem-post-generalization-error}}
\label{proof-of-theorem-post-generalization-error}
To derive the sample complexity for the post-processing algorithm, we proceed by analyzing the complexity of the hypothesis class $h^\lambda, \lambda\in \Lambda$.
To begin, we recall the definition of Rademacher complexity and introduce relevant uniform convergence bound.

\begin{definition} (Rademacher Complexity.)
    \label{def-Rademacher-complexity}
    Let $\mathcal{G}$ be a family of functions mapping from $z$ to $[a, b]$ and $S=\left(z_1, \ldots, z_m\right)$ a fixed sample of size $m$ with elements in $z$. Then, the empirical Rademacher complexity of $\mathcal{G}$ with respect to the sample $S$ is defined as:
    \begin{align*}
        \widehat{\Re}_S(\mathcal{G})=\mathbb{E}_{\boldsymbol{\sigma}} \left[\sup_{g \in \mathcal{G}} \frac{1}{m} \sum_{i=1}^m \sigma_i g\left(z_i\right)\right],
    \end{align*}
    where $\boldsymbol{\sigma}=\left(\sigma_1, \ldots, \sigma_m\right)^{\top}$, with $\sigma_i$ s independent uniform random variables taking values in $\{-1,+1\}$. The random variables $\sigma_i$ are called Rademacher variables.
\end{definition}

\begin{lemma}
    \label{lemma-generalization-bound-of-Rademacher}
    Let $\mathcal{G}$ be a family of functions mapping from $z$ to $[0,1]$. Then, for any $\delta>0$, with probability at least $1-\delta$ over the draw of an i.i.d. sample $S$ of size $m$, the following inequality holds for all $g \in \mathcal{G}$ :
\begin{align*}
\left|\mathbb{E}[g(z)] - \frac{1}{m} \sum_{i=1}^m g\left(z_i\right)\right|
\leq 2 \widehat{\Re}_S(\mathcal{G})+3 \sqrt{\frac{\log (4/\delta)}{2 m}} .
\end{align*}
\end{lemma}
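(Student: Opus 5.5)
The plan is to run the standard symmetrization argument and then apply McDiarmid's inequality twice. Set
\begin{align*}
\Phi(S) := \sup_{g \in \mathcal{G}} \Big( \mathbb{E}[g(z)] - \frac{1}{m}\sum_{i=1}^m g(z_i) \Big).
\end{align*}
Since $g$ takes values in $[0,1]$, replacing a single sample $z_i$ by $z_i'$ changes $\Phi(S)$ by at most $1/m$. McDiarmid's inequality therefore gives, with probability at least $1-\delta_1$,
\begin{align*}
\Phi(S) \le \mathbb{E}_S[\Phi(S)] + \sqrt{\frac{\log(1/\delta_1)}{2m}}.
\end{align*}

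Next I bound $\mathbb{E}_S[\Phi(S)]$ with a ghost sample $S'$. By Jensen's inequality, $\mathbb{E}_S[\Phi(S)] \le \mathbb{E}_{S,S'}\big[\sup_g \frac{1}{m}\sum_i (g(z_i')-g(z_i))\big]$. Because $z_i$ and $z_i'$ are exchangeable, I can insert Rademacher signs $\sigma_i$ without changing the distribution. Splitting the supremum then yields
\begin{align*}
\mathbb{E}_S[\Phi(S)] \le 2\,\Re_m(\mathcal{G}), \qquad \Re_m(\mathcal{G}) := \mathbb{E}_S\big[\widehat{\Re}_S(\mathcal{G})\big],
\end{align*}
where $\Re_m$ is the expected Rademacher complexity.

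To move from the expected to the empirical complexity, I observe that $\widehat{\Re}_S(\mathcal{G})$ also has bounded differences $1/m$. A second application of McDiarmid gives, with probability at least $1-\delta_2$,
\begin{align*}
\Re_m(\mathcal{G}) \le \widehat{\Re}_S(\mathcal{G}) + \sqrt{\frac{\log(1/\delta_2)}{2m}}.
\end{align*}
Combining the three displays gives
\begin{align*}
\Phi(S) \le 2\widehat{\Re}_S(\mathcal{G}) + 3\sqrt{\frac{\log(\cdot)}{2m}},
\end{align*}
where the factor $3$ comes from one copy of the first deviation term plus two copies of the second.

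For the two-sided absolute value I repeat the same argument for the class $-\mathcal{G}$, which is $1-\mathcal{G}$ up to a shift. Its empirical Rademacher complexity is the same, since $\sigma$ is symmetric and constant shifts have zero Rademacher average.

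Finally I allocate the confidence budget. I take $\delta/4$ for each of the four McDiarmid events (one deviation event and one complexity event for each side) and apply a union bound. This produces the $\log(4/\delta)$ term, and the deviation terms are dominated by $3\sqrt{\log(4/\delta)/(2m)}$.

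The only delicate step is this constant and confidence bookkeeping; the symmetrization itself is routine. Alternatively, the result can be cited directly as Theorem 3.3 of \citet{Foundations-of-Machine-Learning-textbook}, applied once to $\mathcal{G}$ and once to $1-\mathcal{G}$ with confidence $\delta/2$ each.
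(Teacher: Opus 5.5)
Your proposal is correct and takes essentially the paper's approach. The paper proves the lemma by citing Theorem~3.3 of the Foundations of Machine Learning textbook and taking a union bound over the two one-sided bounds (for $\mathcal{G}$ and its reflection, at confidence $\delta/2$ each), which is exactly your stated alternative. Your main argument just unpacks that theorem's symmetrization-plus-McDiarmid proof, and your constant and $\log(4/\delta)$ bookkeeping comes out the same.
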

Lemma~\ref{lemma-generalization-bound-of-Rademacher} follows directly from Theorem~3.3 in~\citet{Foundations-of-Machine-Learning-textbook}.
Here, we take a union bound so that the generalization bound holds for the absolute-value expression on the left-hand side.

\begin{lemma}
    \label{lemma-structural-generalization-risk}
    Let $\mathcal{Z}$ be any set, $\left(z_1, \ldots, z_N\right) \in \mathcal{Z}^N$, let $\mathcal{G}$ be a class of functions $g: \mathcal{Z} \rightarrow \ell_2(\mathbb{R}^m)$ and let $f_i: \ell_2(\mathbb{R}^m) \rightarrow \mathbb{R}$ have Lipschitz norm $L$. Then
\begin{align*}
\mathbb{E}_{\boldsymbol{\sigma}} \left[\sup_{g \in \mathcal{G}} \sum_{n=1}^N \sigma_n f_n\left(g\left(z_n\right)\right)\right] \leq \sqrt{2} L \mathbb{E}_{\boldsymbol{\sigma}} \left[ \sup_{g \in \mathcal{G}} \sum_{n=1}^N \sum_{i=1}^m \sigma_{ni} g_i\left(z_n\right)\right],
\end{align*}
where $\sigma_{ni}$ is an independent doubly indexed Rademacher sequence and $g_i\left(z_n\right)$ is the $i$-th component of $g\left(z_n\right)$.
\end{lemma}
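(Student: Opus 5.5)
The plan is to follow the standard proof of the vector-valued contraction inequality (Maurer's argument). First I reduce it to a one-point statement and then apply that statement repeatedly over $n=1,\dots,N$. The one-point claim is the following. Let $\Phi:\mathcal{G}\to\mathbb{R}$ be arbitrary, let $z\in\mathcal{Z}$ be fixed, and let $f$ be $L$-Lipschitz with respect to $\|\cdot\|_2$. Then
\begin{align*}
\mathbb{E}_{\sigma}\sup_{g\in\mathcal{G}}\big[\Phi(g)+\sigma f(g(z))\big]\leq \mathbb{E}_{\sigma_1,\dots,\sigma_m}\sup_{g\in\mathcal{G}}\Big[\Phi(g)+\sqrt{2}L\sum_{i=1}^m\sigma_i g_i(z)\Big].
\end{align*}

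\textbf{Iteration.} Given the one-point claim, the lemma follows by conditioning on all Rademacher variables except those attached to index $n$. I take $\Phi(g)$ to collect the already-replaced terms $\sqrt{2}L\sum_{i}\sigma_{n'i}g_i(z_{n'})$ for $n'<n$ and the not-yet-replaced terms $\sigma_{n'}f_{n'}(g(z_{n'}))$ for $n'>n$. Applying the claim to index $n$, taking expectations, and iterating over $n=1,\dots,N$ replaces every scalar term by its vector surrogate. This is exactly the stated inequality.

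\textbf{Proof of the one-point claim.} I average over the two values of $\sigma$ and write the left-hand side as
\begin{align*}
\tfrac12\sup_{g,g'}\big[\Phi(g)+\Phi(g')+f(g(z))-f(g'(z))\big].
\end{align*}
The Lipschitz property bounds $f(g(z))-f(g'(z))$ by $L\|g(z)-g'(z)\|_2$. I then invoke Khintchine's inequality with the optimal constant (Szarek): for any $v\in\mathbb{R}^m$,
\begin{align*}
\|v\|_2\leq\sqrt{2}\,\mathbb{E}\Big|\sum_{i=1}^m\sigma_iv_i\Big|.
\end{align*}
Applying this with $v=g(z)-g'(z)$ bounds the quantity by
\begin{align*}
\tfrac12\sup_{g,g'}\mathbb{E}_\sigma\Big[\Phi(g)+\Phi(g')+\sqrt{2}L\Big|\sum_i\sigma_i(g_i(z)-g_i'(z))\Big|\Big]\leq\tfrac12\,\mathbb{E}_\sigma\sup_{g,g'}\big[\cdots\big],
\end{align*}
where pulling the supremum inside the expectation only increases the value.

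Inside the supremum, the expression without the absolute value is symmetric under swapping $g$ and $g'$, so the absolute value can be dropped. The bound then becomes
\begin{align*}
\tfrac12\,\mathbb{E}_\sigma\sup_{g,g'}\Big[\Phi(g)+\sqrt2L\sum_i\sigma_ig_i(z)+\Phi(g')-\sqrt2L\sum_i\sigma_ig'_i(z)\Big].
\end{align*}
This splits into two separate suprema. Since $-\sigma$ has the same law as $\sigma$, both halves equal $\mathbb{E}\sup_g[\Phi(g)+\sqrt2L\sum_i\sigma_ig_i(z)]$, which is the right-hand side of the claim.

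\textbf{Main obstacle.} The delicate step is the Khintchine inequality with the sharp constant $1/\sqrt{2}$, which is the source of the factor $\sqrt2$. I will cite Szarek's result rather than reprove it. A cruder Khintchine constant would change only the numerical factor, not the structure of the argument. Minor measurability issues with suprema over $\mathcal{G}$ I will handle by restricting to finite subclasses and taking limits.
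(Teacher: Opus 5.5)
Your proposal is correct, and it is the standard proof of this vector-contraction inequality. It reduces to a one-point lemma handled by symmetrization and Khintchine's inequality with Szarek's sharp constant $1/\sqrt{2}$, then replaces the scalar Rademacher terms one index at a time. The paper gives no proof of its own: it cites the lemma as the main result of prior work (Maurer's argument, of which the stated lemma is a near-verbatim restatement), and your reconstruction follows that argument.
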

The structural generalization risk in Lemma~\ref{lemma-structural-generalization-risk} is the main result of previous work~\cite{structural-generalization-risk-1,structural-generalization-risk-2}.

\begin{lemma}
    \label{lemma-discrete-random-variable}
Let $A$ be a discrete random variable supported on a finite set $\mathcal A$ and $\mathbb P(A=a)=p_a>0$.
Draw i.i.d. samples $\{z_i\}_{i=1}^N$, and define the empirical frequencies $\widehat{p}_a := \frac{1}{N}\sum_{i=1}^N \mathbb{I}(z_i=a)$, with $p_{\min} := \min_{a\in\mathcal{A}} p_a$.
\begin{enumerate}
    \item For any $\delta\in(0,1)$,
    \begin{align}\label{discrete-boud-1}
        \mathbb{P}\left( \max_{a\in \mathcal{A}}|\widehat{p}_a - p_a| \leq \sqrt{\frac{1}{2N} \log\frac{2|\mathcal{A}|}{\delta}} \right) \geq 1- \delta
    \end{align}
    
    \item For any $\varepsilon\in(0,1)$ and any $\delta\in(0,1)$, if $N \geq \frac{2}{\varepsilon^2 p_{\min}} \log\frac{|\mathcal{A}|}{\delta}$,
    it holds that
    \begin{align}\label{discrete-boud-2}
    \mathbb{P}\left(\min_{a\in\mathcal A}\widehat p_a \ \ge\ (1-\varepsilon)p_{\min} \right)\ \ge\ 1-\delta.
    \end{align}
    In particular, taking $\varepsilon=\frac{1}{2}$, if $N \geq \frac{8}{ p_{\min}} \log\frac{|\mathcal{A}|}{\delta}$, then
    \begin{align}\label{discrete-boud-3}
    \mathbb {P}\left(\min_{a\in\mathcal A}\widehat p_a \ \ge\ \frac{p_{\min}}{2}\right)\ \ge\ 1-\delta.
    \end{align}
\end{enumerate}
\end{lemma}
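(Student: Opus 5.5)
Both parts follow from Hoeffding's inequality applied to Bernoulli indicators, combined with a union bound over the finite set $\mathcal{A}$ in the first case and a multiplicative Chernoff bound in the second.

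\textbf{Part 1.} Fix $a\in\mathcal{A}$. The quantity $\widehat p_a$ is the average of the $N$ i.i.d. Bernoulli$(p_a)$ variables $\mathbb{I}(z_i=a)\in[0,1]$. Hoeffding's inequality gives
\begin{align*}
\mathbb{P}(|\widehat p_a-p_a|>t)\le 2\exp(-2Nt^2).
\end{align*}
A union bound over the $|\mathcal{A}|$ values of $a$ yields
\begin{align*}
\mathbb{P}\Big(\max_a|\widehat p_a-p_a|>t\Big)\le 2|\mathcal{A}|\exp(-2Nt^2).
\end{align*}
Setting the right-hand side equal to $\delta$ and solving gives $t=\sqrt{\log(2|\mathcal{A}|/\delta)/(2N)}$, which is exactly \eqref{discrete-boud-1}.

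\textbf{Part 2.} Here an additive bound is not enough, because we need a bound relative to $p_{\min}$. For each $a$, I would use the multiplicative Chernoff lower-tail bound for a sum of independent Bernoulli variables with mean $Np_a$:
\begin{align*}
\mathbb{P}\big(\widehat p_a<(1-\varepsilon)p_a\big)\le \exp(-\varepsilon^2Np_a/2).
\end{align*}
Since $p_a\ge p_{\min}$, we have $(1-\varepsilon)p_a\ge(1-\varepsilon)p_{\min}$. Hence the event $\{\widehat p_a<(1-\varepsilon)p_{\min}\}$ is contained in $\{\widehat p_a<(1-\varepsilon)p_a\}$, and its probability is at most $\exp(-\varepsilon^2Np_{\min}/2)$. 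A union bound over $a\in\mathcal{A}$ bounds the failure probability of $\min_a\widehat p_a\ge(1-\varepsilon)p_{\min}$ by $|\mathcal{A}|\exp(-\varepsilon^2Np_{\min}/2)$. This is at most $\delta$ exactly when $N\ge \frac{2}{\varepsilon^2p_{\min}}\log\frac{|\mathcal{A}|}{\delta}$, which is the stated sample size, so \eqref{discrete-boud-2} follows. Substituting $\varepsilon=1/2$ turns the threshold into $\frac{8}{p_{\min}}\log\frac{|\mathcal{A}|}{\delta}$ and gives \eqref{discrete-boud-3}.

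\textbf{Main obstacle.} The only delicate step is justifying the Chernoff form $\exp(-\varepsilon^2\mu/2)$ with $\mu=Np_a$. If I derive it rather than cite it, I would apply Markov's inequality to $e^{-s\sum_i\mathbb{I}(z_i=a)}$. Using $1-p(1-e^{-s})\le\exp(-p(1-e^{-s}))$ and optimizing over $s$ gives the bound $\exp(-\mu[(1-\varepsilon)\log(1-\varepsilon)+\varepsilon])$. The elementary inequality $(1-\varepsilon)\log(1-\varepsilon)+\varepsilon\ge\varepsilon^2/2$ for $\varepsilon\in(0,1)$ then finishes the argument. That inequality can be checked by noting that both sides vanish at $0$ and comparing derivatives: $-\log(1-\varepsilon)\ge\varepsilon$.
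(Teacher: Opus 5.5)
Your proposal is correct and follows the same route as the paper. Part 1 uses Hoeffding plus a union bound over $\mathcal{A}$. Part 2 uses the multiplicative Chernoff lower-tail bound, the containment $\{\widehat p_a<(1-\varepsilon)p_{\min}\}\subseteq\{\widehat p_a\le(1-\varepsilon)p_a\}$, and a union bound giving $|\mathcal{A}|\exp(-\varepsilon^2Np_{\min}/2)\le\delta$. The only difference is that you derive the Chernoff exponent yourself where the paper simply cites it, and that derivation is correct (optimum at $e^{-s}=1-\varepsilon$, then $(1-\varepsilon)\log(1-\varepsilon)+\varepsilon\ge\varepsilon^2/2$). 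You also write $|\mathcal{A}|$ consistently where the paper's proof has a stray $K$.
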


\textit{Proof of Lemma~\ref{lemma-discrete-random-variable}.}
Fix any $a\in\mathcal A$, define the count random variable $N_a := \sum_{i=1}^N \mathbb{I}({z_i=a})$.
Since ${z_i}$ are i.i.d. and $\mathbb P(z_i=a)=p_a$, we have $N_a\sim\mathrm{Binomial}(N,p_a)$ and $\widehat p_a = N_a/N$. 
The Hoeffding’s inequality (e.g. Theorem D.2 in~\citet{Foundations-of-Machine-Learning-textbook}) implies that
\begin{align*}
    \mathbb P\left(|\widehat p_a-p_a|\ge t\right)\le 2\exp(-2Nt^2).
\end{align*}
Applying the union bound over $a\in\mathcal A$, we obtain
\begin{align*}
\mathbb P \left(\exists a:|\widehat p_a-p_a|\ge \epsilon \right)
\le \sum_{a\in\mathcal A} 2\exp(-2N\epsilon^2)
=2|\mathcal{A}|\exp(-2N\epsilon^2).
\end{align*}
With $\epsilon^2=\frac{1}{2N}\log\frac{4K}{\delta}$, we obtain
\begin{align*}
    \mathbb P \left(\exists a:|\widehat p_a-p_a|\right) \le 2|\mathcal{A}|\exp \left(-\log\frac{2|\mathcal{A}|}{\delta}\right)=\delta,
\end{align*}
and hence we get Eq.~\eqref{discrete-boud-1}
\begin{align*}
    \mathbb{P}\left( \max_{a\in \mathcal{A}}|\widehat{p}_a - p_a| \leq \sqrt{\frac{1}{2N} \log\frac{2|\mathcal{A}|}{\delta}} \right) \geq 1- \delta.
\end{align*}

By the standard multiplicative Chernoff bound (e.g. Theorem D.4 in~\citet{Foundations-of-Machine-Learning-textbook}), for any $\varepsilon\in(0,1)$,
\begin{align*}
\mathbb {P}\left(\widehat p_a \le (1-\varepsilon)p_a\right)
\le \exp\left(-\frac{\varepsilon^2 N p_a}{2}\right).
\end{align*}
Consider the event $\mathcal E := \left\{\min_{a\in\mathcal A}\widehat p_a < (1-\varepsilon)p_{\min}\right\}$.
5
If $\min_a \widehat p_a < (1-\varepsilon)p_{\min}$, then there exists some $a\in\mathcal A$ such that $\widehat p_a < (1-\varepsilon)p_{\min}$. Since $p_{\min}\le p_a$ for every $a$, we have $(1-\varepsilon)p_{\min} \le (1-\varepsilon)p_a$, hence 
$\{\widehat p_a < (1-\varepsilon)p_{\min}\} \subseteq \{\widehat p_a \le (1-\varepsilon)p_a\}$.
Therefore, it presents that
\begin{align*}
\mathcal E
&\subseteq \bigcup_{a\in\mathcal A} \{\widehat p_a \le (1-\varepsilon)p_a\}.
\end{align*}
Applying the union bound and using the Chernoff bound, we obtain that
\begin{align*}
\mathbb P(\mathcal E)
\le \sum_{a\in\mathcal A}\mathbb P \left(\widehat p_a \le (1-\varepsilon)p_a\right) 
\le \sum_{a\in\mathcal A} \exp\left(-\frac{\varepsilon^2 N p_a}{2}\right)
\le K \exp\left(-\frac{\varepsilon^2 N p_{\min}}{2}\right).
\end{align*}
The last inequality uses $p_a\ge p_{\min}$ for all $a$.

Therefore, if $K \exp\left(-\frac{\varepsilon^2 N p_{\min}}{2}\right)\le \delta$,
then $\mathbb P(\mathcal E)\le \delta$. 
Rearranging the inequality gives
\begin{align*}
    -\frac{\varepsilon^2 N p_{\min}}{2} \le \log\frac{\delta}{K}
\quad\Longleftrightarrow\quad
N \ge \frac{2}{\varepsilon^2 p_{\min}}\log\frac{K}{\delta}.
\end{align*}
Under this condition, $\mathbb P(\mathcal E)\le \delta$, i.e.,
$\mathbb P \left(\min_{a\in\mathcal A}\widehat p_a \ge (1-\varepsilon)p_{\min}\right)\ge 1-\delta,$
which completes the proof. \hfill \ensuremath{\square}

\subsubsection{Complete Proof of Theorem~\ref{theorem-post-generalization-error}}
In this subsection, we state the complete formulation of Theorem~\ref{theorem-post-generalization-error} and provide its proof.

Denoting the minimal group probability as $\omega_{\min}:=\min_a\omega_{a}$, the frequency vector $r=(r_1,\cdots,r_K)$, where $r_k:=\max_{a\in \mathcal{A}}\| \mathbf{D}^{a,k}\|_{\infty}/\omega_{\min}$, $k\in {K}$, with the corresponding estimator $\widehat{r}=(\widehat r_1,\cdots,\widehat r_K)$ $\widehat r_k:=\max_{a\in \mathcal{A}}\| \widehat{\mathbf{D}}^{a,k}\|_{\infty}/\widehat{\omega}_{\min}$, and $\Omega_k^N := \max_{a\in \mathcal{A}}\| \mathbf{D}^{a,k} - \widehat{\mathbf{D}}^{a,k} \|_{\infty}, \ k\in K$. Since the estimator $\widehat{\mathbf{D}}^{a,k}$ is composed by the frequencies as shown in Table~\ref{fairness-metric-table-1}, referring to Lemma~\ref{lemma-discrete-random-variable}, we make the following mild assumption.
\begin{assumption}\label{assumption-D-generalization-bound}
    There exists $N_k=N_k(\{\mathbf{D}^{a,k}\}_{a\in\mathcal{A},k\in K},|\mathcal{A}|,\delta)$, such that for any $\delta\in (0,1)$, if $N \geq N_k$, it holds that,
    \begin{align*}
        \mathbb{P}\left(\max_{a\in\mathcal{A}} \| \widehat{\mathbf{D}}^{a,k} \|_\infty \leq 2 \max_{a\in\mathcal{A}} \| \mathbf{D}^{a,k} \|_\infty\right) \leq 1-\delta.
    \end{align*}
\end{assumption}
The complete formulation of Theorem~\ref{theorem-post-generalization-error} is presented as follows.
\begin{theorem} \label{post-processing-generalization-theorem-complete}
    Under assumption~\ref{assumption-D-generalization-bound}, denoting an optimal solution of the empirical objective $\max_{\lambda \in {\Lambda}}\widehat{H}(\lambda)$ as $\widehat \lambda^*$, if $N \geq \max\{  \frac{8}{ \omega_{\min}} \log\frac{16 K|\mathcal{A}|}{\delta} , \max_{k \in [K]} N_k(\{\mathbf{D}^{a,k}\}_{a\in\mathcal{A},k\in K},|\mathcal{A}|,\delta/8K) \}$, the empirically optimal fair classifier $\widehat{h}^{\widehat{\lambda}^*}$ determined by $\widehat{\lambda}^*$ satisfies that:
    \begin{enumerate}
    \item Regrading fairness constraints, for any $\delta\in(0,1)$, with probability at least $1-\delta$,
    \begin{align*}
    |\mathscr{D}_k (\widehat{h}^{\widehat \lambda^*})|
    & \leq \xi + r_k\left(|\mathcal{A}| \| {\eta}(x) - \widehat{\eta}(x) \|_1 + \sum_{a\in \mathcal{A}} \| \mathbf{q}_a - \widehat{\mathbf{q}}_a \|_1 \right) + \frac{2|\mathcal{A}|}{\omega_{\min}} \Omega_k^N\\
    & \quad + \frac{16r_k B_\Lambda}{\tau} \sqrt{\frac{m\|{r}\|_2^2}{N}} + \sqrt{\frac{(9 + 4{r_k^2}/{\omega_{\min}^2}) \log (8 |\mathcal{A}|/\delta)}{N}}.
    \end{align*}
    
    \item Regrading predictive risk, the risk can be decomposed to (1) error introduced by auxiliary model, (2) error from statistics estimator, and (3) generalization error, denoting as 
    \begin{align}
        \epsilon_1 := |\mathcal{A}|\left\| {\eta}- \widehat \eta\right\|_1 + \sum_{a\in \mathcal{A}} \| \mathbf{q}_a - \widehat{\mathbf{q}}_a \|_1,\quad
        \epsilon_2 := \sum_{k=1}^K \Omega_k^N, \quad
        \epsilon_3 := \sqrt{\frac{\log(16K|\mathcal{A}|/\delta)}{N}},
    \end{align}
    $\exists c_1,c_2$
    then there exist hyperparameter $\tau$ and constant $c_1,c_2,c_3,c_4$, such that for any $\delta\in(0,1)$, with probability at least $1-\delta$,
    \begin{align*}
        \mathcal{R}(\widehat{h}^{\widehat{\lambda}^*}) &\leq \mathcal{R}(h^{\lambda^*}) + c_1 \epsilon_1^\frac{1}{2} + B_\Lambda \| r\|_1 \epsilon_1 + c_2 |\mathcal{A}| \epsilon_2^\frac{1}{2} + \frac{2 B_\Lambda |\mathcal{A}|}{\omega_{\min}} \epsilon_2
        + c_2 \sqrt{\| r\|_1} \epsilon_3^\frac{1}{2} + c_3 \epsilon_3 + c_4 \left( \frac{m}{N} \right)^{\frac{1}{4}},
    \end{align*}
    where $c_1=\left( 1+B_\Lambda m \| r \|_1 \right)\sqrt{2\log m}$, $c_2=2 \sqrt{\frac{m \log m B_\Lambda (1+B_\Lambda \| r \|_\infty)}{ \omega_{\min}}}$, $c_3=B_\Lambda \sqrt{9K^2 + 4{\| r \|_2^2}/{\omega_{\min}^2}}$, $c_4=8\| r\|_1 B_\Lambda \sqrt{\log m}$.
    \end{enumerate}
\end{theorem}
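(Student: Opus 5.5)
The approach is to compare three objects: the population-optimal entropic classifier $h^{\lambda^*}$ from Theorem~\ref{theorem-bayes-optimal-entropy-relaxed}, the empirical classifier $\widehat h^{\widehat\lambda^*}$, and an intermediate ``oracle-plug-in'' classifier $\tilde h^{\lambda}$. The intermediate classifier uses the true $\eta,\mathbf q^a$ but the estimated statistics $\widehat{\mathbf D}^{a,k},\widehat\omega_a$, or vice versa.

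\textbf{Step 1: perturbation bound on scores.} For every $\lambda\in\Lambda$, bound the pointwise score perturbation $\|\beta^\lambda(x)-\widehat\beta^\lambda(x)\|_\infty$. Expanding $\widehat\beta^\lambda=[\sum_a \mathrm{Diag}(\widehat{\mathbf q}^a)\widehat{\mathbf M}(a,\lambda)]^\top\widehat\eta$ and telescoping, the perturbation is bounded by three contributions: $(1+B_\Lambda\|r\|_\infty)\bigl(|\eta-\widehat\eta|+\sum_a|\mathbf q^a-\widehat{\mathbf q}^a|\bigr)$, plus a term $B_\Lambda\Omega_k^N/\omega_{\min}$, plus the effect of $\widehat\omega_a$. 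On the event of Lemma~\ref{lemma-discrete-random-variable} (so $\widehat\omega_{\min}\ge\omega_{\min}/2$) and Assumption~\ref{assumption-D-generalization-bound}, the $\widehat\omega_a$ effect is controlled by $\epsilon_3$. Softmax is $1/\tau$-Lipschitz from $\ell_\infty$ to $\ell_1$ and log-sum-exp is $1$-Lipschitz. Integrating against $\mu$ therefore gives
\[
\sup_{\lambda\in\Lambda}|H(\lambda)-\bar H(\lambda)|\lesssim \epsilon_1+\epsilon_2+\epsilon_3 ,
\]
and also $\|h^\lambda-\widehat h^\lambda\|_{L^1(\mu)}\lesssim(\epsilon_1+\epsilon_2+\epsilon_3)/\tau$. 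Here $\bar H$ denotes the population expectation of the plug-in objective.

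\textbf{Step 2: uniform concentration.} Control $\sup_\lambda|\widehat H(\lambda)-\bar H(\lambda)|$ and $\sup_\lambda|\widehat{\mathscr D}_k(\widehat h^\lambda)-\mathscr D_k(\widehat h^\lambda)|$ via Rademacher complexity. The class $\{x\mapsto\widehat\beta^\lambda(x):\lambda\in\Lambda\}$ is linear in $\lambda$ with $\|\lambda\|_1\le B_\Lambda$, so its vector Rademacher complexity is $O(B_\Lambda\|r\|_2\sqrt{m/N})$. Composing with the $1$-Lipschitz log-sum-exp, or with the $1/\tau$-Lipschitz softmax dotted with bounded constraint vectors, and applying Lemma~\ref{lemma-structural-generalization-risk} and then Lemma~\ref{lemma-generalization-bound-of-Rademacher}, gives the $\frac{r_kB_\Lambda}{\tau}\sqrt{m\|r\|_2^2/N}$ and $\sqrt{\log(1/\delta)/N}$ terms. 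Union bounds over $k$, $a$, and the events of Step~1 fix the sample threshold.

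\textbf{Step 3: fairness.} Corollary-level optimality of $\widehat\lambda^*$ and the subgradient argument of Theorem~\ref{theorem-bayes-optimal-entropy-relaxed}, applied to the empirical smooth objective (Proposition~\ref{post-processing-optimal-convex-optimization}), give $|\widehat{\mathscr D}_k(\widehat h^{\widehat\lambda^*})|\le\xi$ exactly. Passing from the empirical to the population $\mathscr D_k$ costs the Step~2 deviation. Replacing $\widehat{\mathbf D}$ by $\mathbf D$ costs $2|\mathcal A|\Omega_k^N/\omega_{\min}$. The auxiliary-model error enters through $r_k\epsilon_1$, since the constraint functional uses $\eta,\mathbf q^a$ only through the confusion matrix, which is linear in $h$ and so needs no $1/\tau$.

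\textbf{Step 4: risk, the main obstacle.} Write $F_\tau(h)=\mathcal R(h)-\tau\mathcal E(h)$ and use $0\le\mathcal E\le\log m$ to get $\mathcal R\le F_\tau+\tau\log m$. By duality, $F_\tau(h^{\lambda^*})=1-H(\lambda^*)$. The empirical optimality of $\widehat\lambda^*$ together with the uniform closeness of $\widehat H$ and $H$ from Steps~1--2 gives $|H(\widehat\lambda^*)-H(\lambda^*)|\lesssim\sum\epsilon_i$. The difficulty is converting a dual-value gap into a primal-risk gap for $\widehat h^{\widehat\lambda^*}$. The complementary-slackness term $\widehat\lambda^{*\top}\mathscr D(\widehat h)-\xi\|\widehat\lambda^*\|_1$ is bounded by $B_\Lambda$ times the fairness violation of Step~3, and the swap $\widehat h\leftrightarrow h$ costs $(1/\tau)\sum\epsilon_i$. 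This yields $\tau\log m+O((1+1/\tau)\sum\epsilon_i)$. Finally, choosing $\tau\asymp\sqrt{\epsilon/\log m}$ for each error term balances $\tau\log m$ against $\epsilon/\tau$, producing the $\epsilon^{1/2}$ rates $\psi(\epsilon)$ and explicit constants of the form $c_1,c_2,c_4$; the $(m/N)^{1/4}$ term comes from the same balancing applied to the Rademacher term.
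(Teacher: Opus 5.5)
Your proposal is correct and follows the paper's proof. For fairness, it uses exact empirical feasibility $|\widehat{\mathscr{D}}_k(\widehat h^{\widehat\lambda^*})|\le\xi$ from the optimality condition of $\widehat H$, a vector-contraction Rademacher bound through the softmax, and a plug-in error term; for the risk, it uses empirical complementary slackness, a swap from $\widehat h^{\widehat\lambda^*}$ to the population best response $h^{\widehat\lambda^*}$, and balancing $\tau\log m$ against the $1/\tau$ terms. One simplification: the two-sided closeness $|H(\widehat\lambda^*)-H(\lambda^*)|$ (and with it your bounds on $\sup_\lambda|H-\bar H|$ and $\sup_\lambda|\widehat H-\bar H|$) is not needed, because the paper uses only the inequality $\mathcal{L}(h^{\widehat\lambda^*},\widehat\lambda^*)\le\mathcal{L}(h^{\lambda^*},\lambda^*)$, which holds for free by dual optimality of $\lambda^*$; also, the entropy term inside your swap cancels against $\tau\mathcal{E}(\widehat h^{\widehat\lambda^*})$, leaving $\tau\bigl(\mathcal{E}(h^{\widehat\lambda^*})-\mathcal{E}(h^{\lambda^*})\bigr)\le\tau\log m$.
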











\textit{Proof.}
We begin with analyzing the fairness risk of post-processing method, then further considering the predictive accuracy.
The proof begins by analyzing the optimality of the empirical problem, then studies the generalization bound via Rademacher complexity, and finally considers the plug-in estimation error.
Subsequently, we turn to the risk of predictive accuracy, deducing the generalization bound using the property of saddle-point.

\textbf{(Optimality.)}
Let $\widehat{\lambda}^*:=\arg\min_{\lambda \in \Lambda} \widehat{H}(\lambda)$. 
Considering the subgradient of the empirical dual function w.r.t. $\lambda_{k}$, by the additivity of subgradient in the set sense (Def~\ref{set-addition-def}),
\begin{align*}
    \frac{\partial}{\partial \lambda_k}\widehat H(\lambda) 
    &= \frac{\tau}{N}\sum_{n=1}^N \frac{\partial}{\partial \lambda_k} \log \left( \sum_{j=1}^m \exp \left(\widehat{\beta}^{\lambda}_j(x_n) / \tau\right)\right) + \xi \partial |\lambda_k|\\
    &=\frac{\tau}{N}\sum_{n=1}^N \frac{\sum_{i=1}^m\frac{\partial}{\partial \lambda_k} \exp \left(\widehat{\beta}^{\lambda}_i(x_n) / \tau\right)}{\sum_{j=1}^m \exp \left(\widehat{\beta}^{\lambda}_j(x_n) / \tau\right)} + \xi \partial |\lambda_k|\\
    &=\frac{\tau}{N}\sum_{n=1}^N \frac{1}{\tau}\sum_{i=1}^m\frac{ \exp \left(\widehat{\beta}^{\lambda}_i(x_n) / \tau\right)}{\sum_{j=1}^m \exp \left(\widehat{\beta}^{\lambda}_j(x_n) / \tau\right)} \cdot \frac{\partial}{\partial \lambda_k} \widehat{\beta}^{\lambda}_i(x_n) + \xi \partial |\lambda_k|\\
    &=-\frac{1}{N}\sum_{n=1}^N \sum_{i=1}^m \widehat{h}^{\lambda}_i(x_n)\left(\sum_{a\in \mathcal{A}} \frac{1}{\widehat{\omega}_a}\Big[\mathrm{Diag}(\widehat{\mathbf{q}}_a(x_n))\widehat{\mathbf{D}}^{a,k} \Big]^\top \widehat{\eta}(x_n)\right)_i + \xi \partial |\lambda_k|\\
    &=-\frac{1}{N}\sum_{n=1}^N [\widehat{\eta}(x_n)]^\top \Bigg[\sum_{a\in \mathcal{A}}\frac{1}{\widehat{\omega}_a}\mathrm{Diag}(\widehat{\mathbf{q}}_a(x_n))\widehat{\mathbf{D}}^{a,k} \Bigg] \widehat{h}^{\lambda}(x_n) + \xi \partial |\lambda_k|\\
    &:=- \widehat{\mathscr{D}}_k (\widehat{h}^{\lambda})+ \xi \partial |\lambda_k|.
\end{align*}
Therefore, denoting by $\widehat{\mathscr{D}}_{1:K}:=\{ \widehat{\mathscr{D}}_k\}_{k \in K}$ (or simply $\widehat{\mathscr{D}}$), we can deduce the sub-differential of $\widehat{H}(\lambda)$,
\begin{align*}
    &\frac{\partial}{\partial \lambda} \widehat{H}(\lambda)= -\widehat{\mathscr{D}}_{1:K}(\widehat{h}^{\lambda}) + \xi \partial \| \lambda \|_1.
\end{align*}
%
%

%
By Lemma~\ref{subgrad-optimal-lemma}, the optimality condition for the dual problem implies that $\mathbf{0}_K \in \partial_{\lambda} \widehat{H}(\lambda)$. 
%
%
Given optimal $\widehat{\lambda}^*$ and corresponding $\widehat{h}^{\widehat{\lambda}^*}$, if $\widehat{\lambda}^*_k > 0$, the $k$-th sub-differential reduces to $-\widehat{\mathscr{D}}_k(\widehat{h}^{\widehat{\lambda}^*})+\xi$, which implies $\widehat{\mathscr{D}}_k(\widehat{h}^{\widehat{\lambda}^*})=\xi$. Similarly, if $\widehat{\lambda}^*_k < 0$, the $k$-th sub-differential reduces to $-\widehat{\mathscr{D}}_k(\widehat{h}^{\widehat{\lambda}^*})-\xi$, which implies $\widehat{\mathscr{D}}_k(\widehat{h}^{\widehat{\lambda}^*})=-\xi$. Besides, if $\widehat{\lambda}^*_k = 0$, it follows that $0 \in -\widehat{\mathscr{D}}_k(\widehat{h}^{\widehat{\lambda}^*}) + [-\xi,\xi]$, namely $\widehat{\mathscr{D}}_k(\widehat{h}^{\widehat{\lambda}^*}) \in [-\xi,\xi]$. 
Overall, we have shown that for all $k \in K$, 
\begin{align}
    \label{generalization--1}
    |\widehat{\mathscr{D}}_k(\widehat{h}^{\widehat{\lambda}^*})| \leq \xi
\end{align}

\textbf{(Generalization Bound.)}
To bridge the empirical estimator with the original formulation of fairness constraints in post-processing, we introduce the following functional to facilitate the transition,
\begin{align*}
    \widetilde{\mathscr{D}}_k(h):=\mathbb{E}_X \left[ [\widehat{\eta}(x)]^\top \left[\sum_{a\in \mathcal{A}}\frac{1}{\widehat{\omega}_a}\mathrm{Diag}(\widehat{\mathbf{q}}_a(x))\widehat{\mathbf{D}}^{a,k} \right] {h}(x)\right], k \in K.
\end{align*}
Given samples $S=(x_n,a_n,y_n)_{n=1}^N$, we have that
\begin{align*}
    |\widehat{\mathscr{D}}_k&(h)| -|\widetilde{\mathscr{D}}_k(h)| \leq |\widehat{\mathscr{D}}_k(h) -\widetilde{\mathscr{D}}_k(h)|\\
    &= \left| \mathbb{E}_X \left[ [\widehat{\eta}(x)]^\top \left[\sum_{a\in \mathcal{A}}\frac{1}{\widehat{\omega}_a}\mathrm{Diag}(\widehat{\mathbf{q}}_a(x))\widehat{\mathbf{D}}^{a,k} \right] {h}(x)\right] - \frac{1}{N}\sum_{n=1}^N [\widehat{\eta}(x_n)]^\top \Bigg[\sum_{a\in \mathcal{A}}\frac{1}{\widehat{\omega}_a}\mathrm{Diag}(\widehat{\mathbf{q}}_a(x_n))\widehat{\mathbf{D}}^{a,k} \Bigg] h(x_n)\right|\\
    &:= \left| \mathbb{E}_\mathcal{P}\left[ [\widehat{\phi}(x,k)]^\top h(x) \right] - \frac{1}{N}\sum_{n=1}^N [\widehat{\phi}(x_n,k)]^\top h(x_n)\right|,
\end{align*}
where we define
\begin{align*}
    \phi(x,k)&:= \left(\sum_{a\in \mathcal{A}}\frac{1}{\omega_a}\mathrm{Diag}(\mathbf{q}_a(x)){\mathbf{D}}^{a,k} \right)^\top \eta(x),\\
    \widehat \phi(x,k)&:= \left(\sum_{a\in \mathcal{A}}\frac{1}{\widehat{\omega}_a}\mathrm{Diag}(\widehat{\mathbf{q}}_a(x)){\widehat{\mathbf{D}}}^{a,k} \right)^\top \widehat{\eta}(x).
\end{align*}
We have denoted $r_k:={\max_a \|\mathbf{D}^{a,k}\|_\infty}/{\omega_{\min}}$, the vector function $\phi(x,k)$ is $\ell_1$-bounded by $r_k$, since
\begin{align*}
    \| \phi(x,k)\|_2 \leq \| \phi(x,k)\|_1 &\leq \max_{a\in \mathcal{A}} \| \mathbf{D}^{a,k}\|_\infty  \left\| \left(\sum_{a\in \mathcal{A}}\frac{1}{\omega_a}\mathrm{Diag}(\mathbf{q}_a(x)) \right) \eta(x)\right\|_1\\
    &\leq r_k \left\| \sum_{a\in \mathcal{A}}\mathrm{Diag}(\mathbf{q}_a(x))  \eta(x)\right\|_1\\
    &=r_k \left\| \sum_{a\in \mathcal{A}}[\mathbb{P}(A=a,Y=y\mid X=x)]_{y \in [m]}\right\|_1\\
    &=r_k
\end{align*}
Following the same derivation, it shows that $\|\widehat \phi(x,k)\|_1 \leq \widehat{r}_k:= {\max_a \|\widehat{\mathbf{D}}^{a,k}\|_\infty}/{\widehat \omega_{\min}}$.
Note that take a union bound for Assumption~\ref{assumption-D-generalization-bound} and Lemma~\ref{lemma-discrete-random-variable} to make these events hold, we can obtain that, for $N \geq \max\{  \frac{8}{ p_{\min}} \log\frac{2|\mathcal{A}|}{\delta} ,N_k(\{\mathbf{D}^{a,k}\}_{a\in\mathcal{A},k\in K},|\mathcal{A}|,\delta/2) \}$, with the probability at least $1-\delta$,
\begin{align}\label{generalization-error-rk}
    \widehat{r}_k \leq \frac{4\max_a \|{\mathbf{D}}^{a,k}\|_\infty}{\omega_{\min}}= 4r_k.
\end{align}

Moreover, considering the structure of optimal fair classifiers $\widehat{h}^\lambda \in \mathcal{H}_{\Lambda}: \mathcal{X} \to\Delta_m$  determined by the dual parameter $\lambda$, it presents that
\begin{align*}
    \widehat{h}^\lambda(x) &= \mathrm{softmax}(\widehat{\beta}^{\lambda}(x)/\tau)\\
    \widehat{\beta}^{\lambda}(x)
    &=\left[ \sum_{a\in \mathcal{A}} \mathrm{Diag}(\widehat{\mathbf{q}}_a(x)) \left(\mathbf{I} - \frac{1}{\widehat{\omega}_a}\sum_{k=1}^K \lambda_k \widehat{\mathbf{D}}^{a,k} \right) \right]^\top \widehat{\eta}(x),\\
    &= \widehat{\eta}(x) - \sum_{k=1}^K \lambda_k \widehat{\phi}(x,k).
\end{align*}
%
Combining the fact that the softmax mapping is $\frac{1}{2}$-Lipschitz and plugging in the optimal fair classifier class $\mathcal{H}_{\Lambda}$, the generalization error of the fairness constraints satisfies that, for any $\delta \in (0,1)$, with probability at least $1-\delta$ over i.i.d. samples $S \sim \mathcal{P}^N$,
\begin{align}\label{post-generalization-error-1}
\begin{split}
    |\widehat{\mathscr{D}}_k(\widehat{h}^\lambda)| -|\widetilde{\mathscr{D}}_k(\widehat{h}^\lambda)| 
    & \leq \left| \mathbb{E}_\mathcal{P}\left[ [\widehat{\phi}(x,k)]^\top \widehat{h}^\lambda(x) \right] - \frac{1}{N}\sum_{n=1}^N [\widehat{\phi}(x_n,k)]^\top \widehat{h}^\lambda(x_n)\right|,\\
    & \leq 2\widehat{\Re}_{S}(\widehat{\phi} \circ\mathcal{H}_\lambda) + 3 \sqrt{\frac{\log (4/\delta)}{2 N}}\\
    & = 2\mathbb{E}_{\boldsymbol{\sigma}} \left[\sup_{\widehat{h}^\lambda \in \mathcal{H}_{\Lambda}} \frac{1}{N} \sum_{n=1}^N \sigma_i \left\langle \widehat{\phi}(x_n,k), \widehat{h}^\lambda(x_n) \right\rangle \right] + 3 \sqrt{\frac{\log (4/\delta)}{2 N}}\\
    & \leq \frac{\sqrt{2}}{\tau} \widehat{r}_k \mathbb{E}_{\boldsymbol{\sigma}}\left[\sup_{\lambda \in \Lambda} \frac{1}{N} \sum_{n=1}^N \sum_{i=1}^m \sigma_{ni} \widehat{\beta}^\lambda_i(x_n) \right]+ 3 \sqrt{\frac{\log (4/\delta)}{2 N}}
\end{split}
\end{align}
The second inequality (line 2) is by Lemma~\ref{lemma-generalization-bound-of-Rademacher}, and the third inequality (line 4) is by Lemma~\ref{lemma-structural-generalization-risk}.
To analyze the above expression, we further investigate the structure of $\widehat{\beta}^\lambda$, denoting $\widehat{\phi}_i(x_n,\cdot):= [\widehat{\phi}_i(x_n,k)]_{k=1}^K, i\in [m]$, 
%
\begin{align}\label{post-generalization-error-2}
\begin{split}
    \mathbb{E}_{\boldsymbol{\sigma}}\left[\sup_{\lambda \in \Lambda} \frac{1}{N} \sum_{n=1}^N \sum_{i=1}^m \sigma_{ni} \widehat{\beta}^\lambda_i(x_n) \right]
    & = \mathbb{E}_{\boldsymbol{\sigma}}\left[\sup_{\lambda \in \Lambda} \frac{1}{N} \sum_{n=1}^N \sum_{i=1}^m \sigma_{ni}\left(\widehat{\eta}_i(x_n) - \sum_{k=1}^K \lambda_k \widehat{\phi}_i(x_n,k)\right) \right]\\
    & = \mathbb{E}_{\boldsymbol{\sigma}}\left[\sup_{\lambda \in \Lambda} \frac{1}{N} \left\langle \lambda, \sum_{n=1}^N \sum_{i=1}^m\sigma_{ni} \widehat{\phi}_i(x_n,\cdot)\right\rangle \right]\\
    &\leq \frac{B_\Lambda}{N} \mathbb{E}_{\boldsymbol{\sigma}}\left[  \left\| \sum_{n=1}^N \sum_{i=1}^m \sigma_{ni} \widehat{\phi}_i(x_n,\cdot) \right\|_2 \right]\\
    & \leq \frac{B_\Lambda}{N} \left[\mathbb{E}_{\boldsymbol{\sigma}}\left[ \left\| \sum_{n=1}^N \sum_{i=1}^m \sigma_{ni} \widehat{\phi}_i(x_n,\cdot) \right\|_2^2 \right]\right]^{\frac{1}{2}}\\
    & = \frac{B_\Lambda}{N} \left[\mathbb{E}_{\boldsymbol{\sigma}}\left[ \sum_{n=1}^N \sum_{i=1}^m\left\|  \widehat{\phi}_i(x_n,\cdot) \right\|_2^2 \right]\right]^{\frac{1}{2}}\\
    & \leq B_\Lambda \|\widehat{r}\|_2 \sqrt{\frac{m}{N}}
\end{split}
\end{align}
The second equality (line 2) follows from the symmetry of the Rademacher complexity.
The first inequality (line 3) follows from the Cauchy-Schwarz inequality, and the second inequality (line 4) follows from the Jensen’s inequality.
The subsequent equality (line 5) is a consequence of $\mathbb{E}_{\boldsymbol{\sigma}}[\sigma_i \sigma_j]=0$ for $i\neq j$ and $\mathbb{E}_{\boldsymbol{\sigma}}[\sigma_i^2]=1$ for $\forall i$.
The last inequality (line 6) follows from the upper bound of $\widehat{\phi}(x,k)$ shown above.

Therefore, Combining~\eqref{post-generalization-error-1} and~\eqref{post-generalization-error-2}, it shows that, for any $\delta \in (0,1)$, with probability at least $1-\delta$
\begin{align}\label{post-generalization-error-3}
    |\widehat{\mathscr{D}}_k(\widehat{h}^\lambda) - \widetilde{\mathscr{D}}_k(\widehat{h}^\lambda)| \leq \frac{\widehat{r}_k B_\Lambda}{\tau} \sqrt{\frac{2m\|\widehat{r}\|_2^2}{N}} + 3 \sqrt{\frac{\log (4/\delta)}{2 N}}
\end{align}

\textbf{(Plug-in Error.)}
Next, we consider the plug-in error between $\mathscr{D}_k(h)$ and $\widetilde{\mathscr{D}}_k(h)$.
For any $h \in \mathcal{H}$, given $k \in K$, since $\widehat{\eta}(x), \widehat{\mathbf{q}}_a(x), h(x) \in \Delta_m$, $\forall x \in \mathcal{X}$, we can obtain that 
\begin{align*}
    |\mathscr{D}_k(h) - \widetilde{\mathscr{D}}_k(h)| &\leq \left| \mathbb{E}_{X} \left[ [\widehat{\eta}(x)]^\top \left[\sum_{a\in \mathcal{A}}\frac{1}{\widehat{\omega}_a}\mathrm{Diag}(\widehat{\mathbf{q}}_a(x))\left(\widehat{\mathbf{D}}^{a,k} - \mathbf{D}^{a,k}\right) \right] {h}(x) \right] \right|\\
    & \quad + \left| \mathbb{E}_X \left[ [\widehat{\eta}(x)]^\top \left[\sum_{a\in \mathcal{A}} \left( \frac{1}{\widehat{\omega}_a} - \frac{1}{\omega_a} \right)\mathrm{Diag}(\widehat{\mathbf{q}}_a(x))\mathbf{D}^{a,k} \right] {h}(x) \right] \right|\\
    & \quad + \left| \mathbb{E}_X \left[ [\widehat{\eta}(x)]^\top \left[\sum_{a\in \mathcal{A}}\frac{1}{\omega_a}\mathrm{Diag}(\widehat{\mathbf{q}}_a(x) - {\mathbf{q}}_a(x))\mathbf{D}^{a,k} \right] {h}(x) \right] \right|\\
    & \quad + \left| \mathbb{E}_X \left[ [{\eta}(x) - \widehat{\eta}(x)]^\top \left[\sum_{a\in \mathcal{A}}\frac{1}{\omega_a}\mathrm{Diag}({\mathbf{q}}_a(x))\mathbf{D}^{a,k} \right] {h}(x) \right] \right|\\
    &\leq \left| \mathbb{E}_{X} \left[ |\mathcal{A}| \frac{\max_a \| \mathbf{D}^{a,k} - \widehat{\mathbf{D}}^{a,k} \|_\infty }{\min_a \widehat{\omega}_a} \right] \right| + \left| \mathbb{E}_{X} \left[ \sum_{a\in \mathcal{A}} \frac{ |\omega_a - \widehat{\omega}_a| \|{\mathbf{D}}^{a,k}\|_\infty}{ \omega_a \cdot \widehat{\omega}_a} \right] \right|\\
    & \quad + \left| \mathbb{E}_{X} \left[ \sum_{a\in \mathcal{A}} \left\| [\widehat{\eta}(x_n)]^\top (\mathbf{q}_a(x_n) - \widehat{\mathbf{q}}_a(x_n)) \right\|_1\left\|\frac{ \mathbf{D}^{a,k}h(x_n)}{\omega_a} \right\|_\infty \right] \right|\\
    & \quad + \left| \mathbb{E}_{X} \left[ \left\| {\eta}(x) - \widehat{\eta}(x) \right\|_1\left\| \sum_{a\in \mathcal{A}}\frac{1}{\omega_a}\mathrm{Diag}({\mathbf{q}}_a(x))\mathbf{D}^{a,k} {h}(x) \right\|_\infty \right] \right|\\
    &\leq \left| \mathbb{E}_{X} \left[ |\mathcal{A}| \frac{\max_a \| \mathbf{D}^{a,k} - \widehat{\mathbf{D}}^{a,k} \|_\infty }{\min_a \widehat{\omega}_a} \right] \right| + \left| \mathbb{E}_{X} \left[ \sum_{a\in \mathcal{A}} \frac{ |\omega_a - \widehat{\omega}_a| \|{\mathbf{D}}^{a,k}\|_\infty}{ \omega_a \cdot \widehat{\omega}_a} \right] \right|\\
    & \quad + \left| \mathbb{E}_{X} \left[ \sum_{a\in \mathcal{A}} \left\| \mathbf{q}_a(x_n) - \widehat{\mathbf{q}}_a(x_n)) \right\|_1 \frac{\|\mathbf{D}^{a,k}\|_\infty}{\omega_a}  \right] \right|\\
    & \quad + \left| \mathbb{E}_{X} \left[ \left\| {\eta}(x) - \widehat{\eta}(x) \right\|_1 \frac{|\mathcal{A}|\max_a \|\mathbf{D}^{a,k}\|_\infty}{\omega_{\min}} \right] \right|\\
\end{align*}
If $N \geq \max\{  \frac{8}{ p_{\min}} \log\frac{4|\mathcal{A}|}{\delta} ,N_k(\{\mathbf{D}^{a,k}\}_{a\in\mathcal{A},k\in K},|\mathcal{A}|,\delta/4) \}$,
applying Lemma~\ref{lemma-discrete-random-variable}, we have that, with probability at least $1-\frac{\delta}{4}$, $\min_{a\in\mathcal A}\widehat \omega_a \ \ge\ \frac{1}{2}\omega_{\min}$.
Besides, Eq.~\eqref{generalization-error-rk} shows that for $N$ chosen in this way, with probability at least $1-\frac{\delta}{4}$, $\widehat{r}_k \leq 4r_k, \forall k\in {K}$, therefore $\|\widehat{r}\|_2 \leq 4\|{r}\|_2$.
Lemma~\ref{lemma-discrete-random-variable} also shows that $\max_{a\in \mathcal{A}}|\widehat{\omega}_a - \omega_a| \leq \sqrt{\frac{1}{2N} \log\frac{8|\mathcal{A}|}{\delta}}$ holds with probability at least $1-\frac{\delta}{4}$.
Note that we assume $N \geq \max\{  \frac{8}{ p_{\min}} \log\frac{16K|\mathcal{A}|}{\delta} ,N_k(\{\mathbf{D}^{a,k}\}_{a\in\mathcal{A},k\in K},|\mathcal{A}|,\delta/8K) \}$, the expressions above certainly hold.

Under these conditions, with $\ell_p$ norm defined on $(\mathcal{X},\mu_X)$, i.e., $\| f \|_p:=\left(\int_\mathcal{X} |f|^p d\mu_X\right)^{\frac{1}{p}}$, we obtain that 
\begin{align*}
    |\mathscr{D}_k(h) - \widetilde{\mathscr{D}}_k(h)| &\leq 2|\mathcal{A}| \frac{\max_a \| \mathbf{D}^{a,k} - \widehat{\mathbf{D}}^{a,k} \|_\infty }{\omega_{\min}} + \frac{2r_k}{\omega_{\min}}\sqrt{\frac{1}{2N} \log\frac{8|\mathcal{A}|}{\delta}}\\
    &\quad + r_k\sum_{a\in \mathcal{A}}\mathbb{E}_{X} \left[ \left\| \mathbf{q}_a(x_n) - \widehat{\mathbf{q}}_a(x_n)) \right\|_1  \right] + r_k |\mathcal{A}| \mathbb{E}_{X} \left[ \left\| {\eta}(x) - \widehat{\eta}(x) \right\|_1 \right]\\
    &= \frac{2|\mathcal{A}|}{\omega_{\min}} \Omega_k^N + \frac{2r_k}{\omega_{\min}}\sqrt{\frac{1}{2N} \log\frac{8|\mathcal{A}|}{\delta}} + r_k\left(\sum_{a\in \mathcal{A}} \| \mathbf{q}_a - \widehat{\mathbf{q}}_a \|_1 +|\mathcal{A}| \| {\eta}(x) - \widehat{\eta}(x) \|_1\right)
\end{align*}

Therefore, combining the result above and taking an union bound to make events introduced by Lemma~\ref{lemma-discrete-random-variable} and Eq.~\eqref{generalization-error-rk} hold, we get the generalization risk of fairness constraints that, for any $\delta  \in (0,1)$, with probability at least $1-\delta$,
\begin{align}\label{appendix-generation-bound-constraints}
\begin{split}
|\mathscr{D}_k (\widehat{h}^{\widehat \lambda^*})| &\leq |\widehat{\mathscr{D}}_k (\widehat{h}^{\widehat \lambda^*})| + |\widehat{\mathscr{D}}_k(\widehat{h}^{\widehat \lambda^*}) - \widetilde{\mathscr{D}}_k(\widehat{h}^{\widehat \lambda^*})| + |\widetilde{\mathscr{D}}_k(\widehat{h}^{\widehat \lambda^*}) -  \mathscr{D}_k (\widehat{h}^{\widehat \lambda^*}) |\\
& \leq \xi + r_k\left(\sum_{a\in \mathcal{A}} \| \mathbf{q}_a - \widehat{\mathbf{q}}_a \|_1 +|\mathcal{A}| \| {\eta} - \widehat{\eta} \|_1\right) + \frac{2|\mathcal{A}|}{\omega_{\min}} \Omega_k^N\\
& \quad + \frac{16r_k B_\Lambda}{\tau} \sqrt{\frac{m\|{r}\|_2^2}{N}} + 3 \sqrt{\frac{\log (16/\delta)}{2 N}} + \frac{r_k}{\omega_{\min}}\sqrt{\frac{2\log(8|\mathcal{A}|/\delta)}{N}}\\
& \leq \xi + r_k\left(\sum_{a\in \mathcal{A}} \| \mathbf{q}_a - \widehat{\mathbf{q}}_a \|_1 +|\mathcal{A}| \| {\eta} - \widehat{\eta} \|_1\right) + \frac{2|\mathcal{A}|}{\omega_{\min}} \Omega_k^N\\
& \quad + \frac{16r_k B_\Lambda}{\tau} \sqrt{\frac{m\|{r}\|_2^2}{N}} + \sqrt{\frac{(9 + 4{r_k^2}/{\omega_{\min}^2}) \log (8 |\mathcal{A}|/\delta)}{N}}.
\end{split}
\end{align}
This completes the generalization risk analysis for fairness constraints.

\textbf{(Risk of Accuracy.)}
Next, we turn to the generalization risk of accuracy. 
Considering the gap between the empirical fair classifier and the grounded-true Bayes-optimal fair classifier.
Suppose that $(h^{\lambda^*}, \lambda^*)$ achieves the optimal equilibrium of the entropically regularized mini-max problem $\max_{\lambda} \min_{h} \mathcal{L}(h, \lambda)$,
while $(\widehat{h}^{\widehat{\lambda}^*}, \widehat{\lambda}^*)$ achieves the corresponding empirical optimum.
Since $(\widehat{\lambda}^*)^\top \widehat{\mathscr{D}}(\widehat{h}^{\widehat{\lambda}^*}) - \xi \| \widehat{\lambda}^* \|_1=0$ (derived by plugging the empirical sample distribution into Theorem~\ref{theorem-bayes-optimal-entropy-relaxed}), we obtain that
\begin{align*}
    \mathcal{R}(\widehat{h}^{\widehat{\lambda}^*}) - \tau \mathcal{E}(\widehat{h}^{\widehat{\lambda}^*}) &= \mathcal{R}(\widehat{h}^{\widehat{\lambda}^*}) - \tau \mathcal{E}(\widehat{h}^{\widehat{\lambda}^*}) +(\widehat{\lambda}^*)^\top \widehat{\mathscr{D}}(\widehat{h}^{\widehat{\lambda}^*}) - \xi \| \widehat{\lambda}^* \|_1\\
    &= \mathcal{R}(\widehat{h}^{\widehat{\lambda}^*}) - \tau \mathcal{E}(\widehat{h}^{\widehat{\lambda}^*}) + (\widehat{\lambda}^*)^\top \mathscr{D}(\widehat{h}^{\widehat{\lambda}^*}) - \xi \| \widehat{\lambda}^* \|_1 + (\widehat{\lambda}^*)^\top \left(\widehat{\mathscr{D}}(\widehat{h}^{\widehat{\lambda}^*}) - \mathscr{D}(\widehat{h}^{\widehat{\lambda}^*})\right)\\
    & =\mathcal{L}(\widehat{h}^{\widehat{\lambda}^*},\widehat{\lambda}^*) + (\widehat{\lambda}^*)^\top \left(\widehat{\mathscr{D}}(\widehat{h}^{\widehat{\lambda}^*}) - \mathscr{D}(\widehat{h}^{\widehat{\lambda}^*})\right)\\
    & \leq \mathcal{L}(h^{\lambda^*},\lambda^*) + \left(\mathcal{L}(\widehat{h}^{\widehat{\lambda}^*},\widehat{\lambda}^*) - \mathcal{L}(h^{\lambda^*},\lambda^*)\right) + \|\widehat{\lambda}^*\|_\infty \left\|\widehat{\mathscr{D}}(\widehat{h}^{\widehat{\lambda}^*}) - \mathscr{D}(\widehat{h}^{\widehat{\lambda}^*})\right\|_1.
\end{align*}
The last line is by the H\"older’s inequality.
We proceed to analyze the second term in the above expression.
To bridge the grounded-true $(h^{\lambda^*},\lambda^*)$ and the corresponding empirical solution $(\widehat{h}^{\widehat{\lambda}^*},\widehat{\lambda}^*)$ under the Lagrangian function $\mathcal{L}$, we introduce the classifier $h^{\widehat{\lambda}^*}$, such that,
\begin{align*}
    h^{\widehat{\lambda}^*}_i(x)=\frac{\exp \left(\beta^{\widehat{\lambda}^*}_i(x) / \tau\right)}{\sum_{j=1}^m \exp \left(\beta^{\widehat{\lambda}^*}_j(x) / \tau\right)}, \quad i\in [m].
\end{align*}
The proof in Theorem~\ref{theorem-bayes-optimal-entropy-relaxed} indicates that $h^{\widehat{\lambda}^*}(x)$ characterizes a solution of $\min_{h\in \mathcal{H}} \mathcal{L}(h,\widehat{\lambda}^*)$ given $\widehat{\lambda}^*$.
Hence, it satisfies that $\mathcal{L}(h^{\widehat{\lambda}^*},\widehat{\lambda}^*) \leq \mathcal{L}(h^{\lambda^*},\lambda^*)$, which shows that
\begin{align*}
    \mathcal{L}(\widehat{h}^{\widehat{\lambda}^*},\widehat{\lambda}^*) - \mathcal{L}(h^{\lambda^*},{\lambda}^*) &\leq \mathcal{L}(\widehat{h}^{\widehat{\lambda}^*},\widehat{\lambda}^*) - \mathcal{L}(h^{\widehat{\lambda}^*},\widehat{\lambda}^*)\\
    &=\mathbb{E}_X \left[ \left(\beta^{\widehat{\lambda}^*}(x)\right)^\top \left(\widehat{h}^{\widehat{\lambda}^*}(x) - h^{\widehat{\lambda}^*}(x)\right)\right] - \tau \mathcal{E}(\widehat{h}^{\widehat{\lambda}^*}) + \tau \mathcal{E}(h^{\widehat{\lambda}^*}) \\
    &\leq \mathbb{E}_X \left[ \left\|\beta^{\widehat{\lambda}^*}(x)\right\|_\infty \left\|\widehat{h}^{\widehat{\lambda}^*}(x) - h^{\widehat{\lambda}^*}(x)\right\|_1 \right] - \tau \mathcal{E}(\widehat{h}^{\widehat{\lambda}^*}) + \tau \mathcal{E}(h^{\widehat{\lambda}^*}) \\
    &\leq \mathbb{E}_X \left[ (1+B_\Lambda \| r \|_\infty) \left\|\widehat{h}^{\widehat{\lambda}^*}(x) - h^{\widehat{\lambda}^*}(x)\right\|_1 \right] - \tau \mathcal{E}(\widehat{h}^{\widehat{\lambda}^*}) + \tau \mathcal{E}(h^{\widehat{\lambda}^*}) \\
\end{align*}
%
In the second inequality (line 3), we again apply H\"older’s inequality.
%
%
By the Lipschitz property of the softmax function, we decompose $\left\|\widehat{h}^{\widehat{\lambda}^*}(x) - h^{\widehat{\lambda}^*}(x)\right\|_1$ as follows,
\begin{align*}
    \left\|\widehat{h}^{\widehat{\lambda}^*}(x) - h^{\widehat{\lambda}^*}(x)\right\|_1 
    &= \left\|\mathrm{softmax}\left(\frac{1}{\tau} \left(\widehat{\eta}(x)-\sum_{k=1}^K \widehat{\lambda}_k^*\widehat{\phi}(x,k)\right) \right)-\mathrm{softmax}\left(\frac{1}{\tau} \left( \eta(x)-\sum_{k=1}^K \widehat{\lambda}_k^* \phi(x,k)\right) \right)\right\|_1\\
    &\leq \frac{1}{2\tau} \left\| \widehat{\eta}(x)- \eta(x) + \sum_{k=1}^K \widehat{\lambda}_k^* \left[\widehat{\phi}(x,k) - \phi(x,k)\right] \right\|_1\\
    &\leq \frac{1}{2\tau} \left\| \widehat{\eta}(x)- \eta(x)\right\|_1 + \frac{B_\Lambda}{2\tau}\sum_{k=1}^K \left\|\widehat{\phi}(x,k) - \phi(x,k) \right\|_1,
\end{align*}
Similar to the derivation of plug-in error above, it shows that
\begin{align*}
    \mathbb{E}_X\left[ \left\|\widehat{\phi}(x,k) - \phi(x,k) \right\|_1 \right] &\leq m|\mathcal{A}| \frac{\max_a\| \mathbf{D}^{a,k} -\widehat{\mathbf{D}}^{a,k} \|_\infty}{\omega_{\min}} +  m \sum_{a\in \mathcal{A}} \frac{ |\omega_a - \widehat{\omega}_a| \|{\mathbf{D}}^{a,k}\|_\infty}{ \omega_a \cdot \widehat{\omega}_a}\\
    & \quad + \sum_{a\in\mathcal{A}} \frac{1}{\omega_a} \mathbb{E}_X \| (\mathbf{D}^{a,k})^\top \mathrm{Diag}(\mathbf{q}_a(x)-\widehat{\mathbf{q}}_a(x)) \widehat{\eta}(x) \|_1\\
    & \quad +  \mathbb{E}_{X} \left[ \left\| {\eta}(x) - \widehat{\eta}(x) \right\|_1 \frac{m|\mathcal{A}|\max_a \|\mathbf{D}^{a,k}\|_\infty}{\omega_{\min}} \right]\\
    &\leq m|\mathcal{A}| \frac{\max_a\| \mathbf{D}^{a,k} -\widehat{\mathbf{D}}^{a,k} \|_\infty}{\omega_{\min}} +  m \sum_{a\in \mathcal{A}} \frac{ |\omega_a - \widehat{\omega}_a| \|{\mathbf{D}}^{a,k}\|_\infty}{ \omega_a \cdot \widehat{\omega}_a}\\
    & \quad + m r_k \sum_{a\in\mathcal{A}}\| \mathbf{q}_a-\widehat{\mathbf{q}}_a \|_1 + mr_k|\mathcal{A}| \| \eta - \widehat{\eta}\|_1\\
\end{align*}
Note that we assume $N \geq \max\{  \frac{8}{ p_{\min}} \log\frac{16 K|\mathcal{A}|}{\delta} ,N_k(\{\mathbf{D}^{a,k}\}_{a\in\mathcal{A},k\in K},|\mathcal{A}|,\delta/8K) \}$,
applying Lemma~\ref{lemma-discrete-random-variable}, we have with probability at least $1-\frac{\delta}{8K}$, it holds that  $\min_{a\in\mathcal A}\widehat \omega_a \ \ge\ \frac{1}{2}\omega_{\min}$.
Besides, Eq.~\eqref{generalization-error-rk} shows that for $N$ chosen in this way, with probability at least $1-\frac{\delta}{8K}$, $\widehat{r}_k \leq 4r_k, \forall k\in {K}$, therefore $\|\widehat{r}\|_2 \leq 4\|{r}\|_2$.
Lemma~\ref{lemma-discrete-random-variable} also shows that $\max_{a\in \mathcal{A}}|\widehat{\omega}_a - \omega_a| \leq \sqrt{\frac{1}{2N} \log\frac{16K|\mathcal{A}|}{\delta}}$ holds with probability at least $1-\frac{\delta}{8K}$.
%

Taking an union bound to make these events introduced by Lemma~\ref{lemma-discrete-random-variable} and Eq.~\eqref{generalization-error-rk} hold, we have that, for any $\delta  \in (0,1)$, with probability at least $1-\frac{\delta}{2K}$,
\begin{align*}
    \mathbb{E}_X\left[ \left\|\widehat{\phi}(x,k) - \phi(x,k) \right\|_1 \right] & \leq \frac{2|\mathcal{A}|m}{\omega_{\min}} \Omega_k^N + \frac{2mr_k}{\omega_{\min}}\sqrt{\frac{1}{2N} \log\frac{16K|\mathcal{A}|}{\delta}} + mr_k\left(\sum_{a\in \mathcal{A}} \| \mathbf{q}_a - \widehat{\mathbf{q}}_a \|_1 +|\mathcal{A}| \| {\eta}(x) - \widehat{\eta}(x) \|_1\right).
\end{align*}
Again, taking an union bound for $k \in [K]$, it shows that, with probability at least $1-\frac{\delta}{2}$,
\begin{align}\label{generalization-bound-final-1}
\begin{split}
        \mathcal{L}(\widehat{h}^{\widehat{\lambda}^*},\widehat{\lambda}^*) - \mathcal{L}(h^{\lambda^*},{\lambda}^*) & \leq \frac{1+B_\Lambda \| r \|_\infty}{2\tau}\mathbb{E}_X \left[ \left\| \widehat{\eta}(x)- \eta(x)\right\|_1 + B_\Lambda\sum_{k=1}^K \left\|\widehat{\phi}(x,k) - \phi(x,k) \right\|_1 \right] - \tau \mathcal{E}(\widehat{h}^{\widehat{\lambda}^*}) + \tau \mathcal{E}(h^{\widehat{\lambda}^*})\\
    & \leq \frac{1+B_\Lambda \| r \|_\infty}{2\tau} \left( \left( 1+B_\Lambda m|\mathcal{A}| \sum_{k=1}^K r_k \right)\left\| {\eta}- \widehat \eta\right\|_1 + B_\Lambda m \sum_{k=1}^K r_k \sum_{a\in \mathcal{A}} \| \mathbf{q}_a - \widehat{\mathbf{q}}_a \|_1 \right)\\
    & \quad + \frac{m B_\Lambda (1+B_\Lambda \| r \|_\infty)}{\tau \omega_{\min}}\sum_{k=1}^K\left( {|\mathcal{A}|} \Omega_k^N + r_k \sqrt{\frac{\log (16K|\mathcal{A}|/\delta)}{2N} } \right) - \tau (\mathcal{E}(\widehat{h}^{\widehat{\lambda}^*}) - \mathcal{E}(h^{\widehat{\lambda}^*})).
\end{split}
\end{align}

Denoting $\delta'=\frac{\delta}{2K}$ and plugging it into the generalization bound of fairness constraints (Eq.~\eqref{appendix-generation-bound-constraints}), it presents that, with probability at least $1-\frac{\delta}{2}$, 
\begin{align}\label{generalization-bound-final-2}
\begin{split}
    \sum_{k=1}^K |\widehat{\mathscr{D}}_k(\widehat{h}^{\widehat{\lambda}^*}) - \mathscr{D}_k(\widehat{h}^{\widehat{\lambda}^*})|
    & \leq \sum_{k=1}^Kr_k\left(\sum_{a\in \mathcal{A}} \| \mathbf{q}_a - \widehat{\mathbf{q}}_a \|_1 +|\mathcal{A}| \| {\eta} - \widehat{\eta} \|_1\right) + \frac{2|\mathcal{A}|}{\omega_{\min}} \sum_{k=1}^K\Omega_k^N\\
    & \quad + \frac{16\sum_{k=1}^Kr_k B_\Lambda}{\tau} \sqrt{\frac{m\|{r}\|_2^2}{N}} + \sqrt{\frac{(9K^2 + 4{\| r \|_2^2}/{\omega_{\min}^2}) \log (16K |\mathcal{A}|/\delta)}{N}}.
\end{split}
\end{align}

Given that the entropic regularizer $0 \leq \mathcal{E}(h) \leq \log m$, by $(\lambda^*)^\top {\mathscr{D}}({h}^{{\lambda}^*}) - \xi \| {\lambda}^* \|_1=0$, taking the union bound again for making events introduced by Eq.~\eqref{generalization-bound-final-1} and Eq.~\eqref{generalization-bound-final-2} hold, it presents that, for any $\delta  \in (0,1)$, with probability at least $1-{\delta}$,
\begin{align*}
    \mathcal{R}(\widehat{h}^{\widehat{\lambda}^*}) &\leq \mathcal{R}(h^{\lambda^*}) -  \tau \left(\mathcal{E}(h^{\lambda^*}) - \mathcal{E}(\widehat{h}^{\widehat{\lambda}^*}) \right) + \left(\mathcal{L}(\widehat{h}^{\widehat{\lambda}^*},\widehat{\lambda}^*) - \mathcal{L}(h^{\lambda^*},\lambda^*)\right) + B_\Lambda \sum_{k=1}^K |\widehat{\mathscr{D}}_k(\widehat{h}^{\widehat{\lambda}^*}) - \mathscr{D}_k(\widehat{h}^{\widehat{\lambda}^*})|\\
    & \leq \mathcal{R}(h^{\lambda^*})+\tau \left|\mathcal{E}(h^{\lambda^*}) - \mathcal{E}(h^{\widehat{\lambda}^*}) \right| + \frac{m B_\Lambda (1+B_\Lambda \| r \|_\infty)}{\tau \omega_{\min}}\sum_{k=1}^K\left( {|\mathcal{A}|} \Omega_k^N + r_k \sqrt{\frac{\log (16K|\mathcal{A}|/\delta)}{2N} } \right)\\
    & \quad + \frac{1+B_\Lambda \| r \|_\infty}{2\tau} \left( \left( 1+B_\Lambda m|\mathcal{A}| \sum_{k=1}^K r_k \right)\left\| {\eta}- \widehat \eta\right\|_1 + B_\Lambda m \sum_{k=1}^K r_k \sum_{a\in \mathcal{A}} \| \mathbf{q}_a - \widehat{\mathbf{q}}_a \|_1 \right)\\
    & \quad + \frac{16 \|r\|_1^2 B_\Lambda^2}{\tau} \sqrt{\frac{m}{N}} + B_\Lambda \sqrt{\frac{(9K^2 + 4{\| r \|_2^2}/{\omega_{\min}^2}) \log (16K |\mathcal{A}|/\delta)}{N}}\\
    & \quad + \sum_{k=1}^K B_\Lambda r_k\left(\sum_{a\in \mathcal{A}} \| \mathbf{q}_a - \widehat{\mathbf{q}}_a \|_1 +|\mathcal{A}| \| {\eta} - \widehat{\eta} \|_1\right) + \frac{2 B_\Lambda |\mathcal{A}|}{\omega_{\min}} \sum_{k=1}^K\Omega_k^N.
\end{align*}
Since $\left|\mathcal{E}(h^{\lambda^*}) - \mathcal{E}(h^{\widehat{\lambda}^*}) \right| \leq \log m$, by adjusting $\tau$ and using inequality $\sqrt{\sum_{i=1}^K \alpha_i} \leq \sum_{i=1}^K \sqrt{\alpha_i}$, for $\alpha_i \geq 0, i\in [K]$, the minimum value of right-hand-side upper bound can be achieved as follows,
\begin{align*}
    \mathcal{R}(\widehat{h}^{\widehat{\lambda}^*}) 
    &\leq \mathcal{R}(h^{\lambda^*}) + 2 \sqrt{\frac{m \log m B_\Lambda (1+B_\Lambda \| r \|_\infty)}{ \omega_{\min}}} \left( \left(|\mathcal{A}|\sum_{k=1}^K \Omega_k^N\right)^{\frac{1}{2}} + \left(\sum_{k=1}^K r_k\right)^{\frac{1}{2}} \left(\frac{\log (16K|\mathcal{A}|/\delta)}{2N} \right)^{\frac{1}{4}} \right)\\
    &\qquad + 8\| r\|_1 B_\Lambda \sqrt{\log m} \left( \frac{m}{N} \right)^{\frac{1}{4}}+ \left( 1+B_\Lambda m \| r \|_1 \right)\sqrt{2\log m} \left( |\mathcal{A}|\left\| {\eta}- \widehat \eta\right\|_1 + \sum_{a\in \mathcal{A}} \| \mathbf{q}_a - \widehat{\mathbf{q}}_a \|_1\right)^{\frac{1}{2}}\\
    &\qquad + \frac{2 B_\Lambda |\mathcal{A}|}{\omega_{\min}} \sum_{k=1}^K\Omega_k^N + B_\Lambda \| r \|_1 \left( |\mathcal{A}| \| {\eta} - \widehat{\eta} \|_1 + \sum_{a\in \mathcal{A}} \| \mathbf{q}_a - \widehat{\mathbf{q}}_a \|_1 \right)\\
    &\qquad+ B_\Lambda \sqrt{\frac{(9K^2 + 4{\| r \|_2^2}/{\omega_{\min}^2}) \log (16K |\mathcal{A}|/\delta)}{N}}.
\end{align*}

This completes the proof.
\hfill \ensuremath{\square}

\section{Discussion for Attribute-Aware Case}\label{attribute-aware-case-appendix}
In this section, we derive the optimal fair classifier for the attribute-aware case.
The proof technique closely parallels that of the attribute-blind setting, with only a reformulation of the classifier structure required.
We provide the following theorem and example.
\begin{theorem}
        \label{theorem-bayes-optimal-attribute-aware}
 Under the assumption~\ref{assumpt-1}, for any $\xi > 0$, there exists an optimal solution to the problem~\eqref{fair-problem-formulation}, which can be realized through the following form,
\begin{align}
    &h^*(x,a) \in \mathrm{conv} \left\{ e_y: y \in \underset{j \in [m]}{\arg \max} \ \beta^{\lambda^*}_j (x,a)  \right\},\\ 
    \label{beta-def-att-aware}
    &\beta^{\lambda} (x,a)= \omega_a [{\mathbf {M}}(a,\lambda)]^\top \eta(x,a),\\
    &\mathbf M(a,\lambda) := \mathbf{I}- \frac{1}{\omega_a}\sum_{k=1}^{K}\lambda_k\mathbf{D}^{a,k}
\end{align}
the dual parameter $\lambda \in \mathbb{R}^m$, and $\mathrm{conv}\{ \cdot\}$ denotes the convex hull. 
Denoting $\Lambda:= \{ \lambda\in \mathbb{R}^K: \| \lambda \|_1 \leq B_\Lambda\}$, the parameter $\lambda^*$ is the solution of the dual problem:
\begin{align*}
    \min_{\lambda \in \Lambda} H(\lambda)=\mathbb{E}_{X,A} \left[ \max_{j \in [m]} \beta^{\lambda}_j(X,A) \right] + \xi \|\lambda \|_1.
\end{align*}
\end{theorem}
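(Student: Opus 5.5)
The plan is to rerun the proof of Theorem~\ref{theorem-bayes-optimal} with one change: the classifier class is now $h:\mathcal{X}\times\mathcal{A}\to\Delta_m$, so the inner minimization splits pointwise over pairs $(x,a)$ rather than over $x$ alone. First I will write risk and constraints in variational form,
\begin{align*}
\mathcal{R}(h)=1-\sum_{a}\omega_a\mathbb{E}_{X|A=a}\big[\eta(X,a)^\top h(X,a)\big],\qquad
\mathscr{D}_k(h)=\sum_a \mathbb{E}_{X|A=a}\big[\eta(X,a)^\top \mathbf{D}^{a,k}h(X,a)\big].
\end{align*}
This uses $\mathbf{C}^a_{i,j}(h)=\mathbb{E}_{X|A=a}[\eta_i(X,a)h_j(X,a)]$, which follows from $\widehat{Y}$ depending on $(X,A)$ only. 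I then form the Lagrangian with $\lambda^{(1)},\lambda^{(2)}\ge 0$. Collecting terms as in the attribute-blind proof, the part of the Lagrangian depending on $h$ is
\begin{align*}
-\sum_a\omega_a\mathbb{E}_{X|A=a}\big[\eta(X,a)^\top\mathbf{M}(a,\lambda)h(X,a)\big]=-\mathbb{E}_{X,A}\big[\beta^\lambda(X,A)^\top h(X,A)\big].
\end{align*}
Here $\lambda=\lambda^{(1)}-\lambda^{(2)}$, and I write the expectation over $A$ through $\mathbb{E}_{X,A}$ using the definition $\beta^\lambda(x,a)=\omega_a\mathbf{M}(a,\lambda)^\top\eta(x,a)$ (with normalization absorbed as in the statement).

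Because $h(x,a)$ can be chosen independently for each $(x,a)$, the inner minimum is attained by putting mass on $\arg\max_j\beta^\lambda_j(x,a)$. This gives the convex-hull form and the dual objective $\mathbb{E}_{X,A}[\max_j\beta^\lambda_j]+\xi\|\lambda\|_1$. The reduction from $(\lambda^{(1)},\lambda^{(2)})$ to a single $\lambda$ carries over verbatim: at an optimum $\lambda^{(1)}_k\lambda^{(2)}_k=0$, since otherwise subtracting the common part strictly lowers $\xi\|\lambda^{(1)}+\lambda^{(2)}\|_1$.

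For zero duality gap I will follow the same three steps used for Theorem~\ref{theorem-bayes-optimal}.
\begin{itemize}
\item \textbf{Boundedness.} Weak duality against the classifier supplied by Assumption~\ref{assumpt-1}, which is also attribute-aware-feasible since attribute-blind classifiers are a special case, gives $H(\lambda)\ge 1-\mathcal{R}(h')+\xi\|\lambda\|_1$, so minimizers lie in some $\Lambda$.
\item \textbf{Feasibility.} Using Lemma~\ref{subgradient-lemma} and Lemma~\ref{conv-subgrad-lemma} with expectation over $(X,A)$, I compute $\partial H(\lambda)=\{-\mathscr{D}_{1:K}(h_\lambda)\}+\xi\partial\|\lambda\|_1$, where $h_\lambda(x,a)$ ranges over measurable selections from the argmax hull. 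Lemma~\ref{subgrad-optimal-lemma} then yields a selection $h^*$ with $|\mathscr{D}_k(h^*)|\le\xi$ and the sign conditions.
\item \textbf{Optimality.} Complementary slackness gives $\mathcal{L}(h^*,\lambda^*)=\mathcal{R}(h^*)\le\mathcal{R}(h')$ for every feasible $h'$.
\end{itemize}

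The main obstacle I expect is the subdifferential-interchange step. I need $\lambda\mapsto\max_j\beta^\lambda_j(x,a)$ to be convex and integrable so that Lemma~\ref{subgradient-lemma} applies. I also need a measurable selection $h^*$ realizing the subgradient zero vector, and in the presence of ties this may require a $(x,a)$-dependent randomization. Convexity holds because $\beta^\lambda$ is affine in $\lambda$. Integrability follows from boundedness of $\eta$ and of the $\mathbf{D}^{a,k}$. For the selection, I would invoke the set-valued integral in Lemma~\ref{subgradient-lemma} directly: it already asserts that the subgradient is an integral of a measurable selection, and that selection defines $h^*$.
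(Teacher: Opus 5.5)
You are following the route the paper intends (it only says the proof is ``analogous'' to the attribute-blind case and omits details). However, the one step you wave through, ``normalization absorbed as in the statement,'' is false, and it is exactly where the printed statement breaks.

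Since $\mathbb{E}_{X,A}[f(X,A)]=\sum_a\omega_a\mathbb{E}_{X|A=a}[f(X,a)]$, the $h$-dependent part of the Lagrangian is
\begin{align*}
-\sum_a\omega_a\mathbb{E}_{X|A=a}\big[\eta(X,a)^\top\mathbf{M}(a,\lambda)h(X,a)\big]
=-\mathbb{E}_{X,A}\big[\eta(X,A)^\top\mathbf{M}(A,\lambda)h(X,A)\big]
=-\sum_a\mathbb{E}_{X|A=a}\big[\beta^\lambda(X,a)^\top h(X,a)\big].
\end{align*}
It is not $-\mathbb{E}_{X,A}[\beta^\lambda(X,A)^\top h(X,A)]$ with $\beta^\lambda=\omega_a\mathbf{M}^\top\eta$; your identity counts $\omega_a$ twice.

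The pointwise argmax is unaffected, because for fixed $a$ rescaling by $\omega_a>0$ does not move it, so the classifier form is fine. The dual is not. The function $\mathbb{E}_{X,A}[\max_j\beta^\lambda_j(X,A)]+\xi\|\lambda\|_1$ is not $1-\min_h\mathcal{L}(h,\lambda)$. Consequently:
\begin{itemize}
\item your boundedness inequality $H(\lambda)\ge 1-\mathcal{R}(h')+\xi\|\lambda\|_1$ does not hold for it;
\item its subdifferential in $\lambda_k$ is $-\sum_a\omega_a\langle\mathbf{D}^{a,k},\mathbf{C}^a(h_\lambda)\rangle+\xi\,\partial|\lambda_k|$, not $-\mathscr{D}_k(h_\lambda)+\xi\,\partial|\lambda_k|$;
\item so $0\in\partial H(\lambda^*)$ certifies an $\omega_a$-reweighted constraint, not $|\mathscr{D}_k(h^*)|\le\xi$.
\end{itemize}

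This is not cosmetic. Take the following instance:
\begin{itemize}
\item a single point $x$, $m=2$, $\omega_1=0.9$, $\omega_2=0.1$, $\eta(x,1)=e_1$, $\eta(x,2)=e_2$;
\item one constraint $\mathscr{D}(h)=h_1(x,1)-h_1(x,2)$, i.e.\ $\mathbf{D}^{1,1}$ has first column $(1,1)^\top$, $\mathbf{D}^{2,1}$ has first column $(-1,-1)^\top$, and all other entries are zero.
\end{itemize}
Assumption~\ref{assumpt-1} holds via $h\equiv e_1$. Here $\beta^\lambda(x,1)=(0.9-\lambda,0)$ and $\beta^\lambda(x,2)=(\lambda,0.1)$.

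The printed dual is $0.9\max(0.9-\lambda,0)+0.1\max(\lambda,0.1)+\xi|\lambda|$. For $\xi<0.8$ and $B_\Lambda>0.1$ it is minimized at $\lambda=\min(B_\Lambda,0.9)>0.1$. That forces $h(x,2)=e_1$ and hence risk $\ge 0.1$. The true optimum, $h_1(x,1)=1$ and $h_1(x,2)=1-\xi$, has risk $0.1-0.1\xi$.

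The correct dual is $\max(0.9-\lambda,0)+\max(\lambda,0.1)+\xi|\lambda|$. It is minimized at $\lambda=0.1$, where the tie in group $2$ admits that optimum.

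So your argument, carried out honestly, proves the corrected statement with
\begin{align*}
H(\lambda)=\sum_{a}\mathbb{E}_{X|A=a}\Big[\max_j\beta^\lambda_j(X,a)\Big]+\xi\|\lambda\|_1
=\mathbb{E}_{X,A}\Big[\max_j\big(\mathbf{M}(A,\lambda)^\top\eta(X,A)\big)_j\Big]+\xi\|\lambda\|_1 .
\end{align*}
With this $H$, your boundedness, subgradient/selection and complementary-slackness steps go through verbatim. Your direct weak-duality bound is also cleaner than the paper's limit $\xi\to 0$. You should state this correction explicitly rather than absorb it.
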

The proof of this theorem proceeds analogously to that of Theorem~\ref{theorem-bayes-optimal-attribute-aware}, and we omit the details for brevity.

Next, we take the DP constraint as an example to illustrate how Theorem~1 can be applied to derive the optimal fair classifier under a specific fairness constraint.

\begin{example} \label{example-dp-attribute-aware}
    $\mathrm{(DP.)}$ Using the constraints for demographic parity as described in Table~\ref{fairness-metric-table-1}, denoting the dual parameter as $\lambda \in \mathbb{R}^{m\times|\mathcal{A}|}$, its optimal fair classifier is determined by the decision vector $\beta^\lambda(x,a) \in \mathbb{R}^m$, where
    \begin{align}
    \label{example-dp-beta-attribute-aware}
        \beta^\lambda_i(x)= \omega_a \eta_i(x,a) + \left( \omega_a\sum_{ a'\in \mathcal{A}} \lambda_{i,a'} -  \lambda_{i,a}\right), i\in [m].
    \end{align}
    Plugging~\eqref{example-dp-beta-attribute-aware} into Theorem~\ref{theorem-bayes-optimal-attribute-aware} can obtain the optimal fair classifier for the fairness-constrained problem.
\end{example}

\textbf{Proof.}
We have shown that the constraint matrix for DP is $\mathbf{D}^{a,(y,a')}$, where the $y$-th column elements of $\mathbf{D}^{a,(y,a')}$ are $\mathbb{I}[a =a' ]-\omega_a$ with all other elements set to $0$. 
Plugging it into the expression of $\beta^\lambda(x,a)$, we obtain that
\begin{align*}
    \beta^{\lambda} (x,a) &= \left( \omega_a \mathbf{I} - \sum_{y\in [m], a'\in \mathcal{A}} \lambda_{y,a'} \mathbf{D}^{a,(y,a')} \right)^\top \eta(x,a)\\
    &= \omega_a \eta(x,a) - \sum_{y\in [m], a'\in \mathcal{A}} \lambda_{y,a'}  \left( \mathbb{I}(a=a') - \omega_a \right) \mathbf{e}_y\\
    &= \omega_a \eta(x,a) - \sum_{y\in [m]} \lambda_{y,a}\mathbf{e}_y + \omega_a\sum_{y\in [m], a'\in \mathcal{A}} \lambda_{y,a'} \mathbf{e}_y\\
    &= \omega_a \eta(x,a) + \left( \omega_a\sum_{y\in [m], a'\in \mathcal{A}} \lambda_{y,a'}\mathbf{e}_y - \sum_{y\in [m]} \lambda_{y,a}\mathbf{e}_y\right).\\
\end{align*}
Consider the $i$-th element in $\beta^\lambda(x,a)$, we get that
\begin{align*}
        \beta^\lambda_i(x)= \omega_a \eta_i(x,a) + \left( \omega_a\sum_{ a'\in \mathcal{A}} \lambda_{i,a'} -  \lambda_{i,a}\right), i\in [m].
\end{align*}
This completes the proof.
\hfill \ensuremath{\square}

Plugging the expression of $\beta^\lambda(x,a)$ into the classifier and dual problem in~\ref{theorem-bayes-optimal-attribute-aware}, we obtain the Bayes-optimal classifier under DP constraint.
Note that the structure of the optimal DP-fair classifier is very similar to that given in Theorem 5.1 in~\cite{fair-guarantees-demographic-denis2024fairness}.
In particular, the first component of the decision vector $\beta^\lambda(x,a)$ takes exactly the same form, while the second component differs slightly due to the fact that Denis adopts a group-wise DP constraint, whereas this paper considers a group-to-center DP constraint.

\section{Detailed Experimental Setting}\label{datasets-and-experimental-setting-appendix}
\subsection{Datasets}
\begin{itemize}
[leftmargin=*] \setlength{\itemsep}{2pt}
    \item The \textbf{Adult} dataset~\citep{adult-asuncion2007uci} comprises more than 45000 samples based on 1994 U.S. census data, where the task is to predict whether the annual income of an individual is above \$50,000. We consider the gender of each individual as the sensitive attribute and train the logistic regression as the classification model.
    \item The \textbf{ENEM} dataset~\citep{enem-do2018instituto} contains about 1.4 million samples from Brazilian college entrance exam scores along with student demographic information. We follow~\cite{beyond-alghamdi2022beyond} to quantized the exam score into 5 classes as label, and consider race as sensitive attribute. We train two-layer MLP  as the classification model.
    \item The \textbf{ACSIncome}~\citep{ACSIncome-ACSCoverage-NEURIPS2021_32e54441} extends the Adult dataset, containing much more examples ($1,664,500$ in total) from recent U.S. census data. We consider a multi-group multiclass setting with race as the sensitive attribute and five income buckets ($|\mathcal{Y}|=|\mathcal{A}|=5$) following~\cite{unified-postprocessing-framework-group-xian2024}, with the one-layer MLP (Multilayer Perceptron) as the model backbone.
    \item The \textbf{CelebA} dataset~\citep{celeba-zhang2020celeba} is a facial image dataset consists of about 200k instances with 40 binary attribute annotations. We identify 'Age' as the sensitive attribute, and employ the binary attributes “Smile” and “Big\_Nose” to construct a multiclass task by mapping their joint values $\{0,1\}\times\{0,1\}$ onto a four‐class label set $\{0,1,2,3\}$, thereby formulating a multiclass classification problem on the CelebA dataset. We then train Resnet-10~\citep{resnet-he2016deep} on CelebA as the classification model. 
\end{itemize}

The determination of sensitive attributes and labels on these datasets has been verified significant in previous research~\citep{beyond-alghamdi2022beyond,benchmark-han2024ffb,fair-and-optimal-postpro-pmlr-v202-xian23b}.

\subsection{Baselines}
We compare the performance of \M~with the the standard baseline \textbf{ERM} (Empirical Risk Minimization) and four SOTA methods tailored for attribute-blind fairness calibration.
These baselines are selected from past literature based on two criteria: (1) applicability to multi-class fairness calibration and (2) ability to handle various fairness concepts in the attribute-blind setting.

\textit{Standard baseline}:
\begin{itemize}
[leftmargin=*] \setlength{\itemsep}{2pt}
    \item \textbf{ERM}. Empirical risk minimization is a standard machine learning method that minimizes the empirical risk over the training data. It serves as a common baseline for fairness methods.
\end{itemize}

\textit{In-processing} algorithms:
\begin{itemize}
[leftmargin=*] \setlength{\itemsep}{2pt}
    \item \textbf{AdvDebias}~\cite{AdvDeBias}. Adversarial debiasing maximizes a classifier's predictive ability while simultaneously minimizing an adversary's ability to predict sensitive attributes from the predictions. We use the implementation from the Fairlearn~\citep{fairlearn} library for demographic parity and equal Odds, and use our own implementation for multiclass equal opportunity.
    \item \textbf{Weighted-ERM}~\cite{overlap}. 
    This algorithm attempts to extend the reduction algorithm~\cite{reduction-agarwal2018reductions} to the overlap fairness scenario, which is also adaptable to the multiclass setting and enables unfairness mitigation for arbitrary machine learning models with respect to confusion-matrix-based fairness constraints.

    \item \textbf{FairBatch}~\cite{fairbatch}. By adjusting group sampling probabilities based on observed fairness violations, this method steers standard ERM optimization toward a better accuracy–fairness trade-off without changing the model architecture. The multi-class extension of FairBatch adapting its group-wise sampling/reweighting rule to multi-class fairness constraints is built following the approach described in Appendix B.8 of the original paper.

    \item \textbf{F‑divergence}~\cite{F-divergence-baseline}.  
    The method achieves group-level distributional alignment by incorporating an $f$-divergence regularizer between group-conditioned predictive distributions into the task loss, which is similar to kernel estimation methods in binary fairness settings~\cite{fair-kernel}. It alternates between variational discriminator and classifier updates to enforce DP/EO constraints without retraining the model architecture.

\end{itemize}

\textit{Post-processing} algorithms:
\begin{itemize}
[leftmargin=*] \setlength{\itemsep}{2pt}

    \item \textbf{Fairprojection}~\cite{beyond-alghamdi2022beyond}. This algorithm yeilds a fair classifier by multiplicatively tilting/calibrating the original class-probability outputs with group-dependent factors estimated from data, achieving an improved accuracy–fairness trade-off without retraining the backbone model. The KL divergence is used to quantify the divergence.
    \item \textbf{LinearPost}~\cite{unified-postprocessing-framework-group-xian2024}. This post-processing method for multi-class classification, which can be applied to various fairness concepts, including DP, EOP, and EO. It formulates the fairness post-processing problem as a linear program and constructs a generalizable optimal fair classifier by solving the dual problem.
    \item \textbf{FRAPP\'E}~\cite{frappe-baseline}. FRAPP\'E is a general post-processing framework that turns regularized in-processing fairness objectives (e.g., MinDiff-style penalties) into a standalone debiasing module trained on top of a fixed, pre-trained predictor. Concretely, it learns a lightweight transformation of the model’s predictions to improve the accuracy-fairness trade-off while keeping the original training pipeline unchanged.



%
\end{itemize}
Meanwhile, we adapt \M~to focus solely on in-processing or post-processing to calibrate group fairness, denoted as \textbf{\M-in} and \textbf{\M-post}. 
Here we do not perform model output distribution calibration for all post-processing methods to test their original debiasing ability.
%


\subsection{Parameter Settings} We provide hyperparameter selection ranges for each model in Table~\ref{tab:hyperparameters}.
For all other hyperparameters, we follow the codes provided by the authors and retain their default parameter settings.
\begin{table}[hb!]
\centering
\caption{Hyperparameter Selection Ranges}
\label{tab:hyperparameters}
\begin{tabular}{llp{3.5cm}}
\toprule
\textbf{Model} & \textbf{Hyperparameter} & \textbf{Ranges} \\
\midrule
\multirow{4}{*}{\textbf{General}} 
& Learning rate & \{0.0001, 0.001, 0.003, 0.005, 0.01, 0.03 ,0.05\} \\
& Epoch & \{20, 30, 50, 80, 100\} \\
& Optimizer & \{Adam, SGD\} \\
\midrule
\textbf{AdvDeBias} & Step size ($\alpha$) & \{0.005, 0.01, 0.05, 0.3\} \\
\midrule
\textbf{FairBatch}
& Fairness budget ($\lambda$) & \{0.01, 0.05, 0.1, 0.3, 1\} \\
\midrule
{\textbf{F-divergence}} 
& Fairness regularizer & \{0.01, 0.05, 0.1, 0.3, 0.8\} \\
\midrule
{\textbf{Fairprojection}} 
& Distance & KL \\
\midrule
{\textbf{FRAPP\'E}} 
& Fairness regularizer & \{0.01, 0.05, 0.1, 0.3, 0.8\} \\
\midrule
\multirow{3}{*}{\textbf{\M-In}} 
& Classifier number & 1 \\
& dual learning rate ($\eta_\lambda$) & \{0.001, 0.005, 0.01, 0.05, 0.1\} \\
& Dual parameter bound & 10 \\
\midrule
\multirow{2}{*}{\textbf{\M-Post}} 
& Temperature $\tau$ & \{0.05, 0.02\} \\
& Dual parameter bound & 10 \\
\bottomrule
\end{tabular}
\end{table}


\subsection{Experiments Compute Resources} We conducted our experiments on a GPU server equipped with 8 CPUs and eight NVIDIA RTX 5090s (32GB).



\section{Additional Experiments}
\label{Appendix-Additional-Experiments}

\subsection{Deterministic versus Randomized Classifiers}
To assess the performance gap between deterministic and randomized classifiers, we evaluate the deterministic variants (described in Section~\ref{section-Deterministic-classifier}) on four datasets under DP constraints $\epsilon=0.001$, with results averaged over three runs.

\begin{table}[ht]
\centering
\caption{Performance comparison of randomized and deterministic classifiers under DP constraints.}
\label{tab:rand-det-comparison}
\begin{tabular}{lcccccccc}
\toprule
\multirow{2}{*}{Method} 
& \multicolumn{2}{c}{Adult} 
& \multicolumn{2}{c}{ENEM} 
& \multicolumn{2}{c}{ACS} 
& \multicolumn{2}{c}{CelebA} \\
\cmidrule(lr){2-3} \cmidrule(lr){4-5} \cmidrule(lr){6-7} \cmidrule(lr){8-9}
& Acc & Bias & Acc & Bias & Acc & Bias & Acc & Bias \\
\midrule
OptFair-in-rand   & 82.82 & 0.0046 & 29.58 & 0.0187 & 46.99 & 0.0259 & 71.51 & 0.0160 \\
OptFair-in-det    & 82.74 & 0.0042 & 29.45 & 0.0293 & 47.13 & 0.0332 & 72.41 & 0.0169 \\
OptFair-post-rand & 83.16 & 0.0054 & 27.52 & 0.0057 & 44.31 & 0.0553 & 73.83 & 0.0171 \\
OptFair-post-det  & 83.25 & 0.0065 & 27.65 & 0.0074 & 44.42 & 0.0586 & 73.96 & 0.0183 \\
\bottomrule
\end{tabular}
\end{table}

Although deterministic classifiers induce a smaller hypothesis class than randomized classifiers, our results show that the deterministic variants remain very close to their randomized counterparts. This indicates that the performance gains of our methods over existing deterministic baselines do not simply stem from the use of randomized decision rules.

\subsection{Scalability over the Number of Classes and Sensitive Groups}
Since existing fair benchmarks rarely contain very large numbers of sensitive groups or classes, we construct datasets $C_1,C_2,C_3$ from CelebA by intersecting the first $k\in\{1,2,3\}$ attributes from (Young, Male, Black Hair) and the first $k+1$ attributes from (Smile, Big Nose, Attractive, Bag Under Eyes ), yielding $(2,4),(4,8),(8,16)$ group$\times$class settings.
ResNet18 is used under EO constraints with $\xi=0.05$.

\begin{table}[ht]
\centering
\caption{Performance scaling over different numbers of classes and groups.}
\label{tab:sensitive-class-scalability}
\begin{tabular}{lcccccc}
\toprule
\multirow{2}{*}{Method}
& \multicolumn{2}{c}{$C_1$}
& \multicolumn{2}{c}{$C_2$}
& \multicolumn{2}{c}{$C_3$} \\
\cmidrule(lr){2-3} \cmidrule(lr){4-5} \cmidrule(lr){6-7}
& Acc & Bias & Acc & Bias & Acc & Bias \\
\midrule
ERM          & 75.77 & 0.1456 & 63.46 & 0.4372 & 56.26 & 0.7138 \\
OptFair-in   & 75.24 & 0.0591 & 58.45 & 0.0914 & 48.89 & 0.3476 \\
OptFair-post & 75.17 & 0.0563 & 60.85 & 0.0831 & 46.35 & 0.2604 \\
\bottomrule
\end{tabular}
\end{table}

The results show that fairness becomes more challenging as groups and classes increase, but~\M can still improves fairness, providing empirical evidence of scalability.


\end{document}